\setlist[itemize]{align=parleft,left=10pt}
\newcommand\hl{\bgroup\markoverwith
    {\textcolor{yellow}{\rule[-.5ex]{.1pt}{2.5ex}}}\ULon}
\newcommand{\alglinenoNew}[1]{\newcounter{ALG@line@#1}}
\newcommand{\alglinenoPop}[1]{\setcounter{ALG@line}{\value{ALG@line@#1}}}
\newcommand{\alglinenoPush}[1]{\setcounter{ALG@line@#1}{\value{ALG@line}}}
\let\emph\textit
\newtheorem{theorem}{Theorem}
\newtheorem{lemma}{Lemma} 
\newtheorem{remark}{Remark}
\def\R{{\mathbb{R}}}
\DeclareMathOperator{\sign}{sign}
\DeclareMathOperator{\Tr}{Tr}
\DeclareMathOperator{\vecrize}{vec}
\DeclareMathOperator{\rank}{rank}
\DeclareMathOperator{\diag}{diag}
\newcommand{\argmin}{\operatorname*{arg\; min}}
\newcommand{\argmax}{\operatorname*{arg\; max}}
\newcommand{\st}{\operatorname*{subject\; to}}
\newcommand{\maximize}{\operatorname*{maximize}}
\newcommand{\minimize}{\operatorname*{minimize}}
\newcommand{\normnu}[1]{{\norm{#1}}_{\ast}}
\author{
	\vspace{5pt}
  \textbf{Site Bai}, 
  \textbf{Chuyang Ke},  
  \textbf{Jean Honorio} \\
  Department of Computer Science\\
  Purdue University\\
  West Lafayette, Indiana, USA \\
  \texttt{\{bai123, cke, jhonorio\}@purdue.edu}\\
}
\date{}
\title{\huge Dual Convexified Convolutional Neural Networks}
\begin{document}
\maketitle
\begin{abstract}
We propose the framework of dual convexified convolutional neural networks (DCCNNs). 
In this framework, we first introduce a primal learning problem motivated by convexified convolutional neural networks (CCNNs), and then construct the dual convex training program through careful analysis of the Karush-Kuhn-Tucker (KKT) conditions and Fenchel conjugates. Our approach reduces the computational overhead of constructing a large kernel matrix and more importantly, eliminates the ambiguity of factorizing the matrix.
Due to the low-rank structure in CCNNs and the related subdifferential of nuclear norms, there is no closed-form expression to recover the primal solution from the dual solution. To overcome this, we propose a highly novel weight recovery algorithm, which takes the dual solution and the kernel information as the input, and recovers the linear weight and the output of convolutional layer, instead of weight parameter. 
Furthermore, our recovery algorithm exploits the low-rank structure and imposes a small number of filters indirectly, which reduces the parameter size.
As a result, DCCNNs inherit all the statistical benefits of CCNNs, while enjoying a more formal and efficient workflow.
\end{abstract}

\allowdisplaybreaks

\section{Introduction}

In the past decade, convolutional neural networks (CNNs) have become the cornerstone of various deep learning architectures. The performance of CNNs are so impressive, that they have become the standard methods on many tasks, including biomedical imaging \cite{suzuki2017overview, yamashita2018convolutional}, robot perception \cite{han2016incremental,zhang2019multi}, signal processing \cite{hershey2017cnn, eren2019generic}, among others. 
Despite the great success in a broad range of applications, the training of CNNs is in general an NP-hard problem \cite{blum1992training}. While in practice gradient-based optimization methods are used in the training process most of the time, there is no theoretical guarantee of achieving global optimality or bounding the rate of convergence.

To tackle the limitations of nonconvex landscapes, \cite{zhang2017convexified} proposes a class of convex relaxations of CNNs, namely the \emph{convexified convolutional neural networks} (CCNNs). 
Employing a convex optimization formulation, CCNNs can be optimized efficiently and their statistical properties can be analyzed rigorously \cite{zhang2017convexified}.
In CCNNs, an activation function is induced by a kernel function $\mathcal{K}$. The kernel function produces a kernel matrix $K$, which needs to be factorized into a matrix $Q$ such that $K = QQ^\top$.
After that, the CCNN algorithm solves a nuclear norm constrained convex optimization problem, and computes a low rank approximation, which outputs the weights of the network.
The whole convexified approach allows for efficient and optimal training of the network, and from a theoretical point of view makes precise statistical analysis viable \cite{zhang2017convexified}.

However on the algorithmic front, the CCNN framework, while enjoying the aforementioned benefits, does come with drawbacks and its own limitations.
\begin{itemize} 
    \item The factorization of the kernel matrix can be quite heuristic. One can either use Cholesky decomposition and obtain a triangular factor matrix $Q$, or take $Q = K^{1/2}$, or use the decomposition $K = UDU^\top$ and obtain $Q = UD^{1/2}$, to name a few. In the latter case, one can also use $Q = UD^{1/2}V^\top$ for any orthonormal $V$, meaning that there exists infinitely many possible $Q$'s! There is no guarantee that every choice of $Q$ performs equally well in the learning task, and thus the factorization problem itself can be tricky. Furthermore, the idea of factorizing the kernel matrix is not suitable from a statistical perspective, given that for CCNNs, kernels with infinitely dimensional basis functions are used (for example, Gaussian RBF kernel). 
    \item Solving the exact kernel factorization problem itself can be computationally expensive. Looking at Algorithm 1 in the CCNN framework \cite{zhang2017convexified}, the dimension of the kernel matrix is equal to the number of samples $n$ times the number of convolution operations (patches) $p$. From a space complexity perspective, the factorization step alone occupies memory in the order of $O(n^2 p^2)$. This is hardly feasible: for a small network training on $1000$ $28 \times 28$ images with stride 1, i.e. $784$ convolution operations, with $64$ bit double precision, the kernel matrix takes more than $4$ terabytes of memory! Similarly, the computational complexity of many factorization algorithms (for example, Cholesky) can take $O(n^3 p^3)$, which can be catastrophic in practice. 
    \item The CCNN algorithm heuristically cut the first $r$ columns of the convolutional weight recovered for the final low rank approximation step, with the number of filters $r$ as a hyperparameter. Setting the hyperparameter too large increases the parameter size with unnecessary information, and setting it too small voids the optimality guarantee. And such impact may accumulate as the number of layers increases. \vspace{-5pt}
\end{itemize}

In this paper, we are proposing a \emph{dual convexified convolutional neural network} (DCCNN) framework for learning CCNNs. We first carefully construct the DCCNN learning program by analyzing nuclear norms and Fenchel conjugates. We then solve the convex training problem in the derived dual form. Unfortunately, due to the nature of the sub-differential of the nuclear norm, there is no closed-form expression for us to recover the primal solution from the dual solution. To overcome this, we design a novel weight recovering algorithm that leverages the KKT optimality conditions and the sub-differential of nuclear norm. Without getting the primal solution, our algorithm directly recovers the linear weight and the convolution output using the dual solution and a kernel generating matrix, which applies to kernels with infinitely dimensional basis. 

Our DCCNN framework enjoys many benefits on top of the CCNN approach.
First, compared to the CCNN approach, our method does not require any factorization of the kernel matrix in the form of $K = QQ^\top$. Instead, by solving the optimization problem in DCCNN, we can directly recover the proper solution without introducing any ambiguity or compromise.
Next, our approach does not construct the full matrix $K$ for all samples in memory. Our weight recovering algorithm computes entries of the kernel matrix on the fly, greatly reducing the memory overhead.
More than that, our DCCNN framework does not require specifying the low rank hyperparameter manually. Our weight recovery algorithm automatically computes the minimum necessary rank from the dual solution, which better exploits the benefits of the low-rankness introduced by the nuclear norm constraint, as it encourages a small number of filters indirectly and reduces the parameter size.
To sum up, our DCCNN method inherits all the benefits of convexified CNNs like generalization error and sample complexity bounds, while enjoying a much more formal and efficient workflow without introducing ambiguity.

\textbf{Related works.}
The area of convex convolutional neural networks is fairly new. 
In \cite{zhang2017convexified}, it is shown that a class of CNNs with reproducing kernel Hilbert space (RKHS) filters contains a low-rank matrix structure, which can be further convexified by using a nuclear norm constraint and reduced to the class of CCNNs. This provides a basis for our work. 
Other convex neural network researches are more tangential.
\cite{mairal2014convolutional} proposed to approximate the CNNs with a translation-invariant kernel.
\cite{zhong2017learning} analyzed the convergence properties of CNNs. 
\cite{gunasekar2018implicit} analyzes a linear version of convolutional networks. 
\cite{amos2017input} and \cite{makkuva2020optimal} study the so-called input convex neural networks, where the output is a convex function of the inputs.
\cite{arora2019exact} studies the exact computation of CNNs with infinite many filters. \cite{ergen2020implicit} proposes equivalent convex regularizers to the CNN architectures. See Appendix \ref{morerelatedwork} for further discussion.

\textbf{Summary of our contribution.} We provide the following series of novel results in this paper:
\begin{itemize}
    \item We provide a dual convexified convolutional neural network (DCCNN) framework for learning a class of CNNs. We rigorously construct a dual convex training program through careful analysis of the KKT conditions and Fenchel conjugates. With the dual solution, we also propose a linear weight and implicit convolutional weight recovery algorithm.
    \item Our DCCNN framework is theoretically formal. Compared to prior literature, our method does not require any ambiguous factorization of the kernel matrix. Through the novel weight recovery algorithms, we directly recover the proper solution. In addition, our approach implicitly encourages a small number of filters, reducing the number of weight parameters without enforcing the low-rankness by heuristics.  
    \item Our DCCNN framework is highly efficient. It operates without loading the full kernel matrix for all samples in memory, or introducing any unnecessary memory overhead of factorization.
    As a result, DCCNN inherits all the statistical benefits of convexified CNNs, while enjoying a much more efficient workflow.  
    \item Our analysis in the DCCNN framework is novel. To derive the dual training program, the nuclear norm in the primal problem requires a careful construction of the Lagrangian through Fenchel conjugates. Furthermore, there is no closed-form expression that can be used to recover the primal solution from the dual solution directly because of the sub-differential of the nuclear norm. Without getting the primal solution, our algorithm directly recovers the linear weight and the convolution output using the dual solution and a kernel generating matrix, which applies to kernels with infinitely dimensional basis.
\end{itemize}

\section{Preliminaries}
\subsection{Notation} 
For any positive integer $n$, we use $[n]$ to represent the set $\{1,2,...,n\}$. We use regular lower-case letters (e.g., $a$, $c$) to denote constant. We use bold lower-case letters like $\bf{x}$ to represent vectors. We use capital letters (e.g., $U$, $V$) to represent matrix. We use $\rho$ as the notation for activation function, $\sigma$ as the matrix singular values, and $\lambda$ as matrix eigenvalues. $\Vert \cdot \Vert_\ast$ denotes the matrix nuclear norm, and $\Vert \cdot \Vert_2$ represents the spectral norm for a matrix, and the Euclidean norm of a vector. We use $\Tr(\cdot)$ to represent the trace of a matrix. For matrix $V$, we use $V(i)$ to represent the $i^{th}$ column vector and $V_{ij}$ to represent the entry in the $i^{th}$ row and $j^{th}$ column. For column vectors $\mathbf{x}_i \in \mathbb{R}^{d}$ with $i \in [n]$, we denote $\big[\mathbf{x}_i\big] \in \mathbb{R}^{d\times n}$ as the concatenation of these column vectors into a matrix. For matrix $U$ and $V$ with the same number of rows, $[U \mid V]$ represents the concatenation of the matrices. We use $\mathbf{I}$ to represent the identity matrix and $\mathbf{0}$ to represent the zero matrix.

\subsection{Mathematical Formulation of Convolutional Neural Networks}
Given a dataset $\{ (\mathbf{x}_1, y_1), ... , (\mathbf{x}_n, y_n) \}$ containing vector data $\mathbf{x}_i \in \R^{d_0}$ and target $y_i$, $i \in [n]$, a two-layer CNN, also called one-hidden-layer CNN, is a function that maps the data to the targets by passing the data through a convolutional layer and a linear layer. 

The convolutional layer performs convolution operations, in which a subset of $d_1$ entries in $\mathbf{x}_i$ is multiplied element-wise with one of the $r$ convolutional filters $\mathbf{w}_k \in \R^{d_1}$, $k \in [r]$. We call this subset of data entries a \textit{patch} generated from $\mathbf{x}_i$, denoted as $\mathbf{z} = z\left(\mathbf{x}_i\right) \in \R^{d_1}$. The number of patches relies on the choice of filter width, padding type, stride, etc., and we use $p$ to denote the total number of patches generated by a sample. The $p$ patches form the patch matrix $Z = \left[ \mathbf{z}_1, ... , \mathbf{z}_p \right] \in \R^{d_1 \times p}$, and the $r$ convolutional filters form the weight matrix $W = \left[ \mathbf{w}_1, ... , \mathbf{w}_r \right] \in \R^{d_1 \times r}$. Then the convolutional operations can be represented by the matrix multiplication between the patch matrix and weight matrix, i.e., $Z^\top W$. 

The result of the convolutional layer is then passed into an element-wise activation function $\rho\left( \cdot \right)$. After that, $\rho(Z^\top W)$ is passed into a linear layer. We denote the weight of the linear layer as $L \in \R^{p \times r}$. For all $j \in [p]$ and $k \in [r]$, $L_{jk}$ is the coefficient multiplied with the $k^{th}$ convolutional filter and the $j^{th}$ patch. Then the output of the two-layer CNN can be represented by $\Tr \big( \rho( Z^\top W ) L^\top \big)$. 

\subsection{Convexified Convolutional Neural Networks}
As proposed by \cite{zhang2017convexified} and later generalized by \cite{bietti2019group}, CNNs with certain choices of activation function can be contained in some reproducing kernel Hilbert space (RKHS). In other words, for some specific activation function $\rho$, there is a corresponding kernel function $\mathcal{K}\left(\cdot, \cdot \right) = \phi\left( \cdot \right)^\top\phi\left( \cdot \right)$ that produces similar non-linearity. The basis function $\phi\left( \cdot \right): \R^{d_1} \rightarrow \R^{d_2}$ could be a mapping to infinite dimensions, i.e., $d_2$ could be infinity. Define basis function matrix 
\begin{align}
    \Phi \left(\mathbf{x}_i\right) = \big[ \phi\big(z_1\left(\mathbf{x}_i\right)\big), \cdots, \phi\big(z_p\left(\mathbf{x}_i\right)\big) \big] \in \R^{d_2 \times p}, \nonumber
\end{align}
and let $W \in \R^{d_2 \times r}$, then the activation function is replaced by the kernel basis function, which leads to a convexified two-layer CNN function: 
\begin{align}
f(\mathbf{x}_i) = \Tr \big( \Phi (\mathbf{x}_i)^\top W L^\top \big). \label{DCCNNeq}
\end{align} 
In other words, activating the convolution output $\rho( Z^\top W )$ is equivalent to passing the data through the kernel basis function, i.e., $\Phi (\mathbf{x}_i)^\top W$. Define parameter matrix $A = W L^\top \in \R^{d_2 \times p}$. Then we can write the two-layer CNN using the new parameter $A$: 
\begin{align}
f(\mathbf{x}_i) = \Tr \big( \Phi (\mathbf{x}_i)^\top A \big). \label{binary_classifier} 
\end{align}
As the multiplication of the convolutional weight and linear weight matrices, $A$ is intrinsically low-rank, i.e., $\rank(A) \leq r$. \cite{zhang2017convexified} enforces the low-rankness by an extra relaxed constraint to bound the nuclear norm of $A$ by some constant $a$, i.e., $\Vert A \Vert_\ast \leq a$. 

In this paper we consider the task of learning two-layer CNN functions for classification with some loss $\ell(\cdot)$ that is convex and non-increasing, e.g. hinge loss, logistic loss, exponential loss. 

Using the convexified CNN function defined in Eq. \eqref{binary_classifier}, the parameter matrix $A$ can be learned by solving the following optimization problem. For binary classification, 
\begin{align}
\hat{A} &= \argmin_{A} \left \Vert A \right \Vert_\ast + c \sum_{i=1}^{n} \ell \big( y_i \Tr \big( \Phi (\mathbf{x}_i)^\top A \big) \big) \label{hinge_loss}
\end{align}
where $c>0$ is a hyperparameter. Furthermore, in the context of $m$-class classification, there is a predictor for each class, i.e., $f_k(\mathbf{x}_i) = \Tr \big( \Phi (\mathbf{x}_i)^\top A_k \big)$ for every $k \in [m]$. Correspondingly, the parameter $A = [A_1, ..., A_m]$ can be learned by minimizing the loss for multi-class classification: 
\begin{align} 
\hat{A} &= \argmin_{A} \left \Vert A \right \Vert_\ast + c \sum_{i=1}^{n} \sum_{k\neq y_i} \ell \big( f_{y_i}(\mathbf{x}_i) - f_{k}(\mathbf{x}_i) \big). \label{multi_hinge_loss}
\end{align}
It is worth noticing that the $\Phi (\mathbf{x}_i)$ appearing in the problem can be infinitely dimensional, and can only be represented by some heuristic finite approximation in practice, which consequently introduces ambiguity. To tackle this problem, we consider the dual formulation of the problem in this paper.

\begin{figure*}[t]
\centering
     \begin{subfigure}[b]{0.49\textwidth}
         \centering
         \begin{tcolorbox}[colback=white]
         \hspace{-12.5pt}\includegraphics[width=1.05\textwidth]{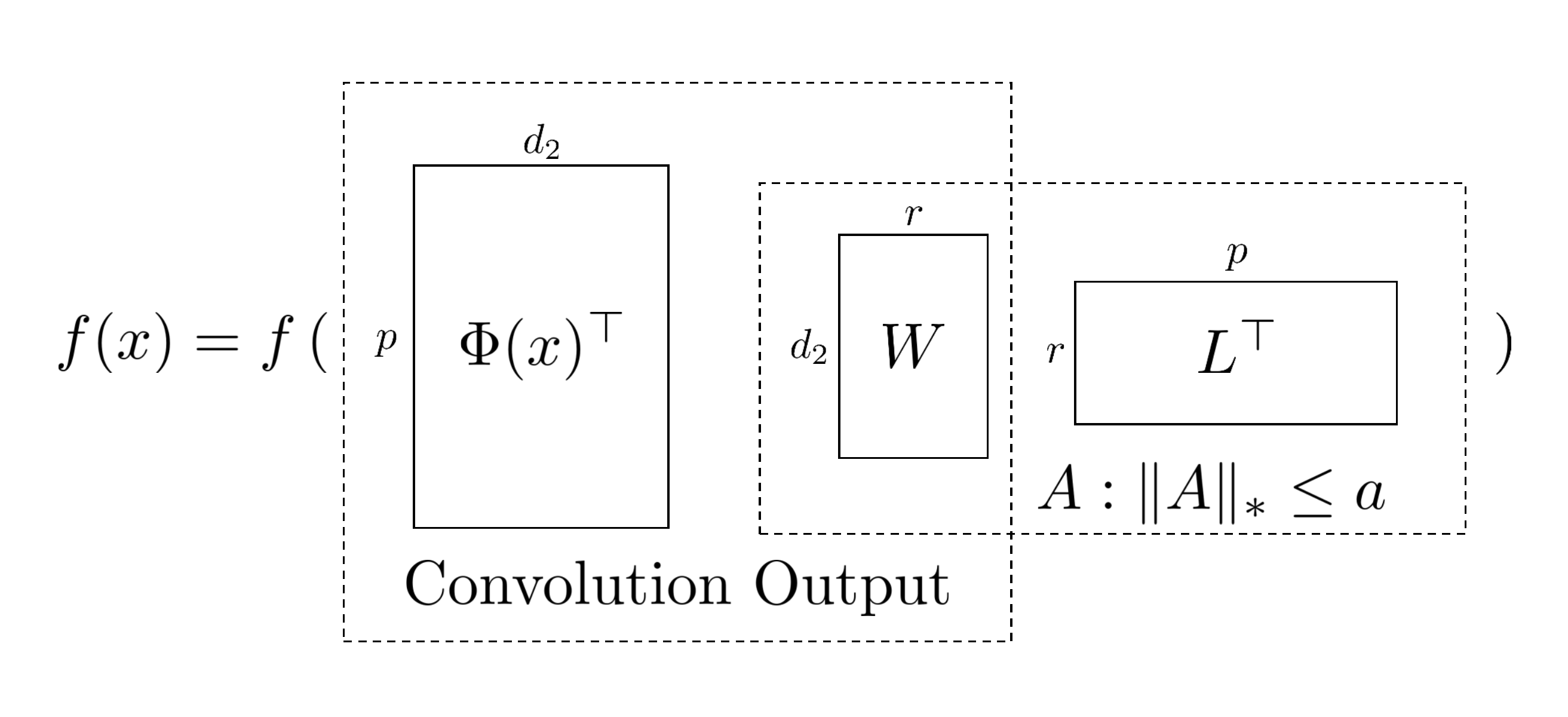}
         \begin{itemize}[noitemsep, leftmargin=15pt]
             \item \hspace{-15pt} \footnotesize $\Phi(x)$ approximated by kernel matrix factorization
             \item \hspace{-15pt} \footnotesize $A$ is low-rank, enforced by nuclear norm constraint
             \item \hspace{-15pt} \footnotesize $W$ computed by low-rank approximation
         \end{itemize}
         \end{tcolorbox}
         \caption{The Primal Framework}
         \label{fig:y equals x}
     \end{subfigure}
     \hfill
     \begin{subfigure}[b]{0.49\textwidth}
         \centering
         \begin{tcolorbox}[colback=white]
         \hspace{-12.5pt} \includegraphics[width=1.05\textwidth]{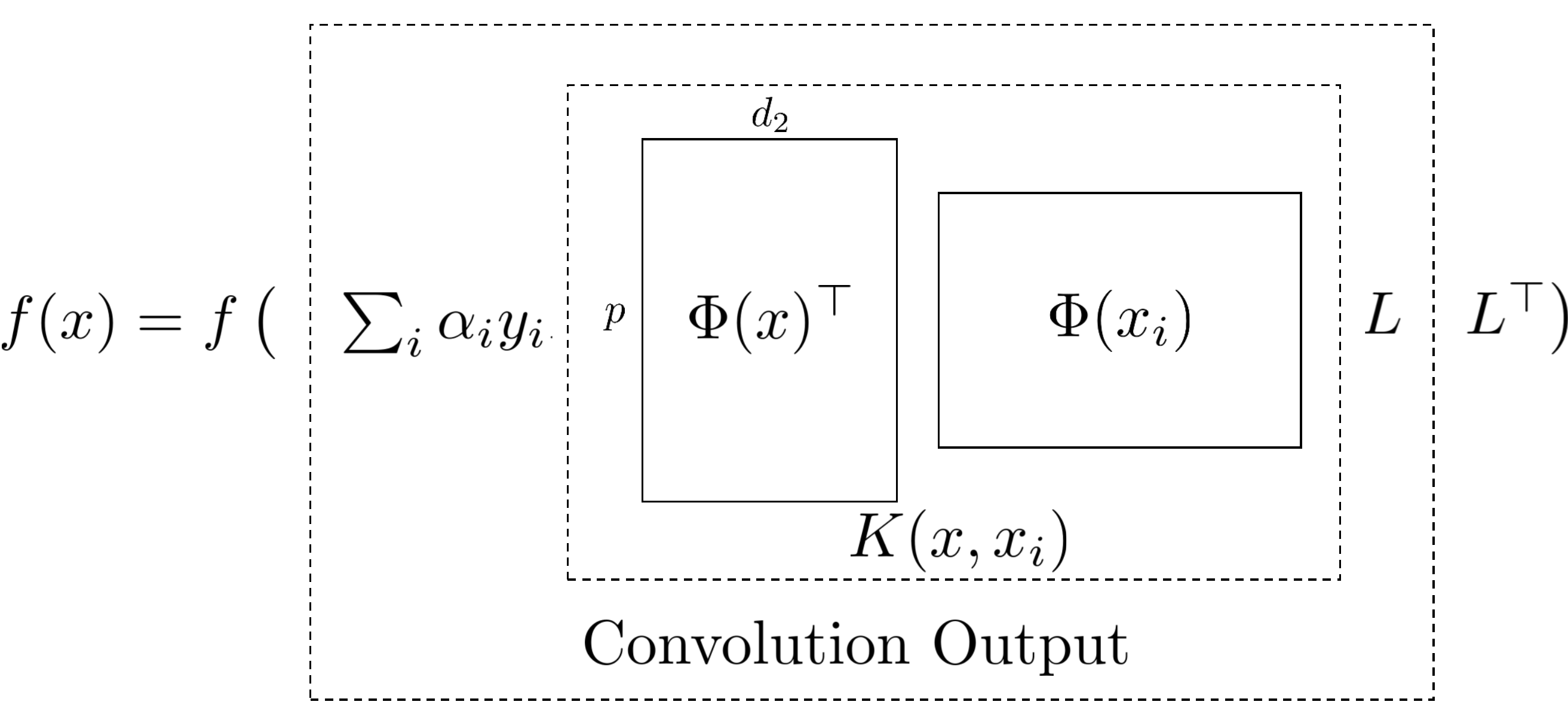}
         \begin{itemize}[noitemsep, leftmargin=15pt]
             \item \hspace{-15pt} \footnotesize Use $K(x,x_i)$ without ambiguous factorization
             \item \hspace{-15pt} \footnotesize Recover weight with dual solution $\alpha$
             \item \hspace{-15pt} \footnotesize Compute convolution output without $\Phi(x)$ or $W$
         \end{itemize}
         \end{tcolorbox}
         \caption{The Dual Framework}
         \label{fig:three sin x}
     \end{subfigure}
\caption{(a): In the primal framework, basis function matrix $\Phi(x)$ is approximated by a matrix $Q$ from the factorization of kernel matrix such that $K = QQ^\top$. The convolutional weight $W$ and linear weight $L$ are multiplied together as matrix $A$ with low-rankness enforced by nuclear norm constraint. $W$ is recovered by a low-rank approximation from optimized $A$. (b): The dual framework uses $K(x,x_i)$ without ambiguous factorization, and recovers the weights with the optimized dual variable $\alpha$. The primal solution $A$ cannot be directly recovered because $A$ has no closed-form expression of $\alpha$. Therefore, the dual framework recovers linear weight $L$ and computes the convolution output $\Phi(x)^\top W$ directly without $W$ or $\Phi(x)$.}
\label{frameworks}
\end{figure*}

\section{Main Results} \label{main_res}
We begin by introducing the primal optimization problem. Learning a two-layer convexified CNN, i.e., solving the loss minimization in Eq. \eqref{hinge_loss} is equivalent to solving the following optimization problem:
\begin{align} 
\minimize_{A} 
\quad & 
\Vert A \Vert_\ast + c \sum_{i = 1}^n \ell(\xi_i) \label{primal} \\
\st \quad & 
y_i\Tr \big( \Phi (\mathbf{x}_i)^\top A \big) \geq \xi_i \,,  
\ \forall i \in [n] \,,  \nonumber
\end{align}
in which $\xi_i$ is an equivalent variable of $y_i\Tr \big( \Phi (\mathbf{x}_i)^\top A \big)$, and $c > 0$ is some hyperparameter.  $\ell(\cdot)$ is some convex and non-increasing loss function. In fact, many common loss functions fall into this category, e.g. hinge loss $\ell_H(x) = \max(0, 1-x)$, squared hinge loss $\ell_{SH}(x) = (\max(0, 1-x))^2$, logistic loss $\ell_{L}(x) = \log(1+e^x)$, and exponential loss $\ell_{E}(x) = e^x$. 

In the primal problem in Eq. \eqref{primal}, the computation of the basis function matrix $\Phi \left(\mathbf{x}_i\right)$ may be infeasible, or otherwise heuristically approximated by some kernel matrix factorization, as the basis function $\phi\left( {x}_i \right)$ could be a mapping to infinite dimensions for many kernels. To avoid such dilemma, we first derive the dual problem, then introduce the algorithm for recovering the weight from the dual solution. An illustration of the dual framework based on the primal framework is demonstrated in Figure \ref{frameworks}. For the theorems in this section, we provide proof sketches and leave the detailed derivations in Appendix \ref{bi_proof}.

\subsection{Dual Optimization Problem} \label{dual_derivation}
  In order to avoid the ambiguous approximation from kernel matrix factorization, we consider the dual form of problem \eqref{primal} and compute a kernel generating matrix directly with the benefits of the kernel trick \cite{scholkopf2002learning}. 
\begin{theorem} \label{dual_theorem}
The dual problem of Eq. \eqref{primal} is given by: 
\begin{align} 
\maximize_{\alpha} 
\quad & 
-c\sum_{i = 1}^n \ell^\ast\left(-\frac{\alpha_i}{c}\right) \label{real_dual} \\
\st \quad & 
\lambda_{max} \Big(  \sum_{i = 1}^n \sum_{j = 1}^n \alpha_i \alpha_j y_i y_j K(\mathbf{x}_i, \mathbf{x}_j) \Big) \leq 1 \,, \nonumber  \\ 
& \alpha_i \geq 0 \,, 
\quad \forall i \in [n] \,, \nonumber  
\end{align} 
in which $\alpha_i$'s are the dual variables, $\ell^\ast(\cdot)$\footnote{To have a detailed illustration of $\ell^\ast(\cdot)$, we include the Fenchel conjugate of some common losses in Appendix \ref{app_conjugate}.} is the Fenchel conjugate of the loss function $\ell(\cdot)$, $K\left(\mathbf{x}_i, \mathbf{x}_j\right) = \Phi \left(\mathbf{x}_i\right)^\top \Phi \left(\mathbf{x}_j\right)$ is a kernel generating matrix, such that its entries are composed of the kernel function $\mathcal{K} \left( \cdot, \cdot \right) = \phi \left( \cdot \right)^\top\phi \left( \cdot \right)$ taking one patch from each of the two samples as input:  
\begin{align}
&K\left(\mathbf{x}_i, \mathbf{x}_j\right) = 
\begin{pmatrix}
\mathcal{K}\big(z_1(\mathbf{x}_i), z_1(\mathbf{x}_j)\big) & \cdots & \mathcal{K}\big(z_1(\mathbf{x}_i), z_p(\mathbf{x}_j)\big) \\
\mathcal{K}\big(z_2(\mathbf{x}_i), z_1(\mathbf{x}_j)\big) & \cdots & \mathcal{K}\big(z_2(\mathbf{x}_i), z_p(\mathbf{x}_j)\big) \\
\vdots  & \ddots & \vdots \\
\mathcal{K}\big(z_p(\mathbf{x}_i), z_1(\mathbf{x}_j)\big) & \cdots & \mathcal{K}\big(z_p(\mathbf{x}_i), z_p(\mathbf{x}_j)\big)
\end{pmatrix}. \label{kernel_matrix} 
\end{align}
\end{theorem}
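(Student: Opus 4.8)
The plan is to form the Lagrangian of the primal problem in Eq.~\eqref{primal}, attach a multiplier $\alpha_i \geq 0$ to each inequality $y_i \Tr(\Phi(\mathbf{x}_i)^\top A) \geq \xi_i$, and then take the infimum over the primal variables $A$ and $\xi$. The Lagrangian
\[
\mathcal{L}(A,\xi,\alpha) = \Vert A \Vert_\ast + c\sum_{i=1}^n \ell(\xi_i) - \sum_{i=1}^n \alpha_i \big( y_i \Tr(\Phi(\mathbf{x}_i)^\top A) - \xi_i \big)
\]
separates into a block depending only on $\xi$ and a block depending only on $A$, so I would minimize each independently.

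For the $\xi$ block, each term $\inf_{\xi_i}\big(c\,\ell(\xi_i) + \alpha_i \xi_i\big)$ equals $-c\,\ell^\ast(-\alpha_i/c)$ after factoring out $c$ and applying the definition of the Fenchel conjugate $\ell^\ast(s)=\sup_x(sx-\ell(x))$. Summing over $i$ reproduces the dual objective $-c\sum_i \ell^\ast(-\alpha_i/c)$ of Eq.~\eqref{real_dual}.

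The $A$ block is the crux of the argument. Collecting the linear terms into the matrix $M := \sum_{i=1}^n \alpha_i y_i \Phi(\mathbf{x}_i) \in \R^{d_2\times p}$, the block becomes $\inf_A\big(\Vert A\Vert_\ast - \Tr(M^\top A)\big)$. The key fact I would invoke is that the Fenchel conjugate of the nuclear norm is the indicator of the unit spectral-norm ball, since the spectral norm is dual to the nuclear norm; concretely, $\sup_A\big(\Tr(M^\top A) - \Vert A\Vert_\ast\big)$ is $0$ when $\Vert M\Vert_2 \leq 1$ and $+\infty$ otherwise. Hence the infimum is $0$ on the feasible set and $-\infty$ off it, which is exactly the mechanism that converts the nuclear-norm penalty into a hard spectral-norm constraint on $M$. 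I expect this step to be the main obstacle: one must correctly identify the nuclear/spectral duality and argue that the unbounded-below case forces the constraint, rather than attempting a direct subgradient computation of the nuclear norm.

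Finally I would translate $\Vert M\Vert_2 \leq 1$ into the $\lambda_{max}$ form. Using $\Vert M\Vert_2^2 = \lambda_{max}(M^\top M)$ together with
\[
M^\top M = \Big(\sum_i \alpha_i y_i \Phi(\mathbf{x}_i)\Big)^\top \Big(\sum_j \alpha_j y_j \Phi(\mathbf{x}_j)\Big) = \sum_{i=1}^n\sum_{j=1}^n \alpha_i \alpha_j y_i y_j\, \Phi(\mathbf{x}_i)^\top \Phi(\mathbf{x}_j),
\]
the inner products collapse via the kernel trick into the generating matrices $K(\mathbf{x}_i,\mathbf{x}_j)$ of Eq.~\eqref{kernel_matrix}, yielding precisely $\lambda_{max}\big(\sum_{i,j}\alpha_i\alpha_j y_i y_j K(\mathbf{x}_i,\mathbf{x}_j)\big)\leq 1$; note that $M^\top M$ is the finite $p\times p$ object even when $d_2$ is infinite, which is what makes the kernel formulation meaningful. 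Combining the two blocks with the sign constraint $\alpha_i\geq 0$ gives the stated dual. As a closing point I would justify that strong duality holds, so the dual value matches the primal: this follows from convexity of the primal and Slater's condition, for which a strictly feasible point is readily available.
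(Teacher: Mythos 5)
Your proposal is correct and takes essentially the same route as the paper's own proof: the same Lagrangian with multipliers $\alpha_i \geq 0$, the same split into a $\xi$ block (yielding $-c\sum_{i}\ell^\ast(-\alpha_i/c)$ via the Fenchel conjugate) and an $A$ block handled by the nuclear/spectral-norm conjugacy that turns the penalty into the hard constraint $\big\Vert \sum_i \alpha_i y_i \Phi(\mathbf{x}_i)\big\Vert_2 \leq 1$, followed by squaring and the kernel trick to reach the $\lambda_{max}$ form. Your closing remark on strong duality via Slater's condition is a correct addition that the paper does not make explicit, but it is not needed merely to state the dual.
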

\textit{Proof Sketch:} The derivation of the dual problem follows the standard Lagrangian duality framework, with a careful construction of the Fenchel conjugate \cite{boyd2004convex} for the nuclear norm and the loss function.
For Lagrangian variables $\alpha_i \geq 0$, we can write the Lagrangian function of the primal problem in Eq. \eqref{primal} as follows, after rearranging terms and utilizing the trace definition of matrix inner product:
\begin{align}
\mathcal{L}\left(A, \xi, \alpha\right) &= \left \Vert A \right \Vert_\ast - \big \langle \sum_{i = 1}^n \alpha_i y_i \Phi \left(\mathbf{x}_i\right), A \big \rangle + c\sum_{i = 1}^n \ell(\xi_i) + \sum_{i = 1}^n \alpha_i\xi_i.  \label{Lagrangian}
\end{align}
Then by definition, we get the dual function by minimizing the Lagrangian function with respect to the primal variables: 
\begin{align}
g\left(\alpha \right) &= \min_{A, \xi} \mathcal{L}\left(A, \xi, \alpha \right) \nonumber \\
&= - \underbrace{\max_{A} \big \langle \sum_{i = 1}^n \alpha_i y_i \Phi \left(\mathbf{x}_i\right), A \big \rangle - \left \Vert A \right \Vert_\ast}_{g_1} - \underbrace{\max_{\xi} - c\sum_{i = 1}^n \ell(\xi_i) - \sum_{i = 1}^n \alpha_i\xi_i}_{g_2},
\end{align}
in which $g_1$ and $g_2$ take the form of the Fenchel conjugate. The conjugate function of norms is their dual norm \cite{boyd2004convex}, then for the norm $f_0 = \left \Vert A \right \Vert_\ast$, we have
\begin{align} \label{conjugate}
& g_1 = f_0^\ast \Big( \sum_{i = 1}^n \alpha_i y_i \Phi \left(\mathbf{x}_i\right) \Big), \textit{in which} \ f_0^\ast \left( \cdot \right) = 
\begin{cases}
\ 0 \ , & \left \Vert \cdot \right \Vert_2 \leq 1 \\
\infty, & otherwise
\end{cases}.
\end{align}
Also, 
\begin{align}
    g_2 = c\sum_{i=1}^n \ell^\ast\left(- \frac{\alpha_i}{c}\right), \nonumber
\end{align}
in which $\ell^\ast(\cdot) = \sup_{\xi_i} \left\{ \langle \cdot, \xi_i \rangle - \ell(\xi_i) \right\}$ is the conjugate function of the loss function $\ell(\xi_i)$. Therefore, the dual problem of Eq. \eqref{primal} is as follows: 
\begin{align} 
\maximize_{\alpha} 
\quad & 
-c\sum_{i = 1}^n \ell^\ast\left(-\frac{\alpha_i}{c}\right) \nonumber \\
\st \quad & 
\big \Vert \sum_{i = 1}^n \alpha_i y_i \Phi \left(\mathbf{x}_i\right) \big \Vert_2 \leq 1 \,,  \label{dual} \\
& \alpha_i \geq 0 \,, 
\quad \forall i \in [n] \,. \nonumber
\end{align} 
Finally, we construct the kernel function by squaring the spectral norm constraint in Eq. \eqref{dual} on both sides, which leads to the form in Eq. \eqref{real_dual}. \hfill $\blacksquare$

The dual problem presents in the form of the kernel generating matrix and eliminates the basis function, and consequently the ambiguity of factorizing the kernel matrix. Also, our kernel generating matrix is of size $\mathcal{O}(p^2)$, which is more computationally efficient in terms of spatial complexity.

With the dual problem constructed, we solve the dual problem by a coordinate-descent algorithm. The algorithm picks the coordinate entry in ascending order with respect to the maximum eigenvalues of the kernel generating matrices, and optimize the chosen coordinate by binary search. The psuedocode of the algorithm is provided in Algorithm \ref{cd_alg} in Appendix \ref{cd_algorithm}.

\subsection{Recovering the Parameters} \label{binary_primal_recover}
In conventional machine learning dual optimization problems like the dual of Support Vector Machine (SVM), the primal solution can be exactly recovered with the stationary condition in the Karush–Kuhn–Tucker (KKT) conditions \cite{weston1998multi, Vapnik1998}. Such closed form expression for the primal solution, however, no longer applies to the DCCNN formulation, because the nuclear norm is non-differentiable. To tackle this problem, we propose a parameter recover algorithm that avoids recovering the primal solution $\hat{A}$, but directly recovers the linear weight $\hat{L}$ and the output of the convolutional layer $\Phi(x)^\top\hat{W}$ without explicitly recovering $\hat{W}$ or computing $\Phi(x)$. Our algorithm leverages the KKT conditions and the subdifferential set of the nuclear norm. 

Since there are infinite many ways to decompose the convolutional weight $\hat{W}$ and linear weight $\hat{L}$ from the optimal parameter $\hat{A}$, we adopt the singular value decomposition method proposed by \cite{zhang2017convexified}, i.e., for the compact SVD $\hat{A} = \hat{U}_1\hat{D}_1\hat{V}_1$, regard $\hat{U}_1$ as the convolutional weight $\hat{W}$ and $\hat{V}_1$ as the linear weight $\hat{L}$. We use them interchangeably in the following context. For recovering such weight parameters, we propose an algorithm to recover the linear weight in Theorem \ref{theo_binary_linear}, and we propose the method to implicitly recover the convolutional weight by the convolution output in Theorem \ref{theo_binary_conv}. The complete algorithm workflow is demonstrated in Algorithm \ref{recover_alg}.

\begin{algorithm*}[t]
\caption{Recovering the weight parameters for two-layer DCCNNs}
\label{recover_alg}
\begin{algorithmic}
\State {\bfseries Input:} Data $\{(\mathbf{x}_i, y_i)\}_{i=1}^n$; Optimized dual solution $\{\hat{\alpha}_i\}_{i=1}^n$ to problem \eqref{real_dual}; Kernel function $\mathcal{K}$. 
\State {\bfseries Recover the Linear Weight:}
\end{algorithmic}
\begin{algorithmic}[1]
    \alglinenoNew{alg1}
    \State Compute $S = \sum_{i, j} \hat{\alpha}_i \hat{\alpha}_j y_i y_j K \left(\mathbf{x}_i, \mathbf{x}_j\right)$. \Comment{Using $K(\cdot, \cdot)$ defined with $\mathcal{K}$ in Eq. \eqref{kernel_matrix} without ambiguous factorization}
    \State Compute the eigendecomposition $S = \tilde{V}\Lambda\tilde{V}^{-1}$.
    \State Let the eigenvectors with eigenvalue $1$ form the linear weight $\hat{L}$, i.e., $$\hat{L} = \left[\tilde{V}(i)\right] \in \R^{p \times r}, \text{ for all } i \text{ such that } \Lambda_{ii} = 1.$$
    \alglinenoPush{alg1}
\end{algorithmic}   
\begin{algorithmic}
\State {\bfseries Recover the Convolution Output (Input for the next layerwise training):}  \Comment{Without computing $\Phi(\mathbf{x}_i)$ or $W$}
\end{algorithmic}
\begin{algorithmic}[1]
    \alglinenoPop{alg1}
    \State Compute the output of the convolutional layer $\Phi(\mathbf{x}_i)^\top\hat{W}$ by $$\Phi(\mathbf{x}_i)^\top\hat{W} = \sum\nolimits_{j = 1}^n \hat{\alpha}_j y_j K (\mathbf{x}_i, \mathbf{x}_j) \hat{L}.$$ 
    \alglinenoPush{alg1}
\end{algorithmic}
\begin{algorithmic}
\State {\bfseries Output:} Linear weight $\hat{L}$; Output of convolutional layer $\{\Phi(\mathbf{x}_i)^\top\hat{W}\}_{i=1}^n$.
\end{algorithmic}
\end{algorithm*}

\subsubsection{Recovering the Linear Weight}
Our weight recovering algorithm leverages the KKT conditions, in which the subdifferential of the nuclear norm plays an important part.  We begin by introducing two Lemmas that formalize the stationary condition of our optimization problem and the relevant definition for the nuclear norm subdifferential.
\begin{lemma} \label{stationary_lemma}
For the primal problem \eqref{primal} and the dual variables $\{\alpha_i\}_{i=1}^n$, the stationary condition with respect to variable $A$ in the KKT conditions is given by:
\begin{align}
\partial \Vert \hat{A} \Vert_\ast - \sum_{i = 1}^n \hat{\alpha}_i y_i \Phi \left(\mathbf{x}_i \right) = 0. \label{stationary}
\end{align}
\end{lemma}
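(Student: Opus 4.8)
The plan is to obtain Eq. \eqref{stationary} directly as the $A$-component of the first-order stationarity KKT condition, reading it off from the Lagrangian already assembled in Eq. \eqref{Lagrangian}. Since the only nonsmooth dependence on $A$ is through the nuclear norm, I would express stationarity through the subdifferential rather than an ordinary gradient, i.e. I would impose Fermat's optimality condition $\mathbf{0} \in \partial_A \mathcal{L}(\hat{A}, \hat{\xi}, \hat{\alpha})$ that must hold at a primal--dual optimal triple, and then evaluate $\partial_A \mathcal{L}$ explicitly.

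First I would isolate the $A$-dependent part of $\mathcal{L}$. In Eq. \eqref{Lagrangian} the terms $c\sum_{i=1}^n \ell(\xi_i)$ and $\sum_{i=1}^n \alpha_i \xi_i$ do not involve $A$, so only $\Vert A \Vert_\ast - \big\langle \sum_{i=1}^n \alpha_i y_i \Phi(\mathbf{x}_i), A \big\rangle$ contributes. Here the pairing is the trace inner product $\langle X, A\rangle = \Tr(X^\top A)$, so the second term is linear in $A$ and its gradient with respect to $A$ is the constant matrix $-\sum_{i=1}^n \alpha_i y_i \Phi(\mathbf{x}_i)$.

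Next I would apply the subdifferential sum rule. Because the linear term is finite and differentiable everywhere, the sum rule holds with no constraint qualification, giving $\partial_A \mathcal{L}(A,\xi,\alpha) = \partial \Vert A \Vert_\ast - \sum_{i=1}^n \alpha_i y_i \Phi(\mathbf{x}_i)$, where $\partial \Vert A \Vert_\ast$ denotes the set-valued subdifferential of the nuclear norm. Imposing $\mathbf{0} \in \partial_A \mathcal{L}(\hat{A},\hat{\xi},\hat{\alpha})$ at the optimum then yields exactly Eq. \eqref{stationary}, understood as the assertion that some element of $\partial\Vert \hat{A}\Vert_\ast$ equals $\sum_{i=1}^n \hat{\alpha}_i y_i \Phi(\mathbf{x}_i)$.

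The main subtlety — more a point requiring care than a genuine obstacle — is the set-valued nature of $\partial\Vert\hat{A}\Vert_\ast$: Eq. \eqref{stationary} is really the inclusion $\mathbf{0} \in \partial\Vert\hat{A}\Vert_\ast - \sum_{i=1}^n \hat{\alpha}_i y_i \Phi(\mathbf{x}_i)$, which selects one subgradient matching the dual-weighted sum of basis features. I would make this selection explicit here, since it is precisely the hook that the subsequent SVD-based characterization of $\partial\Vert\cdot\Vert_\ast$ and the weight-recovery argument will exploit; the explicit description of the subdifferential set itself I would defer to the following lemma.
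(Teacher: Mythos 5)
Your proposal is correct and takes essentially the same route as the paper: the paper obtains the lemma by reading the stationarity condition off the Lagrangian in Eq.~\eqref{Lagrangian}, and in the proofs of Theorems~\ref{theo_binary_linear} and \ref{theo_binary_conv} it interprets Eq.~\eqref{stationary} exactly as you do, namely as the existence of an element of $\partial \Vert \hat{A} \Vert_\ast$ equal to $\sum_{i=1}^n \hat{\alpha}_i y_i \Phi(\mathbf{x}_i)$. Your explicit handling of the set-valued inclusion via Fermat's condition and the subdifferential sum rule is consistent with, and if anything slightly more careful than, the paper's notation.
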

\begin{lemma}[Subdifferential of Nuclear Norm \cite{watson1992characterization}] \label{subdifferntial}
For matrix $A \in \R^{d_2 \times p}$ with $\rank(A) = r$, consider its compact SVD $A = U_1D_1V_1^\top$ in which $U_1 \in \R^{d_2 \times r}$, $D_1 \in \R^{r \times r}$, and $V_1 \in \R^{p \times r}$, and full SVD $A = UDV^\top$ in which $U \in \R^{d_2 \times d_2}$, $D \in \R^{d_2 \times p}$, and $V \in \R^{p \times p}$. Denote $U_2 \in \R^{d_2 \times (d_2-r)}$, $V_2 \in \R^{p \times (p - r)}$ such that $U = [U_1 \mid U_2]$, $V = [V_1 \mid V_2]$. Then the subdifferential set of $A$ is given by 
\begin{align}
\partial \left \Vert A \right \Vert_\ast = \Big\{ &U_1V_1^\top+U_2 E V_2^\top: \ E \in \R^{(d_2-r) \times (p-r)}, \ \sigma_{max}\left(E \right) \leq 1 \Big\}, \label{def2} 
\end{align}
in which $\sigma_{max}\left( \cdot \right)$ is the largest singular value.
\end{lemma}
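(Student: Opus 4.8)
The plan is to invoke the standard duality characterization of the subdifferential of a norm and then verify it coincides with the claimed set. Recall the convex-analysis fact \cite{boyd2004convex} that for any norm $N(\cdot)$ with dual norm $N_\circ(\cdot)$ one has $\partial N(A) = \{G : N_\circ(G) \le 1, \ \langle G, A\rangle = N(A)\}$, where the inner product is the trace inner product $\langle G, A\rangle = \Tr(G^\top A)$. Since the spectral norm $\Vert \cdot \Vert_2 = \sigma_{max}(\cdot)$ is the dual norm of the nuclear norm, the subdifferential in question is exactly $\{G : \sigma_{max}(G) \le 1, \ \langle G, A\rangle = \Vert A\Vert_\ast\}$. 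The whole proof then reduces to showing this set equals the right-hand side of Eq. \eqref{def2}, which I would establish by a two-sided inclusion.

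For the easy inclusion, I would take $G = U_1 V_1^\top + U_2 E V_2^\top$ with $\sigma_{max}(E) \le 1$ and rewrite it in the full-SVD bases as $G = U \left[\begin{smallmatrix} \mathbf{I} & \mathbf{0} \\ \mathbf{0} & E \end{smallmatrix}\right] V^\top$. Since $U, V$ are orthogonal, the singular values of $G$ are $1$ (with multiplicity $r$) together with those of $E$, so $\sigma_{max}(G) \le 1$. The inner product then telescopes: using $U_1^\top U_1 = \mathbf{I}$, $U_2^\top U_1 = \mathbf{0}$, and $V_1^\top V_1 = \mathbf{I}$, a direct computation gives $\langle G, A\rangle = \Tr(D_1) = \Vert A\Vert_\ast$, so $G$ lies in the subdifferential.

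The reverse inclusion is the crux and the main obstacle. Given any $G$ with $\sigma_{max}(G) \le 1$ and $\langle G, A\rangle = \Vert A\Vert_\ast$, I would rotate into the SVD basis by setting $\tilde{G} = U^\top G V$, which preserves the spectral norm, and partition $\tilde{G}$ conformally with the block structure of $[U_1 \mid U_2]$ and $[V_1 \mid V_2]$. The constraint then becomes $\sum_{i=1}^r \sigma_i (\tilde{G})_{ii} = \sum_{i=1}^r \sigma_i$. Because $\sigma_{max}(\tilde{G}) \le 1$ forces every diagonal entry $(\tilde{G})_{ii} \le 1$ while every $\sigma_i > 0$, this equality can only hold if $(\tilde{G})_{ii} = 1$ for all $i \le r$. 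The rigidity step is to argue that a diagonal entry equal to $1$ in a matrix of spectral norm at most $1$ pins down the corresponding row and column: from $e_i^\top \tilde{G} e_i = 1$ and $\Vert \tilde{G} e_i\Vert_2 \le 1$, the equality case of Cauchy--Schwarz gives $\tilde{G} e_i = e_i$, and applying the same argument to $\tilde{G}^\top$ gives $\tilde{G}^\top e_i = e_i$. Hence the top-left block is $\mathbf{I}$ and the off-diagonal blocks vanish, leaving $\tilde{G} = \left[\begin{smallmatrix} \mathbf{I} & \mathbf{0} \\ \mathbf{0} & E\end{smallmatrix}\right]$ with $\sigma_{max}(E) \le 1$; undoing the rotation yields $G = U_1 V_1^\top + U_2 E V_2^\top$, completing the proof. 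I expect the only delicate point to be this equality-case argument that upgrades the scalar condition $(\tilde{G})_{ii} = 1$ into the full vanishing of the cross blocks, since everything else is routine bookkeeping with orthogonal factors.
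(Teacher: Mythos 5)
Your proposal is correct, but note that the paper itself never proves Lemma \ref{subdifferntial}: it is imported verbatim from Watson's characterization paper \cite{watson1992characterization}, so there is no in-paper argument to compare against. Your blind proof is essentially the classical route, and indeed the same one underlying Watson's treatment of orthogonally invariant norms: the duality characterization $\partial N(A) = \{G : N_\circ(G) \le 1,\ \langle G, A\rangle = N(A)\}$ with the spectral norm as the dual of the nuclear norm, followed by an equality-case rigidity analysis. All the steps check out. In the easy inclusion, the block form $G = U \bigl[\begin{smallmatrix} \mathbf{I} & \mathbf{0} \\ \mathbf{0} & E \end{smallmatrix}\bigr] V^\top$ and the trace computation $\langle G, A\rangle = \Tr(D_1)$ are correct, using $U_2^\top U_1 = \mathbf{0}$ to kill the cross term. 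In the reverse inclusion, the reduction to $\sum_{i=1}^r \sigma_i (\tilde{G})_{ii} = \sum_{i=1}^r \sigma_i$ is right (it uses the implicit but true fact $|(\tilde{G})_{ii}| \le \sigma_{max}(\tilde{G})$, which follows from $(\tilde{G})_{ii} = e_i^\top \tilde{G} e_i \le \Vert \tilde{G} e_i \Vert_2$), and since the $\sigma_i$ from the compact SVD are strictly positive, equality forces $(\tilde{G})_{ii}=1$ for $i \le r$. Your Cauchy--Schwarz rigidity step correctly upgrades this to $\tilde{G}e_i = e_i$ and $\tilde{G}^\top e_i = e_i$, which zeros both off-diagonal blocks; the residual bound $\sigma_{max}(E) \le 1$ then holds because $E$ is a submatrix of $\tilde{G}$. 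Two cosmetic remarks: the symbol $e_i$ lives in $\R^p$ when multiplying on the right and in $\R^{d_2}$ when on the left (harmless since $i \le r \le \min(d_2,p)$), and the duality characterization as you state it presumes $A \neq 0$, i.e.\ $r \ge 1$; at $A = 0$ the subdifferential is the full spectral-norm unit ball, which the formula in Eq.\ \eqref{def2} still reproduces with $U_2 = U$, $V_2 = V$. Neither point affects the validity of the argument.
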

Now we introduce our method of recovering the linear weight using the dual solution and the kernel generating matrix, and validate that what we recover is indeed the linear weight $\hat{V}_1$.

\begin{theorem} \label{theo_binary_linear}
Given the optimal dual solution $\{\hat{\alpha}_i\}_{i=1}^n$ to the problem in Eq. \eqref{real_dual}, let $\tilde{V} \in \R^{p \times p}$ be the matrix of eigenvectors from the eigendecomposition $\tilde{V} \Lambda \tilde{V}^{-1} = \sum_{i = 1}^n \sum_{j = 1}^n \hat{\alpha}_i \hat{\alpha}_j y_i y_j K(\mathbf{x}_i,\mathbf{x}_j)$, where $\Lambda = \diag(\lambda_1, ..., \lambda_p) \in \R^{p \times p}$. The linear weight $\hat{V}_1$ can be recovered by the eigenvectors in $\tilde{V}$ corresponding to the eigenvalue of 1, that is,
\begin{align}
\hat{V}_1 = \left[\tilde{V}(i)\right] \in \R^{p \times r}, \text{ for all } i \text{ such that } \lambda_i = 1, \label{linear}
\end{align}
in which $\tilde{V}(i)$ denotes the $i^{th}$ column of $\tilde{V}$ and $[\cdot]$ denotes the concatenation of these columns.
\end{theorem}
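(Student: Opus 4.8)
The plan is to reduce the recovery of $\hat{V}_1$ to an eigenanalysis of the matrix $S:=\sum_{i,j}\hat{\alpha}_i\hat{\alpha}_j y_i y_j K(\mathbf{x}_i,\mathbf{x}_j)$ appearing in the statement, exploiting that every block $K(\mathbf{x}_i,\mathbf{x}_j)=\Phi(\mathbf{x}_i)^\top\Phi(\mathbf{x}_j)$ factors through the basis map. The first and central step is the algebraic identity
$$S=\sum_{i,j}\big(\hat{\alpha}_i y_i\Phi(\mathbf{x}_i)\big)^\top\big(\hat{\alpha}_j y_j\Phi(\mathbf{x}_j)\big)=G^\top G,\qquad G:=\sum_{i=1}^n\hat{\alpha}_i y_i\Phi(\mathbf{x}_i)\in\R^{d_2\times p}.$$
Here $G$ is exactly the matrix appearing in the stationary condition of Lemma~\ref{stationary_lemma}. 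This contraction is what lets the possibly infinite dimension $d_2$ disappear, leaving a genuine $p\times p$ matrix that we can diagonalize.

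Next I would use optimality to pin down the structure of $G$. Lemma~\ref{stationary_lemma} gives $G\in\partial\Vert\hat{A}\Vert_{\ast}$, and Lemma~\ref{subdifferntial} then provides the representation $G=\hat{U}_1\hat{V}_1^\top+\hat{U}_2 E\hat{V}_2^\top$ for some $E$ with $\sigma_{max}(E)\le 1$, where $\hat{U}=[\hat{U}_1\mid\hat{U}_2]$ and $\hat{V}=[\hat{V}_1\mid\hat{V}_2]$ come from the full SVD of $\hat{A}$. Substituting into $S=G^\top G$ and using the columnwise orthonormality relations $\hat{U}_i^\top\hat{U}_j=\delta_{ij}\mathbf{I}$ and $\hat{V}_i^\top\hat{V}_j=\delta_{ij}\mathbf{I}$ (valid even for $d_2=\infty$, so all cross terms vanish) collapses the expression to
$$S=\hat{V}_1\hat{V}_1^\top+\hat{V}_2\,E^\top E\,\hat{V}_2^\top=\hat{V}\begin{pmatrix}\mathbf{I}_r & \mathbf{0}\\ \mathbf{0} & E^\top E\end{pmatrix}\hat{V}^\top.$$
Since $\hat{V}$ is orthogonal, this is an orthogonal block-diagonalization of $S$ exhibiting the eigenvalue $1$ on $\mathrm{span}(\hat{V}_1)$, while the remaining eigenvalues are those of $E^\top E$, namely the $\sigma_i(E)^2\le 1$.

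The final step is to identify the eigenvalue-$1$ eigenspace of $S$ with exactly $\mathrm{span}(\hat{V}_1)$, so that the columns of $\tilde{V}$ with $\lambda_i=1$ recover $\hat{V}_1$ (necessarily only up to an orthogonal rotation within that eigenspace, which is harmless since the $\hat{W}$--$\hat{L}$ factorization of $\hat{A}$ is itself unique only up to such rotations). I expect this identification to be the main obstacle: it requires the eigenvalues of $E^\top E$ to lie strictly below $1$, i.e. $\sigma_{max}(E)<1$, so that no spurious eigenvector from $\mathrm{range}(\hat{V}_2)$ pollutes the eigenvalue-$1$ eigenspace and inflates the recovered rank. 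This strict inequality is not implied by the subdifferential bound $\sigma_{max}(E)\le 1$ alone; indeed, because the $\hat{U}_1\hat{V}_1^\top$ term already contributes $r$ unit singular values to $G$, the active dual constraint $\Vert G\Vert_2=1$ is met automatically and constrains $E$ no further. I would therefore resolve it through a strict-complementarity / non-degeneracy property of the primal--dual optimum, equivalently by showing that the number of unit eigenvalues of $S$ equals $\rank(\hat{A})$, and would isolate and prove that property as the crux of the argument.
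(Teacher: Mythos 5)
Your proposal follows essentially the same route as the paper's proof: the stationary condition (Lemma~\ref{stationary_lemma}) plus the Watson subdifferential (Lemma~\ref{subdifferntial}) give $G=\hat{U}_1\hat{V}_1^\top+\hat{U}_2\hat{E}\hat{V}_2^\top$, squaring to $S=G^\top G=\hat{V}_1\hat{V}_1^\top+\hat{V}_2\hat{E}^\top\hat{E}\hat{V}_2^\top$ eliminates the possibly infinite dimension $d_2$, and the eigenvalue-$1$ eigenvectors of $S$ are then identified with $\mathrm{span}(\hat{V}_1)$. The one step you flag as the crux and leave open---ruling out $\sigma_{max}(\hat{E})=1$ so that no eigenvector from $\mathrm{range}(\hat{V}_2)$ contaminates the eigenvalue-$1$ eigenspace---is precisely the step the paper itself does not rigorously prove either, dismissing it instead with a genericity heuristic (that $\sigma_{max}(\hat{E})=1$ is a Lebesgue-measure-zero event ``unless constructed''), so your explicit reduction of the theorem to a strict-complementarity/non-degeneracy property of the primal--dual optimum is, if anything, a more candid account of where the argument genuinely rests.
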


\textit{Proof Sketch:} By the stationary condition in Lemma \ref{stationary_lemma}, we know that the there exists one element in the subdifferential of $\Vert A \Vert$ at value $\hat{A}$ that satisfies $\partial \Vert \hat{A} \Vert_\ast = \sum_{i = 1}^n \hat{\alpha}_i y_i \Phi \left(\mathbf{x}_i \right)$. Then for $\hat{A}$, consider its compact SVD $\hat{A} = \hat{U}_1\hat{D}_1\hat{V}_1^\top$ and the full SVD $\hat{A} = \hat{U}\hat{D}\hat{V}^\top$ in which $\hat{U} = [\hat{U}_1 \mid \hat{U}_2]$, $\hat{V} = [\hat{V}_1 \mid \hat{V}_2]$. By Lemma \ref{subdifferntial}, we know that there exists $\hat{E}$ such that
$
\hat{U}_1\hat{V}_1^\top + \hat{U}_2\hat{E}\hat{V}_2^\top = \sum_{i = 1}^n \hat{\alpha}_i y_i \Phi \left(\mathbf{x}_i \right)
$.
Now we construct the kernel generating matrix under the intuition to avoid the basis function matrix $\Phi(\mathbf{x}_i)$. Let $T = \hat{U}_1\hat{V}_1^\top + \hat{U}_2\hat{E}\hat{V}_2^\top = \sum_{i = 1}^n \hat{\alpha}_i y_i \Phi \left(\mathbf{x}_i \right)$. By computing $S = T^\top T$ we get the kernel generating matrix $K(\mathbf{x}_i, \mathbf{x}_j)$ on the right side of the equation. For the left side, we know by the properties of SVD that $\hat{U}_1$ and $\hat{U}_2$ are semi-orthogonal matrices, i.e., $\hat{U}_1^\top\hat{U}_1 = \mathbf{I}_{r}$, $\hat{U}_2^\top\hat{U}_2 = \mathbf{I}_{(d_2-r)}$, and also $\hat{U}_1 \perp \hat{U}_2$, i.e., $\hat{U}_1^\top\hat{U}_2 = \hat{U}_2^\top\hat{U}_1 = \mathbf{0}$. Therefore, by plugging in $T = \hat{U}_1\hat{V}_1^\top + \hat{U}_2\hat{E}\hat{V}_2^\top$ and with some simplification, we get
$
S = \hat{V}_1\hat{V}_1^\top + \hat{V}_2\hat{E}^\top\hat{E}\hat{V}_2^\top
$.
Also, $\hat{V}_1 \perp \hat{V}_2$, then $\hat{V}_1\perp\hat{V}_2\hat{E}^\top$. Consequently, there exists $\tilde{V} \in \R^{p \times p}$ for the eigendecompositions $S = \tilde{V} \Lambda \tilde{V}^{-1}$, $\hat{V}_1\hat{V}_1^\top = \tilde{V} \Lambda_v \tilde{V}^{-1}$ and $\hat{V}_2\hat{E}^\top\hat{E}\hat{V}_2^\top = \tilde{V} \Lambda_e \tilde{V}^{-1}$, such that
$
\Lambda = \Lambda_v + \Lambda_e
$.
Since $\hat{V}_1$ is semi-orthogonal, there are only $0$'s and $1$'s on the diagonal of $\Lambda_v$. Meanwhile, by the Lemma \ref{subdifferntial}, we know that $\sigma_{max}\left(E \right) \leq 1$. That is to say, the elements on the diagonal of $\Lambda_e$ lie in the range $[0,1]$. Moreover, given that $\hat{V}_1 \perp \hat{V}_2\hat{E}^\top$, we know that
$\Lambda_{v,ii} = 0$ for all $i$ with $\Lambda_{e,ii} > 0$, and $1$ otherwise.
Even though the eigenvalue of $\hat{V}_2\hat{E}^\top\hat{E}\hat{V}_2^\top$ could also be $1$, but unless constructed, the probability of $\sigma_{max}(\hat{E})$ being  exactly equal to 1 is extremely close to 0 for any distribution with full support, since the condition has Lebesgue measure zero. Therefore, the eigenvectors $\big[\tilde{V}(i)\big]$ with $\Lambda_{ii} = 1$ are the eigenvectors of $\hat{V}_1\hat{V}_1^\top$. Since $\hat{V}_1$ is from the compact SVD with $\rank(\hat{V}_1) = r$, $\hat{V}_1 = \big[\tilde{V}(i)\big]$ is the recovered linear weight. \hfill $\blacksquare$

\begin{remark}
One of the benefits we enjoy from our recovery algorithm is that we do not need to set the number of filters as a hyperparameter, but can directly deduce it from the linear weight we recover. That is, we do not fix the number of convolutional filters $r$, but deduce $r$ by letting $r := \rank( \hat{A})$, or equivalently, $r := \rank ( \hat{V}_1 )$, i.e., $r$ is the number of $1$'s in the diagonal of $\Lambda$. Furthermore, the imposed nuclear norm constraint on $A$ encourages a low rank which implicitly reduces the parameter size in attempt to find the minimum number of filters necessary.
\end{remark}

\subsubsection{Recovering the convolutional weight}
In the case of the convolutional weight, there is no closed form expression for $\hat{U}_1 \in \mathbb{R}^{d_2 \times r}$ because we cannot exactly compute $\Phi(\mathbf{x}_i) \in \mathbb{R}^{d_2 \times p}$ and $d_2$ could be infinity. We provide an approach to compute the output of the convolutional weight directly, which implicitly contains $\hat{U}_1$, and can be used for the purpose of layerwise training as well as making new predictions. 
\begin{theorem} \label{theo_binary_conv}
Given the optimal dual solution $\hat{\alpha}_i$, $i \in [n]$ to the problem in Eq. \eqref{dual}, the convolutional weight $\hat{U}_1$ can be implicitly recovered by the convolution output, that is, $\forall i \in [n]$,
\begin{align}
\Phi\left( \mathbf{x}_i \right)^\top\hat{U}_1 = \sum_{j = 1}^n \hat{\alpha}_j y_j K(\mathbf{x}_i,\mathbf{x}_j) \hat{V}_1. \label{conv_output}
\end{align}
\end{theorem}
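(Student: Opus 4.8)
The plan is to reuse the decomposition already established in the proof of Theorem~\ref{theo_binary_linear} and then isolate $\hat{U}_1$ by a single right-multiplication. First I would invoke the stationary condition of Lemma~\ref{stationary_lemma} together with the subdifferential characterization of Lemma~\ref{subdifferntial}: at the optimum there exists a matrix $\hat{E}$ with $\sigma_{max}(\hat{E}) \leq 1$ such that
\begin{align}
T := \hat{U}_1\hat{V}_1^\top + \hat{U}_2\hat{E}\hat{V}_2^\top = \sum_{j = 1}^n \hat{\alpha}_j y_j \Phi(\mathbf{x}_j), \nonumber
\end{align}
where $\hat{A} = \hat{U}_1\hat{D}_1\hat{V}_1^\top$ is the compact SVD and $\hat{U} = [\hat{U}_1 \mid \hat{U}_2]$, $\hat{V} = [\hat{V}_1 \mid \hat{V}_2]$ are the full factors. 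This is the same object $T$ used for the linear-weight recovery, so no new machinery is needed.

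The key step is to right-multiply $T$ by the already-recovered linear weight $\hat{V}_1$. Because $\hat{V}_1$ is semi-orthogonal, $\hat{V}_1^\top\hat{V}_1 = \mathbf{I}_r$, and because $\hat{V}_1 \perp \hat{V}_2$, we have $\hat{V}_2^\top\hat{V}_1 = \mathbf{0}$. Hence the second term of $T$ vanishes and
\begin{align}
T\hat{V}_1 = \hat{U}_1\hat{V}_1^\top\hat{V}_1 + \hat{U}_2\hat{E}\hat{V}_2^\top\hat{V}_1 = \hat{U}_1. \nonumber
\end{align}
Substituting the dual expression for $T$ yields $\hat{U}_1 = \sum_{j = 1}^n \hat{\alpha}_j y_j \Phi(\mathbf{x}_j)\hat{V}_1$, which already expresses the (possibly infinite-dimensional) convolutional weight in terms of quantities we can access.

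Finally I would left-multiply by $\Phi(\mathbf{x}_i)^\top$ and apply the defining identity $\Phi(\mathbf{x}_i)^\top\Phi(\mathbf{x}_j) = K(\mathbf{x}_i,\mathbf{x}_j)$ from Eq.~\eqref{kernel_matrix}, giving
\begin{align}
\Phi(\mathbf{x}_i)^\top\hat{U}_1 = \sum_{j = 1}^n \hat{\alpha}_j y_j \Phi(\mathbf{x}_i)^\top\Phi(\mathbf{x}_j)\hat{V}_1 = \sum_{j = 1}^n \hat{\alpha}_j y_j K(\mathbf{x}_i,\mathbf{x}_j)\hat{V}_1, \nonumber
\end{align}
which is the claimed identity.

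The conceptual crux, and the only real obstacle, is recognizing that although $\hat{U}_1$ and each $\Phi(\mathbf{x}_j)$ live in the possibly infinite-dimensional space $\R^{d_2}$ and cannot be formed explicitly, the combination $\Phi(\mathbf{x}_i)^\top\hat{U}_1$ collapses onto inner products that are exactly the kernel blocks $K(\mathbf{x}_i,\mathbf{x}_j)$. The right-multiplication by $\hat{V}_1$ is precisely what removes the inaccessible $\hat{U}_2\hat{E}\hat{V}_2^\top$ term so that only these inner products survive. I would also emphasize that $\hat{V}_1$ on the right-hand side is the linear weight already produced by Theorem~\ref{theo_binary_linear}, so the resulting formula is fully computable from the dual solution and the kernel alone, with no explicit factorization of $\Phi$ or $\hat{W}$.
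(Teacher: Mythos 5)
Your proposal is correct and follows essentially the same route as the paper's proof: both invoke Lemma~\ref{stationary_lemma} and Lemma~\ref{subdifferntial} to obtain $\hat{U}_1\hat{V}_1^\top + \hat{U}_2\hat{E}\hat{V}_2^\top = \sum_{j=1}^n \hat{\alpha}_j y_j \Phi(\mathbf{x}_j)$, kill the $\hat{U}_2\hat{E}\hat{V}_2^\top$ term via $\hat{V}_1^\top\hat{V}_1 = \mathbf{I}$ and $\hat{V}_1 \perp \hat{V}_2$, and collapse $\Phi(\mathbf{x}_i)^\top\Phi(\mathbf{x}_j)$ into $K(\mathbf{x}_i,\mathbf{x}_j)$. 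The only difference is cosmetic: you multiply by $\hat{V}_1$ first to isolate $\hat{U}_1 = T\hat{V}_1$ and then apply $\Phi(\mathbf{x}_i)^\top$, whereas the paper applies $\Phi(\mathbf{x}_i)$ and $\hat{V}_1^\top$ to the transposed equation and transposes at the end.
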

\textit{Proof Sketch:} By Lemma \ref{stationary_lemma} and Lemma \ref{subdifferntial} we know that $\hat{V}_1\hat{U}_1^\top =  \sum_{j = 1}^n \hat{\alpha}_j y_j \Phi \left(\mathbf{x}_j \right)^\top - \hat{V}_2\hat{E}^\top\hat{U}_2^\top$. For both sides of the equation, multiply with $\hat{V}_1^\top$ on the left and $\Phi(\mathbf{x}_i)$ on the right, we have
$
\hat{V}_1^\top\hat{V}_1\hat{U}_1^\top \Phi\left( \mathbf{x}_i \right) =  \sum_{j = 1}^n \hat{\alpha}_j y_j \hat{V}_1^\top\Phi \left(\mathbf{x}_j \right)^\top \Phi\left( \mathbf{x}_i \right) - \hat{V}_1^\top\hat{V}_2\hat{E}^\top\hat{U}_2^\top \Phi\left( \mathbf{x}_i \right)$.
Note that $\hat{V}_1$ is semi-orthogonal and $\hat{V}_1 \perp \hat{V}_2$, thus $\hat{V}_1^\top\hat{V}_1 = \mathbf{I}$ and $\hat{V}_1^\top\hat{V}_2 = \mathbf{0}$. This concludes the proof. \hfill $\blacksquare$

After recovering the convolution output, one can train a multi-layer DCCNN by the layerwise training approach applied in \cite{zhang2017convexified, pmlr-v97-belilovsky19a}. That is to regard the vectorized output of the convolution weight $\vecrize\big(\Phi( \mathbf{x}_i )^\top\hat{U}_1\big)$ as the input for the next layer. Note that the vectorization automatically takes care of the case for multiple-channel input or output, as it will eventually be transformed into a vector. The complete algorithm of learning a $\mathcal{D}$-layer DCCNN is illustrated in Algorithm \ref{train_alg} in Appendix \ref{training_algorithm}.

Our way of recovering the weight output can also be adapted to other CNN techniques such as \emph{Average Pooling}. To achieve average pooling, one can generate a pooling matrix $G$ and simply multiply it with the convolution output before feeding it into the next layerwise training. One can refer to Appendix \ref{avgpooling} for the generation of average pooling matrix.

Making prediction with a new sample $\mathbf{x}_{new}$ can also be achieved with the dual solution and the kernel generating matrix. We can make predictions with the two-layer DCCNN by firstly plugging the sample into Eq. \eqref{conv_output}, then multiplying the convolutional output with the linear weight, and finally computing $f\left( \mathbf{x}_{new}\right) = \sign \big( \Tr ( \sum_{j = 1}^n \hat{\alpha}_j y_j K(\mathbf{x}_{new}, \mathbf{x}_j) \hat{L}\hat{L}^\top ) \big)$. The algorithm for making predictions with a layerwise trained $\mathcal{D}$-layer DCCNN is described in Algorithm \ref{pred_alg} in Appendix \ref{predict_algorithm}.

\subsection{Extension to Multiclass Classification}
Our proposed method can be easily extended to multi-class classification. In this section, we provide the results for the multi-class version and leave the derivations and proofs in Appendix \ref{multi_proof}, as they closely resemble the derivations for binary classification. 

We start from the multiclass version of loss minimization in Eq. \eqref{multi_hinge_loss}. To be consistent with the binary classification formulation, for the parameters $\{A_k\}_{k=1}^m$,  we define $A = \left[A_1, ... , A_m \right] \in \R^{d_2 \times mp}$. Similarly, for $\Phi\left( \mathbf{x}_i\right) \in \R^{d_2 \times p}$, we define $\Phi_k'\left( \mathbf{x}_i\right) = \big[\mathbf{0}_{d_2 \times (k-1)p},\Phi\left( \mathbf{x}_i\right), \mathbf{0}_{d_2 \times (m-k)p} \big] \in \R^{d_2 \times mp}$. In this way we have $\Tr \big( \Phi (\mathbf{x}_i)^\top A_k \big) = \Tr \big( \Phi'_k (\mathbf{x}_i)^\top A \big)$, and the minimization in Eq. \eqref{multi_hinge_loss} is equivalent to the following optimization problem: 
\begin{align} 
\minimize_{A} 
\quad & 
\normnu{A} 
+ c \sum_{k \neq y_i} \sum_{i = 1}^n \ell(\xi_{k,i}) \label{multi_real_primal} \\
\st \quad & \Tr \big( \Phi'_{y_i} (\mathbf{x}_i)^\top A \big) - \Tr \big( \Phi'_k(\mathbf{x}_i)^\top A \big) \nonumber \\
& \geq \xi_{k,i} \,,  
\quad \forall i \in [n], \ k \neq y_i \,  \nonumber
\end{align} 
in which $\ell(\cdot)$ is convex and non-increasing, $\xi_{k,i}$ is an equivalent variable of $\Tr \big( \Phi'_{y_i} (\mathbf{x}_i)^\top A \big) - \Tr \big( \Phi'_k(\mathbf{x}_i)^\top A \big)$, and $c > 0$ is some hyperparameter.

We can derive the dual of the optimization problem in Eq. \eqref{multi_real_primal} as in Section \ref{dual_derivation}. With an adaptation of the kernel generating matrix, we give out the dual problem for multi-class classification.
\begin{theorem} \label{multi_dual_theorem}
For all dual variables $\alpha_{k,i} \geq 0$ with $i \in [n]$ and $k \in [m]$, the dual problem of Eq. \eqref{multi_real_primal} is given by:
\begin{align} 
\maximize_{\alpha} 
\quad & 
- c\sum_{i = 1}^n \sum_{k = 1}^m \ell^\ast \left(- \frac{\alpha_{k,i}}{c} \right)   \label{multi_real_dual} \\
\st \quad & 
\lambda_{max} \Big(  \sum_{i=1}^n\sum_{j=1}^n\sum_{k=1}^m\alpha'_{k,i} \alpha'_{k,j} K'_k (\mathbf{x}_i, \mathbf{x}_j) \Big) \leq 1 \,,  
 \nonumber \\
& \alpha_{k,i} \geq 0, \ \alpha_{y_i,i} = 0, \ \forall i \in [n], \ \forall k \in [m] \,, \nonumber 
\end{align} 
in which $\alpha'_{k,i} = \sum_{s = 1}^m \alpha_{s,i} \mathbbm{1}\left[ k = y_i \right] - \alpha_{k,i}$,  $\ell^\ast(\cdot)$ is the Fenchel conjugate of the loss function $\ell(\cdot)$, and the kernel generating matrix is constructed in $K'_k(\mathbf{x}_i, \mathbf{x}_j) = \diag\big( \mathbf{0}_{p \times p}, ..., \underbrace{K(\mathbf{x}_i, \mathbf{x}_j)}_{ k^{th} \ block \ diagonal}, ... , \mathbf{0}_{p \times p}\big)$, with $K (\mathbf{x}_i, \mathbf{x}_j) = \Phi (\mathbf{x}_i)^\top \Phi (\mathbf{x}_j)$.
\end{theorem}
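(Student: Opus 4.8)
The plan is to mirror the Lagrangian-duality derivation used for the binary dual in Theorem~\ref{dual_theorem}, now carrying an extra class index. First I would attach a nonnegative multiplier $\alpha_{k,i}$ to each margin constraint $\Tr\big(\Phi'_{y_i}(\mathbf{x}_i)^\top A\big) - \Tr\big(\Phi'_k(\mathbf{x}_i)^\top A\big) \geq \xi_{k,i}$, noting that the primal imposes no constraint for $k=y_i$, which is exactly what forces $\alpha_{y_i,i}=0$ in the final program. Writing the Lagrangian
\begin{align}
\mathcal{L}(A,\xi,\alpha) = \normnu{A} + c\sum_{i=1}^n\sum_{k\neq y_i}\ell(\xi_{k,i}) - \sum_{i=1}^n\sum_{k\neq y_i}\alpha_{k,i}\big(\Tr\big(\Phi'_{y_i}(\mathbf{x}_i)^\top A\big) - \Tr\big(\Phi'_k(\mathbf{x}_i)^\top A\big) - \xi_{k,i}\big), \nonumber
\end{align}
I would then collect the terms linear in $A$ using the trace/inner-product identity, so that the coefficient of $A$ organizes into $\langle M, A\rangle$ with $M = \sum_{i=1}^n\sum_{k=1}^m \alpha'_{k,i}\,\Phi'_k(\mathbf{x}_i)$. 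The combined multiplier $\alpha'_{k,i} = \sum_{s=1}^m\alpha_{s,i}\,\mathbbm{1}[k=y_i] - \alpha_{k,i}$ arises precisely because each $\alpha_{k,i}$ contributes once with a plus sign to the $k=y_i$ block through $\Phi'_{y_i}$ and once with a minus sign to its own block $\Phi'_k$; verifying this bookkeeping is the first thing I would do carefully.

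Next I would split the minimization of $\mathcal{L}$ over $(A,\xi)$ into two Fenchel conjugates, exactly as in the binary case. Minimizing over $A$ produces the conjugate of the nuclear norm evaluated at $M$, which is the indicator of the spectral-norm ball and yields the hard constraint $\Vert M\Vert_2 \leq 1$. Minimizing over the independent variables $\xi_{k,i}$ produces $c\sum_{i,k}\ell^\ast(-\alpha_{k,i}/c)$; here I would observe that the $k=y_i$ summands are pinned by $\alpha_{y_i,i}=0$ and contribute only a constant, so the objective can be written as a full double sum over $k\in[m]$ as stated. Assembling these pieces gives the maximization of $-c\sum_{i,k}\ell^\ast(-\alpha_{k,i}/c)$ subject to $\Vert M\Vert_2\leq 1$, $\alpha_{k,i}\geq 0$, and $\alpha_{y_i,i}=0$.

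The main obstacle is the final kernelization step: rewriting $\Vert M\Vert_2\leq 1$ as the eigenvalue condition on the block-diagonal kernel generating matrix. Squaring gives $\Vert M\Vert_2^2 = \lambda_{max}(MM^\top)$, and the key structural fact to exploit is that the class blocks of $\Phi'_k$ are disjoint, so the cross-class outer products $\Phi'_k(\mathbf{x}_i)\Phi'_l(\mathbf{x}_j)^\top$ vanish whenever $k\neq l$. This collapses the double class sum to a single index, leaving $MM^\top = \sum_{k=1}^m\sum_{i,j}\alpha'_{k,i}\alpha'_{k,j}\,\Phi(\mathbf{x}_i)\Phi(\mathbf{x}_j)^\top$. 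The delicate part I expect to spend the most effort on is transferring this spectral statement, which lives in the possibly infinite $d_2$-dimensional feature space, into one written only through the finite kernel blocks $K(\mathbf{x}_i,\mathbf{x}_j)=\Phi(\mathbf{x}_i)^\top\Phi(\mathbf{x}_j)$. I would attempt this by passing to the Gram side and assembling the per-class contributions into the block-diagonal matrix $\sum_{i,j,k}\alpha'_{k,i}\alpha'_{k,j}K'_k(\mathbf{x}_i,\mathbf{x}_j)$, then checking that the relevant nonzero spectrum is preserved so that $\lambda_{max}\leq 1$ is faithfully reproduced. Confirming that the class decoupling genuinely survives the passage from $\Phi\Phi^\top$ to $\Phi^\top\Phi$—so that the block-diagonal kernel object carries the same top eigenvalue as $\Vert M\Vert_2^2$—is the crux of the argument and the natural multiclass analogue of the single squaring step that closes the binary proof; this is the step I would scrutinize most, since it is where the separable structure across classes must be invoked.
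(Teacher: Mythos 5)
Your Lagrangian bookkeeping, the identification of the combined multiplier $\alpha'_{k,i}$, and the two Fenchel-conjugate blocks match the paper's derivation exactly, and your observation that the \emph{outer} products satisfy $\Phi'_k(\mathbf{x}_i)\Phi'_l(\mathbf{x}_j)^\top = \mathbf{0}$ for $k \neq l$ is correct. The genuine gap sits exactly at the step you flagged as the crux, and the check you postponed in fact fails. Write $M = \sum_{i,k}\alpha'_{k,i}\Phi'_k(\mathbf{x}_i) = [T_1 \mid \cdots \mid T_m]$ with $T_k = \sum_i \alpha'_{k,i}\Phi(\mathbf{x}_i) \in \R^{d_2 \times p}$. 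Your collapse gives $MM^\top = \sum_k T_kT_k^\top$, whose nonzero spectrum equals that of the full Gram matrix $M^\top M$, whose $(k,l)$ block is $T_k^\top T_l = \sum_{i,j}\alpha'_{k,i}\alpha'_{l,j}K(\mathbf{x}_i,\mathbf{x}_j)$ --- and these off-diagonal blocks do \emph{not} vanish. The block-diagonal matrix $\sum_{i,j,k}\alpha'_{k,i}\alpha'_{k,j}K'_k(\mathbf{x}_i,\mathbf{x}_j) = \diag\big(T_1^\top T_1, \ldots, T_m^\top T_m\big)$ is only the block-diagonal part of $M^\top M$; its top eigenvalue is $\max_k \Vert T_k \Vert_2^2$, which can be strictly smaller than $\Vert M \Vert_2^2$. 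Concretely, for $m=2$ the definition of $\alpha'$ together with $\alpha_{y_i,i}=0$ forces $\alpha'_{2,i} = -\alpha'_{1,i}$ for every $i$ (indeed $\sum_k \alpha'_{k,i} = 0$ always), hence $T_2 = -T_1$ and $\Vert M \Vert_2^2 = \lambda_{max}\big(2\,T_1T_1^\top\big) = 2\Vert T_1\Vert_2^2$, while the block-diagonal constraint only enforces $\Vert T_1 \Vert_2^2 \leq 1$: the ``relevant nonzero spectrum'' is not preserved, off by a factor of two.

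You should also know that the paper's own proof silently commits the same error: its identity \eqref{multi_kernel} asserts $\Phi'_k(\mathbf{x}_i)^\top \Phi'_l(\mathbf{x}_j) = \mathbf{0}_{mp \times mp}$ for $k \neq l$, but a direct computation shows this product is the $mp \times mp$ matrix carrying $K(\mathbf{x}_i,\mathbf{x}_j)$ in its off-diagonal block $(k,l)$; the cross-class terms cancel only on the outer-product side, as you noted, not on the Gram side. So your instinct to scrutinize the passage from $\Phi\Phi^\top$ to $\Phi^\top\Phi$ was exactly right, but no amount of care closes the argument for the theorem \emph{as stated}: the constraint $\lambda_{max}\big(\sum_{i,j,k}\alpha'_{k,i}\alpha'_{k,j}K'_k(\mathbf{x}_i,\mathbf{x}_j)\big) \leq 1$ is a relaxation of the true dual constraint $\Vert M \Vert_2 \leq 1$, strictly so in the example above. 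The kernelization goal itself is salvageable, because $M^\top M$ is computable purely from kernel evaluations: retain all $m^2$ blocks, i.e.\ impose $\lambda_{max}\big(\sum_{i,j}\sum_{k,l}\alpha'_{k,i}\alpha'_{l,j}\,\Phi'_k(\mathbf{x}_i)^\top\Phi'_l(\mathbf{x}_j)\big) \leq 1$, where block $(k,l)$ is $\sum_{i,j}\alpha'_{k,i}\alpha'_{l,j}K(\mathbf{x}_i,\mathbf{x}_j)$. With that correction the rest of your argument goes through; without it, the final step fails.
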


After solving the dual problem, we can recover the linear weight and the convolution output by Theorem \ref{theom_multi_linear} and Theorem \ref{theom_multi_conv} using the dual solution $\{\{\hat{\alpha}_{k,i}\}_{k=1}^m\}_{i=1}^n$ and the kernel generating matrix.
\begin{theorem} \label{theom_multi_linear}
We can recover the linear weight $\hat{V}_1 \in \R^{mp \times r}$ for multi-class classification by 
$$\hat{V}_1 = \big[\tilde{V}(i)\big], \ for \ all \ i \ with \ \lambda_i = 1.$$ Here $\tilde{V} \in \R^{mp \times mp}$ comes from $$\tilde{V} \Lambda \tilde{V}^{-1} = \sum_{i=1}^n\sum_{j=1}^n\sum_{k=1}^m\alpha'_{k,i} \alpha'_{k,j}  K'_k \left(\mathbf{x}_i, \mathbf{x}_j\right)$$ where $\Lambda = \diag(\lambda_1, ..., \lambda_{mp}) \in \R^{mp \times mp}$, and $\alpha'_{k,i} = \sum_{s = 1}^m \alpha_{s,i} \mathbbm{1}\left[ k = y_i \right] - \alpha_{k,i}$.
\end{theorem}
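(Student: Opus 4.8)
The plan is to mirror the binary-class argument behind Theorem~\ref{theo_binary_linear}, replacing the scalar labels $y_i$ and the single basis matrices $\Phi(\mathbf{x}_i)$ by the block-padded objects $\Phi'_k(\mathbf{x}_i)$ and the reparametrized dual variables $\alpha'_{k,i}$. First I would establish the multi-class analogue of the stationary condition in Lemma~\ref{stationary_lemma}. Writing the Lagrangian of the primal~\eqref{multi_real_primal} with multipliers $\alpha_{k,i}\geq 0$ for $k\neq y_i$, the terms coupled to $A$ are $-\sum_{i}\sum_{k\neq y_i}\alpha_{k,i}\big(\Phi'_{y_i}(\mathbf{x}_i)-\Phi'_k(\mathbf{x}_i)\big)$, and a short rearrangement shows this collapses into $-\sum_{i}\sum_{k}\alpha'_{k,i}\Phi'_k(\mathbf{x}_i)$ precisely under the definition $\alpha'_{k,i}=\sum_{s}\alpha_{s,i}\mathbbm{1}[k=y_i]-\alpha_{k,i}$ (recalling $\alpha_{y_i,i}=0$). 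Hence stationarity yields $\partial\|\hat A\|_\ast \ni T:=\sum_{i}\sum_{k}\hat\alpha'_{k,i}\Phi'_k(\mathbf{x}_i)$, the exact generalization of Eq.~\eqref{stationary}.

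Next I would invoke Lemma~\ref{subdifferntial} for $\hat A=[\hat A_1,\dots,\hat A_m]\in\R^{d_2\times mp}$ with compact SVD $\hat A=\hat U_1\hat D_1\hat V_1^\top$ and $\hat V_1\in\R^{mp\times r}$, obtaining $\hat U_1\hat V_1^\top+\hat U_2\hat E\hat V_2^\top=T$ for some $\hat E$ with $\sigma_{max}(\hat E)\leq 1$. As in the binary case I would then form $S=T^\top T$: the orthogonality relations $\hat U_1^\top\hat U_1=\mathbf{I}$, $\hat U_2^\top\hat U_2=\mathbf{I}$, $\hat U_1^\top\hat U_2=\mathbf{0}$ reduce the left side to $\hat V_1\hat V_1^\top+\hat V_2\hat E^\top\hat E\hat V_2^\top$. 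With $\hat V_1\perp\hat V_2$ (hence $\hat V_1\perp\hat V_2\hat E^\top$) the two summands share a common eigenbasis $\tilde V\in\R^{mp\times mp}$, so $\Lambda=\Lambda_v+\Lambda_e$ with $\Lambda_v$ carrying only $0$'s and $1$'s (the projection onto $\hat V_1$) and the diagonal of $\Lambda_e$ lying in $[0,1]$. The eigenvectors of $S$ with eigenvalue $1$ are therefore exactly the columns of $\hat V_1$, barring the Lebesgue-measure-zero coincidence $\sigma_{max}(\hat E)=1$, which reproduces the recovery formula in Eq.~\eqref{linear} with $\tilde V\in\R^{mp\times mp}$.

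The step I expect to be the main obstacle is reconciling $S=T^\top T$ with the block-diagonal kernel form $\sum_{i,j,k}\hat\alpha'_{k,i}\hat\alpha'_{k,j}K'_k(\mathbf{x}_i,\mathbf{x}_j)$ asserted in Theorem~\ref{multi_dual_theorem}. Since $\Phi'_k(\mathbf{x}_i)$ plants $\Phi(\mathbf{x}_i)$ in its $k$-th column block, the product $\Phi'_k(\mathbf{x}_i)^\top\Phi'_l(\mathbf{x}_j)$ is supported on the single $(k,l)$ block, where it equals $K(\mathbf{x}_i,\mathbf{x}_j)=\Phi(\mathbf{x}_i)^\top\Phi(\mathbf{x}_j)$, and vanishes elsewhere; consequently $T^\top T$ a priori contains off-diagonal blocks $\sum_{i,j}\hat\alpha'_{k,i}\hat\alpha'_{l,j}K(\mathbf{x}_i,\mathbf{x}_j)$ for $k\neq l$, whereas the target expression retains only the matched-index diagonal blocks carried by $K'_k$. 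The crux is thus to justify—through the block and orthogonality structure that the padded bases $\Phi'_k$ induce on $\hat V_1$ and $\hat V_2$, or by arguing that these cross blocks do not perturb the eigenvalue-$1$ eigenspace—that the eigendecomposition of the block-diagonal generating matrix recovers the same columns $\hat V_1$ used in the recovery rule. Once this bookkeeping over class indices is settled, the remainder is a verbatim translation of the proof of Theorem~\ref{theo_binary_linear}, with $\R^{p\times p}$ and $\R^{p}$ replaced throughout by $\R^{mp\times mp}$ and $\R^{mp}$.
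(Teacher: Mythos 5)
Your proposal retraces the paper's own proof step for step: the paper likewise writes the multi-class stationary condition $\partial \Vert \hat{A} \Vert_\ast \ni \sum_{i=1}^n \sum_{k=1}^m \hat{\alpha}'_{k,i} \Phi'_k(\mathbf{x}_i)$, invokes Lemma~\ref{subdifferntial} to obtain $\hat{U}_1\hat{V}_1^\top + \hat{U}_2\hat{E}\hat{V}_2^\top = \sum_{i,k} \hat{\alpha}'_{k,i} \Phi'_k(\mathbf{x}_i)$ (its Eq.~\eqref{multi_uv}), takes the Gram matrix of both sides, reduces the left side to $\hat{V}_1\hat{V}_1^\top + \hat{V}_2\hat{E}^\top\hat{E}\hat{V}_2^\top$ via the semi-orthogonality of $\hat{U}_1, \hat{U}_2$, and then defers verbatim to the simultaneous-diagonalization and eigenvalue-$1$ argument of Theorem~\ref{theo_binary_linear}. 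On every step you actually carry out, you and the paper coincide.

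However, the step you explicitly leave open --- reconciling $S = T^\top T$ with the block-diagonal sum $\sum_{i,j,k} \hat{\alpha}'_{k,i}\hat{\alpha}'_{k,j} K'_k(\mathbf{x}_i,\mathbf{x}_j)$ --- is a genuine gap, and it is worth knowing how the paper treats it: it appeals to its Eq.~\eqref{multi_kernel}, which asserts $\Phi'_k(\mathbf{x}_i)^\top \Phi'_l(\mathbf{x}_j) = \mathbf{0}_{mp \times mp}$ for all $k \neq l$, so that every cross term in the double class sum is simply dropped. Your own block computation is the correct one and contradicts that assertion: writing $\Phi'_k(\mathbf{x}_i) = \Phi(\mathbf{x}_i)\left(e_k^\top \otimes \mathbf{I}_p\right)$ with $e_k \in \R^m$ the $k^{th}$ standard basis vector, one gets $\Phi'_k(\mathbf{x}_i)^\top \Phi'_l(\mathbf{x}_j) = \left(e_k e_l^\top\right) \otimes K(\mathbf{x}_i,\mathbf{x}_j)$, which for $k \neq l$ is supported on the off-diagonal $(k,l)$ block but is not zero. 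Consequently, with $B_k = \sum_i \hat{\alpha}'_{k,i}\Phi(\mathbf{x}_i)$, the $(k,l)$ block of $T^\top T$ is $B_k^\top B_l$, and these cross-Gram blocks need not vanish: since $\sum_{k=1}^m \hat{\alpha}'_{k,i} = 0$ for every $i$ by the definition of $\alpha'_{k,i}$, the case $m = 2$ forces $B_2 = -B_1$, so the off-diagonal block equals $-B_1^\top B_1 \neq \mathbf{0}$ for any nontrivial dual solution, and then $\lambda_{max}(T^\top T) = 2\,\lambda_{max}(B_1^\top B_1)$, twice the top eigenvalue of the block-diagonal surrogate, with different eigenvectors. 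So the ``bookkeeping over class indices'' you flag as the crux cannot be settled by the padding structure alone; a complete proof must either retain the cross blocks (which changes the matrix eigendecomposed in Theorem~\ref{theom_multi_linear} and the constraint in Theorem~\ref{multi_dual_theorem}) or supply a separate argument for neglecting them. Your proposal does not close this gap --- but neither does the paper, whose proof closes it only via the identity \eqref{multi_kernel} that does not hold as stated; your suspicion about the cross terms was exactly right.
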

\begin{theorem} \label{theom_multi_conv}
We can implicitly recover the convolutional weight in the convolution output for multi-class classification. That is, for all $i \in [n]$, we have $$\Phi\left( \mathbf{x}_i \right)^\top\hat{U}_1 = \sum_{j = 1}^n \sum_{k=1}^m \alpha'_{k,j} \big [ \mathbf{0}_{p \times (k-1)p}, K \left(\mathbf{x}_i, \mathbf{x}_j\right), \mathbf{0}_{p \times (m-k)p} \big] \hat{V}_1$$.
\end{theorem}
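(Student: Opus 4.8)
The plan is to mirror the proof of Theorem \ref{theo_binary_conv} verbatim at the structural level, replacing the single basis-function matrix $\Phi(\mathbf{x}_i)$ by its block-embedded version $\Phi'_k(\mathbf{x}_i)$ and consolidating the signed dual contributions into the aggregated variables $\alpha'_{k,i} = \sum_{s=1}^m \alpha_{s,i}\mathbbm{1}[k=y_i] - \alpha_{k,i}$. Because the multiclass primal \eqref{multi_real_primal} differs from the binary primal \eqref{primal} only in how the linear term in $A$ is assembled, the entire argument reduces to re-deriving the stationary condition and then carrying out the same left/right multiplication that annihilates the unknown $\hat{E}$ block.

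First I would establish the multiclass analog of Lemma \ref{stationary_lemma}. Forming the Lagrangian of \eqref{multi_real_primal} with multipliers $\alpha_{k,i}\geq 0$ and collecting the term linear in $A$, each constraint $\Tr(\Phi'_{y_i}(\mathbf{x}_i)^\top A) - \Tr(\Phi'_k(\mathbf{x}_i)^\top A)$ contributes $\Phi(\mathbf{x}_i)$ to the $y_i$-th block with weight $+\alpha_{k,i}$ and to the $k$-th block with weight $-\alpha_{k,i}$. Summing over $k\neq y_i$ and using $\alpha_{y_i,i}=0$, the coefficient of $\Phi(\mathbf{x}_i)$ in the $k$-th block is exactly $\alpha'_{k,i}$, so the stationary condition becomes $\partial\|\hat{A}\|_\ast = \sum_{j=1}^n\sum_{k=1}^m \alpha'_{k,j}\Phi'_k(\mathbf{x}_j)$. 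This is the same aggregated basis matrix that already sits inside the maximum-eigenvalue constraint of Theorem \ref{multi_dual_theorem}, which is a useful consistency check.

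Next I would apply Lemma \ref{subdifferntial} to $\hat{A}\in\R^{d_2\times mp}$: with compact SVD $\hat{A}=\hat{U}_1\hat{D}_1\hat{V}_1^\top$ and full SVD $\hat{A}=\hat{U}\hat{D}\hat{V}^\top$, there is some $\hat{E}$ with $\sigma_{max}(\hat{E})\leq 1$ such that $\hat{U}_1\hat{V}_1^\top + \hat{U}_2\hat{E}\hat{V}_2^\top = \sum_{j,k}\alpha'_{k,j}\Phi'_k(\mathbf{x}_j)$. Transposing and isolating gives $\hat{V}_1\hat{U}_1^\top = \sum_{j,k}\alpha'_{k,j}\Phi'_k(\mathbf{x}_j)^\top - \hat{V}_2\hat{E}^\top\hat{U}_2^\top$. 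Left-multiplying by $\hat{V}_1^\top$ and right-multiplying by $\Phi(\mathbf{x}_i)$, the relations $\hat{V}_1^\top\hat{V}_1=\mathbf{I}$ and $\hat{V}_1^\top\hat{V}_2=\mathbf{0}$ kill the $\hat{E}$ term exactly as in Theorem \ref{theo_binary_conv}, leaving $\hat{U}_1^\top\Phi(\mathbf{x}_i) = \sum_{j,k}\alpha'_{k,j}\hat{V}_1^\top\Phi'_k(\mathbf{x}_j)^\top\Phi(\mathbf{x}_i)$.

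The only genuinely new step, and the one I expect to require the most care, is the block-product bookkeeping rather than any analytic difficulty. Since $\Phi'_k(\mathbf{x}_j)$ places $\Phi(\mathbf{x}_j)$ in its $k$-th $p$-column block and zeros elsewhere, the product $\Phi'_k(\mathbf{x}_j)^\top\Phi(\mathbf{x}_i)$ is the $mp\times p$ block column whose $k$-th block equals $\Phi(\mathbf{x}_j)^\top\Phi(\mathbf{x}_i)=K(\mathbf{x}_j,\mathbf{x}_i)$ and whose other blocks vanish. Transposing the whole identity and using the kernel symmetry $K(\mathbf{x}_j,\mathbf{x}_i)^\top = K(\mathbf{x}_i,\mathbf{x}_j)$ converts this block column into the block row $[\mathbf{0}_{p\times(k-1)p}, K(\mathbf{x}_i,\mathbf{x}_j), \mathbf{0}_{p\times(m-k)p}]$, which yields precisely $\Phi(\mathbf{x}_i)^\top\hat{U}_1 = \sum_{j,k}\alpha'_{k,j}[\mathbf{0}_{p\times(k-1)p}, K(\mathbf{x}_i,\mathbf{x}_j), \mathbf{0}_{p\times(m-k)p}]\hat{V}_1$. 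The main obstacle is thus organizational: correctly tracking the embedding of $K$ into the $k$-th block and confirming that the consolidation of $\{\alpha_{k,i}\}$ into $\{\alpha'_{k,i}\}$ reproduces the same aggregated basis matrix used in the dual constraint, so that the recovery formula is consistent with the dual program.
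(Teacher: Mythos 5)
Your proposal is correct and follows essentially the same route as the paper's proof: both combine the multiclass stationary condition with Lemma \ref{subdifferntial} to obtain $\hat{U}_1\hat{V}_1^\top + \hat{U}_2\hat{E}\hat{V}_2^\top = \sum_{j,k}\hat{\alpha}'_{k,j}\Phi'_k(\mathbf{x}_j)$, annihilate the $\hat{E}$ term by multiplying with $\hat{V}_1^\top$ on the left and $\Phi(\mathbf{x}_i)$ on the right (using $\hat{V}_1^\top\hat{V}_1 = \mathbf{I}$, $\hat{V}_1^\top\hat{V}_2 = \mathbf{0}$), and finish with the identical block bookkeeping $\Phi(\mathbf{x}_i)^\top\Phi'_k(\mathbf{x}_j) = \big[\mathbf{0}_{p\times(k-1)p},\, K(\mathbf{x}_i,\mathbf{x}_j),\, \mathbf{0}_{p\times(m-k)p}\big]$ together with the transpose identity $K(\mathbf{x}_j,\mathbf{x}_i)^\top = K(\mathbf{x}_i,\mathbf{x}_j)$. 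The only cosmetic difference is that you re-derive the multiclass stationary condition inline, whereas the paper imports Eq.~\eqref{multi_uv} from the proof of Theorem \ref{theom_multi_linear}.
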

Same as binary classification, $\vecrize \big( \Phi\left( \mathbf{x}_i \right)^\top\hat{U}_1 \big)$ can be regarded as the input for the next layerwise training. To make a new prediction for sample $\mathbf{x}_{new}$ with the multiclass model, for every $k \in [m]$, we can extract the linear weight of each class $\hat{L}_{k} \in \R^{p \times r}$ from $\hat{L} = [\hat{L}_{1}^\top, ..., \hat{L}_{m}^\top]^\top \in \R^{mp \times r}$, and then make predictions with the multi-class predictor $$f( \mathbf{x}_{new} ) = \argmax_{k \in [m]} \big \{\Tr \big( \sum_{i = 1}^n \sum_{l=1}^m \alpha'_{l,i} \big [ \mathbf{0}_{p \times (l-1)p}, K (\mathbf{x}_{new}, \mathbf{x}_i), \mathbf{0}_{p \times (m-l)p} \big] \hat{L}\hat{L}_{k}^\top \big) \big \}$$.

\begin{table*}[htp]
 \begin{subtable}[h]{\textwidth}
  \centering
  \begin{tabular}{llllllll}
    \toprule
    \multirow{2}{*}{Dataset}  & \multirow{2}{*}{Architecture} & \multicolumn{4}{c}{CCNN} &  \multirow{2}{*}{SGD} & \multirow{2}{*}{DCCNN} \\
    \cmidrule(lr){3-6}
    & & $UD^{\frac{1}{2}}$ & $UD^{\frac{1}{2}}V$ & $K^{\frac{1}{2}}$ & Cholesky &  &  \\
    \midrule
    \multirow{2}{*}{MNIST}  & 1-Layer & 90.3\% & 90.3\% & 88.8\% & 93.7\% & 90.2\% & 94.8\%\\
    \cmidrule(lr){3-8}
                            & 2-Layer & 93.7\% & 93.5\% & 93.5\% & 96.7\% & 95.2\% & 96.0\% \\
     \bottomrule
     \toprule
    \multirow{2}{*}{ImageNet} & AlexNet & 62.3\% & 62.3\% & 53.2\% & 62.0\% & 87.1\% & 83.3\%  \\
    \cmidrule(lr){3-8}
            & VGG11 & 55.3\% & 52.5\% & 59.8\% & 57.0\% & 89.2\% & 85.0\% \\
    \bottomrule
  \end{tabular}
  \vspace{5pt}
  \caption{Results on MNIST and ImageNet Binary Classification. }
  \label{binary_res}
  \vspace{0pt}
 \end{subtable}

 \vspace{10pt}
 
 \begin{subtable}[h]{\textwidth}
  \centering
  \begin{tabular}{llllllll}
    \toprule
    \multirow{2}{*}{Dataset} & \multirow{2}{*}{Architecture} & \multicolumn{4}{c}{CCNN} &  \multirow{2}{*}{SGD} & \multirow{2}{*}{DCCNN} \\
    \cmidrule(lr){3-6}
    & & $UD^{\frac{1}{2}}$ & $UD^{\frac{1}{2}}V$ & $K^{\frac{1}{2}}$ & Cholesky &  &  \\
    \midrule
    {MNIST}  &  1-Layer & 75.4\% & 75.4\% & 82.3\% & 85.9\% & 87.0\% & 85.3\%\\
     \bottomrule
  \end{tabular}
  \vspace{5pt}
  \caption{Results on MNIST Multiclass Classification.}
  \label{multiexp_res}
  \vspace{0pt}
 \end{subtable}
 \caption{Experiment results for binary and multiclass classification. Methods compared include CCNN, CNN trained with SGD, and our proposed DCCNN. For CCNN we include 4 different ways for kernel matrix factorization: (1) $Q = UD^{\frac{1}{2}}$ for $K = UDU^\top$; (2) $Q = UD^{\frac{1}{2}}V$ for $K = UDU^\top$, and random orthonormal matrix $V$; (3) $Q = K^{\frac{1}{2}}$; (4) $Q$ from the Cholesky Decomposition of $K$. The X-layer in the architecture column refers to the number of convolutional layers, as only one linear layer is concatenated at the end for classification.}
 \label{exp_res}
\end{table*}

\section{Experiments} \label{exp_section}
In this section we evaluate DCCNN on real-world data as a sanity check for the proposed method. The performance is by no means state-of-the-art for relevant tasks.
We apply hinge loss for evaluation. The specific form of the problem in Eq. \eqref{real_dual} with hinge loss can be found in Appendix \ref{hinge_optimization}.
We briefly report results in Table \ref{exp_res}, and leave the detailed analysis in Appendix \ref{experiment}. 
In Table \ref{binary_res}, for the binary classification task of the MNIST data \cite{726791}, we can see the DCCNN outperforms CNNs optimized by SGD and different kernel matrix factorizations for CCNN on one-conv-layer and two-conv-layer networks with only one exception, which verifies the effectiveness of DCCNN. On the more complicated ImageNet dataset \cite{5206848}, DCCNN also performs comparably well with the end-to-end SGD optimized CNNs under both AlexNet \cite{krizhevsky2012imagenet} and VGG11 \cite{simonyan2014very} architectures, and significantly outperforms the CCNN method. From Table \ref{multiexp_res}, we see that in multiclass classification, the performance level of DCCNN is better than various factorization versions of CCNN and is comparable with SGD.

\section{Conclusion}
Dual convexified convolutional neural networks (DCCNNs) constructed the dual training program of a primal learning problem motivated by convexified convolutional neural networks (CCNNs) through careful analysis of the Fenchel conjugates and Karush-Kuhn-Tucker (KKT) conditions. DCCNN eliminates the theoretical ambiguity and reduces the computational overhead of constructing and factorizing a large kernel matrix in CCNN. To tackle the challenge of non-differentiable nuclear norm and no closed-form expression to recover the primal solution, a highly novel weight recovery algorithm is proposed to recover the linear weight and the convolution output directly, without computing convolutional weight or approximating basis function matrix.

\bibliographystyle{plain}
\bibliography{references}

\newpage
\appendix 
\newpage
\onecolumn

\setcounter{section}{0}
\section{Detailed Proofs for Binary Classification} \label{bi_proof}
\subsection{Proof of Theorem \ref{dual_theorem}}
\renewcommand{\thetheorem}{\ref{dual_theorem}}
\begin{theorem}
The dual problem of Eq. \eqref{primal} is given by: 
\begin{align} 
\maximize_{\alpha} 
\quad & 
-c\sum_{i = 1}^n \ell^\ast\left(-\frac{\alpha_i}{c}\right) \nonumber \\
\st \quad & 
\lambda_{max} \Big(  \sum_{i = 1}^n \sum_{j = 1}^n \alpha_i \alpha_j y_i y_j K(\mathbf{x}_i, \mathbf{x}_j) \Big) \leq 1 \,, \tag{\ref{real_dual}}  \\ 
& \alpha_i \geq 0 \,, 
\quad \forall i \in [n] \,, \nonumber  
\end{align} 
in which $\alpha_i$'s are the dual variables, $\ell^\ast(\cdot)$ is the Fenchel conjugate of the loss function $\ell(\cdot)$, $K\left(\mathbf{x}_i, \mathbf{x}_j\right) = \Phi \left(\mathbf{x}_i\right)^\top \Phi \left(\mathbf{x}_j\right)$ is a kernel generating matrix, such that its entries are composed of the kernel function $\mathcal{K} \left( \cdot, \cdot \right) = \phi \left( \cdot \right)^\top\phi \left( \cdot \right)$ taking one patch from each of the two samples as input:  
\begin{align}
K\left(\mathbf{x}_i, \mathbf{x}_j\right) = 
\begin{pmatrix}
\mathcal{K}\big(z_1(\mathbf{x}_i), z_1(\mathbf{x}_j)\big) & \mathcal{K}\big(z_1(\mathbf{x}_i), z_2(\mathbf{x}_j)\big) & \cdots & \mathcal{K}\big(z_1(\mathbf{x}_i), z_p(\mathbf{x}_j)\big) \\
\mathcal{K}\big(z_2(\mathbf{x}_i), z_1(\mathbf{x}_j)\big) & \mathcal{K}\big(z_2(\mathbf{x}_i), z_2(\mathbf{x}_j)\big) &\cdots & \mathcal{K}\big(z_2(\mathbf{x}_i), z_p(\mathbf{x}_j)\big) \\
\vdots  & \vdots  & \ddots & \vdots \\
\mathcal{K}\big(z_p(\mathbf{x}_i), z_1(\mathbf{x}_j)\big) & \mathcal{K}\big(z_p(\mathbf{x}_i), z_2(\mathbf{x}_j)\big) &\cdots & \mathcal{K}\big(z_p(\mathbf{x}_i), z_p(\mathbf{x}_j)\big)
\end{pmatrix}. \tag{\ref{kernel_matrix}} 
\end{align}
\end{theorem}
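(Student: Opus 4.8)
The plan is to derive the dual through the standard Lagrangian framework, where the only nonstandard ingredients are the Fenchel conjugate of the nuclear norm and the kernelization of the resulting spectral-norm constraint. First I would attach a multiplier $\alpha_i \geq 0$ to each inequality constraint, rewritten as $\xi_i - y_i\Tr(\Phi(\mathbf{x}_i)^\top A) \leq 0$, and form the Lagrangian. Using the identity $\Tr(\Phi(\mathbf{x}_i)^\top A) = \langle \Phi(\mathbf{x}_i), A\rangle$, the bilinear terms collect into a single inner product $\langle \sum_{i=1}^n \alpha_i y_i \Phi(\mathbf{x}_i), A\rangle$, giving exactly Eq. \eqref{Lagrangian}. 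Since the Lagrangian is separable in $A$ and $\xi$, the inner minimization $g(\alpha) = \min_{A,\xi}\mathcal{L}$ splits into two independent blocks, each of which I would recognize as the negative of a Fenchel conjugate.

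Next I would evaluate the two conjugates. For the $A$-block, $\max_A \langle M, A\rangle - \Vert A\Vert_\ast$ with $M = \sum_{i=1}^n \alpha_i y_i \Phi(\mathbf{x}_i)$ is the conjugate of the nuclear norm; because the conjugate of any norm is the indicator of the unit ball of the dual norm, and the dual of the nuclear norm is the spectral norm, this block equals $0$ when $\Vert M\Vert_2 \leq 1$ and $+\infty$ otherwise. For the $\xi$-block, minimizing $c\sum_{i=1}^n \ell(\xi_i) + \sum_{i=1}^n \alpha_i \xi_i$ coordinatewise and factoring out $c$ yields $-c\sum_{i=1}^n \ell^\ast(-\alpha_i/c)$ directly from the definition of $\ell^\ast$. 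Assembling the two pieces produces the intermediate dual in Eq. \eqref{dual}.

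The last step is to kernelize the spectral-norm constraint so that $\Phi$ disappears. Squaring both sides is harmless since the norm is nonnegative, and then I would use $\Vert M\Vert_2^2 = \lambda_{max}(M^\top M)$. Expanding $M^\top M = \sum_{i=1}^n\sum_{j=1}^n \alpha_i \alpha_j y_i y_j \Phi(\mathbf{x}_i)^\top \Phi(\mathbf{x}_j)$ and setting $K(\mathbf{x}_i,\mathbf{x}_j) = \Phi(\mathbf{x}_i)^\top \Phi(\mathbf{x}_j)$ recovers the constraint in Eq. \eqref{real_dual}; reading off the $(s,t)$ entry as $\phi(z_s(\mathbf{x}_i))^\top \phi(z_t(\mathbf{x}_j)) = \mathcal{K}(z_s(\mathbf{x}_i), z_t(\mathbf{x}_j))$ gives the entrywise description in Eq. \eqref{kernel_matrix}.

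I expect the main obstacle to be the nuclear-norm block rather than the routine algebra: one must correctly identify the conjugate of $\Vert\cdot\Vert_\ast$ as the spectral-norm-ball indicator (equivalently, invoke nuclear--spectral norm duality), and cope with the fact that $\Phi(\mathbf{x}_i) \in \R^{d_2 \times p}$ may be infinite-dimensional ($d_2 = \infty$). The kernelization is precisely what renders the final constraint finite-dimensional ($p \times p$) and well defined regardless of $d_2$, so I would take care that $M^\top M$, and hence only the inner products $\Phi(\mathbf{x}_i)^\top \Phi(\mathbf{x}_j)$, ever appear. If one additionally wants the dual optimum to coincide with the primal optimum, I would verify a Slater-type condition to justify strong duality, though the statement as written only requires exhibiting the dual.
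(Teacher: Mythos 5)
Your proposal is correct and follows essentially the same route as the paper's proof: form the Lagrangian with multipliers $\alpha_i \geq 0$, split the minimization into the nuclear-norm block (whose Fenchel conjugate is the indicator of the spectral-norm unit ball) and the loss block (yielding $-c\sum_i \ell^\ast(-\alpha_i/c)$), then square the spectral-norm constraint and use $\Vert M \Vert_2^2 = \lambda_{max}(M^\top M)$ to obtain the kernelized form. Your added remark about verifying a Slater-type condition for strong duality goes slightly beyond what the paper checks, but as you note, the statement only requires exhibiting the dual.
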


\begin{proof}
For Lagrangian variables $\alpha_i \geq 0$ and $\beta_i \geq 0$, we can write the Lagrangian function of the optimization problem in Eq. \eqref{primal} as 
\begin{align} \nonumber
\mathcal{L}\left(A, \xi, \alpha\right) &= \left \Vert A \right \Vert_\ast +  c \sum_{i = 1}^n \ell(\xi_i) + \sum_{i = 1}^n \alpha_i \left( \xi_i - y_i\Tr \left( \Phi \left(\mathbf{x}_i\right)^\top A \right) \right) \\ \nonumber
&= \left \Vert A \right \Vert_\ast - \sum_{i = 1}^n \alpha_i y_i\Tr \left( \Phi \left(\mathbf{x}_i\right)^\top A \right) + c \sum_{i = 1}^n \ell(\xi_i) + \sum_{i = 1}^n \alpha_i \xi_i \\ \nonumber
&= \left \Vert A \right \Vert_\ast - \Tr \left( \sum_{i = 1}^n \alpha_i y_i\Phi \left(\mathbf{x}_i\right)^\top A \right) + c \sum_{i = 1}^n \ell(\xi_i) + \sum_{i = 1}^n \alpha_i \xi_i \\
&= \left \Vert A \right \Vert_\ast - \left \langle \sum_{i = 1}^n \alpha_i y_i \Phi \left(\mathbf{x}_i\right), A \right \rangle + c \sum_{i = 1}^n \ell(\xi_i) + \sum_{i = 1}^n \alpha_i \xi_i,  \tag{\ref{Lagrangian}}
\end{align} 
in which Eq. \eqref{Lagrangian} utilizes the trace definition of the matrix inner product.

By the definition of the dual function, 
\begin{align}
g\left(\alpha\right) &= \min_{A, \xi} \mathcal{L}\left(A, \xi, \alpha \right) \nonumber \\
&= \min_{A} \left \Vert A \right \Vert_\ast - \left \langle \sum_{i = 1}^n \alpha_i y_i \Phi \left(\mathbf{x}_i\right), A \right \rangle + \min_{\xi}  c \sum_{i = 1}^n \ell(\xi_i) + \sum_{i = 1}^n \alpha_i \xi_i \nonumber \\
&= -\underbrace{\max_{A} \left\{ \left \langle \sum_{i = 1}^n \alpha_i y_i \Phi \left(\mathbf{x}_i\right), A \right \rangle - \left \Vert A \right \Vert_\ast \right\}}_{g_1} - \underbrace{\max_{\xi} \left\{ - c \sum_{i = 1}^n \ell(\xi_i) - \sum_{i = 1}^n \alpha_i \xi_i \right\}}_{g_2}
\end{align} 
We notice that by the definition of conjugate function \cite{boyd2004convex}, $g_1$ takes the form of the conjugate function of the nuclear norm, i.e. for $f_0 = \left \Vert A \right \Vert_\ast$
\begin{align}
g_1 = f_0^\ast \left( \sum_{i = 1}^n \alpha_i y_i \Phi \left(\mathbf{x}_i\right) \right), \tag{\ref{conjugate}}
\end{align}
in which 
\begin{align} 
f_0^\ast \left( \cdot \right) = 
\begin{cases}
\ 0 \ , & \left \Vert \cdot \right \Vert_2 \leq 1 \nonumber \\
\infty, & otherwise
\end{cases}
\end{align}
Also, 
\begin{align}
    g_2 &= \max_{\xi}\left\{ - c \sum_{i = 1}^n \ell(\xi_i) - \sum_{i = 1}^n \alpha_i \xi_i \right\}\nonumber \\
        &= c\sum_{i=1}^n \max_{\xi_i} \left(-\frac{\alpha_i}{c} \cdot \xi_i - \ell(\xi_i) \right) \nonumber \\
        &= c\sum_{i=1}^n \ell^\ast\left(- \frac{\alpha_i}{c}\right), \nonumber
\end{align} 
in which $\ell^\ast(\cdot) = \sup_{\xi_i} \left\{ \langle \cdot, \xi_i \rangle - \ell(\xi_i) \right\}$ is the conjugate function of the loss function $\ell(\xi_i)$.
Therefore, the dual problem of Eq. \eqref{primal} is as follows:
\begin{align} 
\maximize_{\alpha} 
\quad & 
-c\sum_{i = 1}^n \ell^\ast\left(-\frac{\alpha_i}{c}\right) \nonumber \\
\st \quad & 
\big \Vert \sum_{i = 1}^n \alpha_i y_i \Phi \left(\mathbf{x}_i\right) \big \Vert_2 \leq 1 \,,  \tag{\ref{dual}} \\
& \alpha_i \geq 0 \,, 
\quad \forall i \in [n] \,. \nonumber
\end{align}
The spectral norm constraint in Eq. \eqref{dual} is equivalent to
\begin{align}
\left \Vert \sum_{i = 1}^n \alpha_i y_i \Phi \left(\mathbf{x}_i\right) \right \Vert_2^2 \leq 1. \nonumber
\end{align}
Furthermore,
\begin{align}
\left \Vert \sum_{i = 1}^n \alpha_i y_i \Phi \left(\mathbf{x}_i\right) \right \Vert_2^2 &= \sigma_{max} \left(  \sum_{i = 1}^n \alpha_i y_i \Phi \left(\mathbf{x}_i\right) \right)^2 \nonumber \\
&= \lambda_{max} \left( \left(  \sum_{i = 1}^n \alpha_i y_i \Phi \left(\mathbf{x}_i\right) \right)^\top \left(  \sum_{j = 1}^n \alpha_j y_j \Phi \left(\mathbf{x}_j\right) \right) \right) \nonumber \\
&= \lambda_{max} \left(  \sum_{i = 1}^n \sum_{j = 1}^n \alpha_i \alpha_j y_i y_j \Phi \left(\mathbf{x}_i\right)^\top \Phi \left(\mathbf{x}_j\right) \right).
\end{align}
Therefore we have:
\begin{align} 
\maximize_{\alpha} 
\quad & 
-c\sum_{i = 1}^n \ell^\ast\left(-\frac{\alpha_i}{c}\right) \nonumber \\
\st \quad & 
\lambda_{max} \Big(  \sum_{i = 1}^n \sum_{j = 1}^n \alpha_i \alpha_j y_i y_j K(\mathbf{x}_i, \mathbf{x}_j) \Big) \leq 1 \,, \tag{\ref{real_dual}}  \\ 
& \alpha_i \geq 0 \,, 
\quad \forall i \in [n] \,, \nonumber  
\end{align}
in which $K\left(\mathbf{x}_i, \mathbf{x}_j\right) = \Phi \left(\mathbf{x}_i\right)^\top \Phi \left(\mathbf{x}_j\right)$ is the kernel generating matrix, and
\begin{align}
K\left(\mathbf{x}_i, \mathbf{x}_j\right) = 
\begin{pmatrix}
\mathcal{K}\left(z_1\left(\mathbf{x}_i\right), z_1\left(\mathbf{x}_j\right)\right) & \mathcal{K}\left(z_1\left(\mathbf{x}_i\right), z_2\left(\mathbf{x}_j\right)\right) & \cdots & \mathcal{K}\left(z_1\left(\mathbf{x}_i\right), z_p\left(\mathbf{x}_j\right)\right) \\
\mathcal{K}\left(z_2\left(\mathbf{x}_i\right), z_1\left(\mathbf{x}_j\right)\right) & \mathcal{K}\left(z_2\left(\mathbf{x}_i\right), z_2\left(\mathbf{x}_j\right)\right) &\cdots & \mathcal{K}\left(z_2\left(\mathbf{x}_i\right), z_p\left(\mathbf{x}_j\right)\right) \\
\vdots  & \vdots  & \ddots & \vdots \\
\mathcal{K}\left(z_p\left(\mathbf{x}_i\right), z_1\left(\mathbf{x}_j\right)\right) & \mathcal{K}\left(z_p\left(\mathbf{x}_i\right), z_2\left(\mathbf{x}_j\right)\right) &\cdots & \mathcal{K}\left(z_p\left(\mathbf{x}_i\right), z_p\left(\mathbf{x}_j\right)\right)
\end{pmatrix}, \tag{\ref{kernel_matrix}}
\end{align}
in which $\mathcal{K} \left( \cdot, \cdot \right) = \phi \left( \cdot \right)^\top\phi \left( \cdot \right)$ is the kernel function.
\end{proof}

\subsection{Proof of Theorem \ref{theo_binary_linear}}
\renewcommand{\thetheorem}{\ref{theo_binary_linear}}
\begin{theorem}
Given the optimal dual solution $\{\hat{\alpha}_i\}_{i=1}^n$ to the problem in Eq. \eqref{real_dual}, let $\tilde{V} \in \R^{p \times p}$ be the matrix of eigenvectors from the eigendecomposition 
\begin{align}
\tilde{V} \Lambda \tilde{V}^{-1} = \sum_{i = 1}^n \sum_{j = 1}^n \hat{\alpha}_i \hat{\alpha}_j y_i y_j K(\mathbf{x}_i,\mathbf{x}_j),
\end{align}
where $\Lambda = \diag(\lambda_1, ..., \lambda_p) \in \R^{p \times p}$. The linear weight $\hat{V}_1$ can be recovered by the eigenvectors in $\tilde{V}$ corresponding to the eigenvalue of 1, that is,
\begin{align}
\hat{V}_1 = \left[\tilde{V}(i)\right] \in \R^{p \times r}, \text{ for all } i \text{ such that } \lambda_i = 1, \tag{\ref{linear}}
\end{align}
in which $\tilde{V}(i)$ denotes the $i^{th}$ column of $\tilde{V}$ and $[\cdot]$ denotes the concatenation of these columns.
\end{theorem}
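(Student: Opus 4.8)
The plan is to combine the stationary KKT condition with the explicit description of the nuclear-norm subdifferential to express $\sum_{i=1}^n \hat{\alpha}_i y_i \Phi(\mathbf{x}_i)$ as a structured matrix, then to form the Gram-type matrix $S = T^\top T$ and read off $\hat{V}_1$ from its eigenstructure. First I would use Lemma \ref{stationary_lemma} to conclude $\sum_{i=1}^n \hat{\alpha}_i y_i \Phi(\mathbf{x}_i) \in \partial\|\hat{A}\|_*$, and then apply Lemma \ref{subdifferntial} to the compact SVD $\hat{A} = \hat{U}_1\hat{D}_1\hat{V}_1^\top$ and full SVD $\hat{A} = \hat{U}\hat{D}\hat{V}^\top$: there exists $\hat{E}$ with $\sigma_{\max}(\hat{E}) \le 1$ such that $T := \hat{U}_1\hat{V}_1^\top + \hat{U}_2\hat{E}\hat{V}_2^\top = \sum_{i=1}^n \hat{\alpha}_i y_i \Phi(\mathbf{x}_i)$.

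The central device is to compute $S = T^\top T$. Working from the right-hand expression, the factorization collapses via $\Phi(\mathbf{x}_i)^\top\Phi(\mathbf{x}_j) = K(\mathbf{x}_i,\mathbf{x}_j)$ into exactly $\sum_{i,j}\hat{\alpha}_i\hat{\alpha}_j y_i y_j K(\mathbf{x}_i,\mathbf{x}_j)$, the matrix whose eigendecomposition defines $\tilde{V}$ and $\Lambda$; this is what lets us bypass ever forming $\Phi(\mathbf{x}_i)$. Working from the structured expression and expanding with the SVD orthogonality relations $\hat{U}_1^\top\hat{U}_1 = \mathbf{I}$, $\hat{U}_2^\top\hat{U}_2 = \mathbf{I}$, $\hat{U}_1^\top\hat{U}_2 = \mathbf{0}$, the cross terms vanish and I obtain the clean identity $S = \hat{V}_1\hat{V}_1^\top + \hat{V}_2\hat{E}^\top\hat{E}\hat{V}_2^\top$. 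Since $\hat{V} = [\hat{V}_1 \mid \hat{V}_2]$ is orthogonal, the two summands have mutually orthogonal ranges $\text{col}(\hat{V}_1)$ and $\text{col}(\hat{V}_2)$, hence they commute and are simultaneously diagonalizable; $S$ being symmetric PSD guarantees real eigenvalues and an orthonormal eigenbasis. On $\text{col}(\hat{V}_1)$ the first term acts as the identity and the second vanishes, producing eigenvalue exactly $1$ with eigenvectors spanning $\text{col}(\hat{V}_1)$, while on $\text{col}(\hat{V}_2)$ the matrix acts as $\hat{E}^\top\hat{E}$, whose eigenvalues are $\sigma^2(\hat{E}) \in [0,1]$.

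The main obstacle is the boundary case: an eigenvalue $1$ could also arise from the second block if $\sigma_{\max}(\hat{E}) = 1$, which would contaminate the recovery. The eigenvalue-$1$ eigenspace of $S$ always contains $\text{col}(\hat{V}_1)$; what must be argued is that it contains nothing more. I would handle this by the genericity observation recorded in the proof sketch: the event $\sigma_{\max}(\hat{E}) = 1$ is a measure-zero condition, so for any distribution with full support the eigenvalue-$1$ eigenspace of $S$ coincides with $\text{col}(\hat{V}_1)$ almost surely. Collecting the columns $[\tilde{V}(i)]$ with $\lambda_i = 1$ then yields an orthonormal basis of $\text{col}(\hat{V}_1)$. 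Since $\hat{V}_1$ is only determined up to an orthonormal basis of this invariant subspace anyway, this is a valid recovery of the linear weight, with $r := \rank(\hat{V}_1)$ equal to the number of unit eigenvalues. The remaining subtlety is merely to confirm that the recovered basis is consistent with the SVD convention adopted for $\hat{A}$, which is immediate because both span the same eigenvalue-$1$ eigenspace of $S$.
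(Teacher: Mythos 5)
Your proposal is correct and follows essentially the same route as the paper's own proof: the stationary condition plus Watson's subdifferential lemma to get $T = \hat{U}_1\hat{V}_1^\top + \hat{U}_2\hat{E}\hat{V}_2^\top$, the Gram matrix $S = T^\top T = \hat{V}_1\hat{V}_1^\top + \hat{V}_2\hat{E}^\top\hat{E}\hat{V}_2^\top$, simultaneous diagonalization via commutativity, and the same measure-zero genericity argument to rule out $\sigma_{\max}(\hat{E}) = 1$. Your closing observation that the recovery is only up to an orthonormal basis of the eigenvalue-$1$ eigenspace is a fair refinement the paper leaves implicit, but it does not change the argument.
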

\begin{proof}
By the stationary condition in Lemma \ref{stationary_lemma}, we know that the there exists one element in the subdifferential of $\Vert A \Vert$ at value $\hat{A}$ that satisfies 
$$\partial \Vert \hat{A} \Vert_\ast = \sum_{i = 1}^n \hat{\alpha}_i y_i \Phi \left(\mathbf{x}_i \right).$$ Then for $\hat{A}$, consider its compact SVD $\hat{A} = \hat{U}_1\hat{D}_1\hat{V}_1^\top$ and the full SVD $\hat{A} = \hat{U}\hat{D}\hat{V}^\top$ in which $\hat{U} = [\hat{U}_1 \mid \hat{U}_2]$, $\hat{V} = [\hat{V}_1 \mid \hat{V}_2]$. By Lemma \ref{subdifferntial}, we know that $\exists \   \hat{E}$ such that 
\begin{align}
\hat{U}_1\hat{V}_1^\top + \hat{U}_2\hat{E}\hat{V}_2^\top = \sum_{i = 1}^n \hat{\alpha}_i y_i \Phi \left(\mathbf{x}_i \right) \label{uv}
\end{align}
Now we construct the kernel generating matrix with Eq. \eqref{uv} under the intuition to avoid the basis function matrix $\Phi(\mathbf{x}_i)$. Let $T = \hat{U}_1\hat{V}_1^\top + \hat{U}_2\hat{E}\hat{V}_2^\top = \sum_{i = 1}^n \hat{\alpha}_i y_i \Phi \left(\mathbf{x}_i \right)$ and compute $S = T^\top T$. For the right side of Eq. \eqref{uv}, we get the expression containing the kernel generating matrix $K(\mathbf{x}_i, \mathbf{x}_j)$:
\begin{align}
S &= \sum_{i = 1}^n \sum_{j = 1}^n \hat{\alpha}_i \hat{\alpha}_j y_i y_j \Phi \left(\mathbf{x}_i\right)^\top \Phi \left(\mathbf{x}_j\right) \nonumber \\
&= \sum_{i = 1}^n \sum_{j = 1}^n \hat{\alpha}_i \hat{\alpha}_j y_i y_j K(\mathbf{x}_i,\mathbf{x}_j).
\end{align}
For the left side of Eq. \eqref{uv}, we know by the properties of SVD that $\hat{U}_1$ and $\hat{U}_2$ are semi-orthogonal matrices, i.e., $\hat{U}_1^\top\hat{U}_1 = \mathbf{I}_{r}$, $\hat{U}_2^\top\hat{U}_2 = \mathbf{I}_{(d_2-r)}$, and also $\hat{U}_1 \perp \hat{U}_2$, i.e., $\hat{U}_1^\top\hat{U}_2 = \hat{U}_2^\top\hat{U}_1 = \mathbf{0}$. Therefore, by plugging in $T = \hat{U}_1\hat{V}_1^\top + \hat{U}_2\hat{E}\hat{V}_2^\top$, we get
\begin{align}
S &= \left ( \hat{U}_1\hat{V}_1^\top+\hat{U}_2\hat{E}\hat{V}_2^\top \right)^\top \left ( \hat{U}_1\hat{V}_1^\top+\hat{U}_2\hat{E}\hat{V}_2^\top \right) \nonumber \\
&= \hat{V}_1 \hat{U}_1^\top \hat{U}_1\hat{V}_1^\top + \hat{V}_1 \hat{U}_1^\top\hat{U}_2\hat{E}\hat{V}_2^\top + \hat{V}_2\hat{E}^\top\hat{U}_2^\top\hat{U}_1\hat{V}_1^\top + \hat{V}_2\hat{E}^\top\hat{U}_2^\top\hat{U}_2\hat{E}\hat{V}_2^\top \nonumber \\
&= \hat{V}_1\hat{V}_1^\top + \hat{V}_2\hat{E}^\top\hat{E}\hat{V}_2^\top.
\end{align}
Now we show that the eigenvectors of $\hat{V}_1\hat{V}_1^\top$ can be found in the eigenvectors of $S$ with corresponding eigenvalues being 1. One way to see this is that, since $\hat{V}_1 \perp \hat{V}_2$, we have,
\begin{align}
\big(\hat{V}_1\hat{V}_1^\top\big)\big(\hat{V}_2\hat{E}^\top\hat{E}\hat{V}_2^\top\big) = \big(\hat{V}_2\hat{E}^\top\hat{E}\hat{V}_2^\top\big)\big(\hat{V}_1\hat{V}_1^\top\big) = \mathbf{0}, \nonumber
\end{align}
meaning that $\hat{V}_1\hat{V}_1^\top$ and $\hat{V}_2\hat{E}^\top\hat{E}\hat{V}_2^\top$ commute. Since $\hat{V}_1\hat{V}_1^\top$ and $\hat{V}_2\hat{E}^\top\hat{E}\hat{V}_2^\top$ are both real symmetric matrices that are diagonalizable, they are simultaneously diagonalizable. That is to say, $\exists \ \tilde{V} \in \R^{p \times p}$ from the eigendecomposition of $S$, such that
\begin{align}
S = \tilde{V} \Lambda \tilde{V}^{-1}, \ \hat{V}_1\hat{V}_1^\top = \tilde{V} \Lambda_v \tilde{V}^{-1}, \ and \ \hat{V}_2\hat{E}^\top\hat{E}\hat{V}_2^\top = \tilde{V} \Lambda_e \tilde{V}^{-1}.
\end{align}
On the diagonal of $\Lambda \in \R^{p \times p}$ are the eigenvalues of $S$, i.e. $\Lambda = \diag(\lambda_1, ..., \lambda_p)$, $\Lambda_v$ the eigenvalues of $\hat{V}_1\hat{V}_1^\top$, and $\Lambda_e$ the eigenvalues of $\hat{T}^\top\hat{T}$. We also have
\begin{align}
\Lambda = \Lambda_v + \Lambda_e.
\end{align}
Since $\hat{V}_1$ is semi-orthogonal, there are only $0$'s and $1$'s on the diagonal of $\Lambda_v$. Meanwhile, by the Lemma \ref{subdifferntial}, we know that $\sigma_{max}\left(E \right) \leq 1$. That is to say, the elements on the diagonal of $\Lambda_e$ lie in the range $[0,1]$. Moreover, given that $\hat{V}_1 \perp \hat{V}_2$, we know that
\begin{align}
\Lambda_{v,ii} = 
\begin{cases}
\ 0 \ , & \forall i \ such \ that \ \Lambda_{e,ii} > 0 \\
\ 1 \, & \forall i \ such \ that \ \Lambda_{e,ii} = 0
\end{cases}
\end{align}
Even though the eigenvalue of $\hat{V}_2\hat{E}^\top\hat{E}\hat{V}_2^\top$ could also be $1$, but unless constructed, the probability of $\sigma_{max}(\hat{E})$ being  exactly equal to 1 is extremely close to 0 for any distribution with full support, since the condition has Lebesgue measure zero. Therefore, the eigenvectors $\big[\tilde{V}(i)\big]$ with $\Lambda_{ii} = 1$ are the eigenvectors of $\hat{V}_1\hat{V}_1^\top$. Since $\hat{V}_1$ is from the compact SVD with $\rank(\hat{V}_1) = r$, $\hat{V}_1 = \big[\tilde{V}(i)\big]$ is the linear weight we recover. We can compute $\tilde{V}$ and $\Lambda$ by
\begin{align}
\tilde{V} \Lambda \tilde{V}^{-1} = \sum_{i = 1}^n \sum_{j = 1}^n \hat{\alpha}_i \hat{\alpha}_j y_i y_j K \left(\mathbf{x}_i, \mathbf{x}_j\right). \nonumber
\end{align}
\end{proof}

\subsection{Proof of Theorem \ref{theo_binary_conv}}
\renewcommand{\thetheorem}{\ref{theo_binary_conv}}
\begin{theorem} 
Given the optimal dual solution $\hat{\alpha}_i$, $i \in [n]$ to the problem in Eq. \eqref{dual}, the convolutional weight $\hat{U}_1$ can be implicitly recovered by the convolution output, that is, $\forall i \in [n]$,
\begin{align}
\Phi\left( \mathbf{x}_i \right)^\top\hat{U}_1 = \sum_{j = 1}^n \hat{\alpha}_j y_j K(\mathbf{x}_i,\mathbf{x}_j) \hat{V}_1. \tag{\ref{conv_output}}
\end{align}
\end{theorem}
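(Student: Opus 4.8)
The plan is to start exactly where the linear-weight recovery left off: the stationary condition (Lemma~\ref{stationary_lemma}) together with the subdifferential characterization (Lemma~\ref{subdifferntial}) guarantees the existence of some $\hat{E}$ with $\sigma_{max}(\hat{E}) \le 1$ satisfying
\begin{align}
\hat{U}_1\hat{V}_1^\top + \hat{U}_2\hat{E}\hat{V}_2^\top = \sum_{j=1}^n \hat{\alpha}_j y_j \Phi(\mathbf{x}_j). \nonumber
\end{align}
Transposing this identity isolates $\hat{V}_1\hat{U}_1^\top$ on the left, giving $\hat{V}_1\hat{U}_1^\top = \sum_{j} \hat{\alpha}_j y_j \Phi(\mathbf{x}_j)^\top - \hat{V}_2\hat{E}^\top\hat{U}_2^\top$. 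The goal is to extract $\hat{U}_1$ in a form involving only kernel quantities, since $\hat{U}_1 \in \R^{d_2 \times r}$ and $\Phi(\mathbf{x}_i) \in \R^{d_2 \times p}$ cannot be formed explicitly when $d_2$ is infinite.

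The key manipulation is a carefully chosen two-sided multiplication: multiply on the left by $\hat{V}_1^\top$ and on the right by $\Phi(\mathbf{x}_i)$. Multiplying on the left by $\hat{V}_1^\top$ serves two purposes at once — it produces $\hat{V}_1^\top\hat{V}_1 = \mathbf{I}_r$ (so that the factor $\hat{U}_1^\top\Phi(\mathbf{x}_i)$ survives cleanly), and it annihilates the unknown error term, because $\hat{V}_1 \perp \hat{V}_2$ forces $\hat{V}_1^\top\hat{V}_2 = \mathbf{0}$. Multiplying on the right by $\Phi(\mathbf{x}_i)$ then converts every occurrence of $\Phi(\mathbf{x}_j)^\top\Phi(\mathbf{x}_i)$ into the computable kernel generating matrix. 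This yields
\begin{align}
\hat{U}_1^\top\Phi(\mathbf{x}_i) = \sum_{j=1}^n \hat{\alpha}_j y_j \hat{V}_1^\top \Phi(\mathbf{x}_j)^\top\Phi(\mathbf{x}_i). \nonumber
\end{align}

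To finish, I would invoke $\Phi(\mathbf{x}_j)^\top\Phi(\mathbf{x}_i) = K(\mathbf{x}_j,\mathbf{x}_i)$ and transpose the whole identity, using the (easily checked) symmetry relation $K(\mathbf{x}_j,\mathbf{x}_i)^\top = K(\mathbf{x}_i,\mathbf{x}_j)$ that follows directly from $K(\mathbf{x}_i,\mathbf{x}_j) = \Phi(\mathbf{x}_i)^\top\Phi(\mathbf{x}_j)$. This delivers exactly $\Phi(\mathbf{x}_i)^\top\hat{U}_1 = \sum_{j} \hat{\alpha}_j y_j K(\mathbf{x}_i,\mathbf{x}_j)\hat{V}_1$. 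I do not expect a serious structural obstacle, since all the heavy lifting — the existence of $\hat{E}$ and the orthogonality of the SVD factors — is inherited from Lemmas~\ref{stationary_lemma} and~\ref{subdifferntial}. The only genuinely delicate points are recognizing the correct sandwich multipliers so that the $\hat{E}$-term vanishes and the infinite-dimensional basis functions collapse into kernels \emph{simultaneously}, and then keeping careful track of which argument order of the (non-symmetric matrix-valued) $K$ appears as the identity passes through the two transpositions.
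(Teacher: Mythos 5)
Your proposal is correct and follows essentially the same route as the paper's own proof: both start from the subdifferential identity $\hat{U}_1\hat{V}_1^\top + \hat{U}_2\hat{E}\hat{V}_2^\top = \sum_j \hat{\alpha}_j y_j \Phi(\mathbf{x}_j)$, sandwich it with $\hat{V}_1^\top$ on the left (to kill the $\hat{E}$-term via $\hat{V}_1^\top\hat{V}_2 = \mathbf{0}$ and reduce $\hat{V}_1^\top\hat{V}_1$ to the identity) and $\Phi(\mathbf{x}_i)$ on the right (to collapse the basis functions into kernel matrices), and finish by transposing. If anything, you are slightly more careful than the paper, which leaves the symmetry relation $K(\mathbf{x}_j,\mathbf{x}_i)^\top = K(\mathbf{x}_i,\mathbf{x}_j)$ implicit in its final transposition step.
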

\begin{proof}
From Eq. \eqref{uv} we know that 
\begin{align}
\hat{V}_1\hat{U}_1^\top =  \sum_{j = 1}^n \hat{\alpha}_j y_j \Phi \left(\mathbf{x}_j \right)^\top - \hat{V}_2\hat{E}^\top\hat{U}_2^\top.
\end{align}
Then to construct the convolution output, we multiply $\Phi\left( \mathbf{x}_i \right)$ on the right for both sides of the equation:
\begin{align}
\hat{V}_1\hat{U}_1^\top \Phi\left( \mathbf{x}_i \right) &=  \sum_{j = 1}^n \hat{\alpha}_j y_j \Phi \left(\mathbf{x}_j \right)^\top \Phi\left( \mathbf{x}_i \right) - \hat{V}_2\hat{E}^\top\hat{U}_2^\top \Phi\left( \mathbf{x}_i \right) \nonumber \\
&= \sum_{j = 1}^n \hat{\alpha}_j y_j K \left(\mathbf{x}_j, \mathbf{x}_i \right) - \hat{V}_2\hat{E}^\top\hat{U}_2^\top \Phi\left( \mathbf{x}_i \right) \nonumber
\end{align}
For the first term, we have generated the kernel generating matrix. Now we need to eliminate the $\hat{V}_1$ on the left side of the equation to get the exact form of the convolution output, and also avoid computing $\Phi\left( \mathbf{x}_i \right)$ on the right side of the equation. Therefore, for both sides of the equation, we multiply $\hat{V}_1^\top$ on the left, then we get
\begin{align}
\hat{V}_1^\top\hat{V}_1\hat{U}_1^\top \Phi\left( \mathbf{x}_i \right) =  \sum_{j = 1}^n \hat{\alpha}_j y_j \hat{V}_1^\top K \left(\mathbf{x}_j, \mathbf{x}_i \right) - \hat{V}_1^\top\hat{V}_2\hat{E}^\top\hat{U}_2^\top \Phi\left( \mathbf{x}_i \right). \nonumber
\end{align}
Knowing that $\hat{V}_1^\top\hat{V}_1 = \mathbf{I}$ and $\hat{V}_1^\top\hat{V}_2 = \mathbf{0}$, we have
\begin{align}
\hat{U}_1^\top \Phi\left( \mathbf{x}_i \right) =  \sum_{j = 1}^n \hat{\alpha}_j y_j \hat{V}_1^\top K \left(\mathbf{x}_j, \mathbf{x}_i \right).
\end{align}
Taking transpose on both sides of the equation completes the proof.
\end{proof}

\section{Detailed Proofs for Multi-class Classification} \label{multi_proof}
\subsection{Proof of Theorem \ref{multi_dual_theorem}}
\renewcommand{\thetheorem}{\ref{multi_dual_theorem}}
\begin{theorem} 
For all dual variables $\alpha_{k,i} \geq 0$ with $i \in [n]$ and $k \in [m]$, the dual problem of Eq. \eqref{multi_real_primal} is given by: 
\begin{align} 
\maximize_{\alpha} 
\qquad & 
- c\sum_{i = 1}^n \sum_{k = 1}^m \ell^\ast \left(- \frac{\alpha_{k,i}}{c} \right) \nonumber  \\
\st \qquad & 
\lambda_{max} \Big(  \sum_{i=1}^n\sum_{j=1}^n\sum_{k=1}^m\alpha'_{k,i} \alpha'_{k,j} K'_k (\mathbf{x}_i, \mathbf{x}_j) \Big) \leq 1 \,,  
  \tag{\ref{multi_real_dual}} \\
& \alpha_{k,i} \geq 0, \ \alpha_{y_i,i} = 0, \quad \forall i \in [n], \ \forall k \in [m] \,, \nonumber 
\end{align} 
in which $\alpha'_{k,i} = \sum_{s = 1}^m \alpha_{s,i} \mathbbm{1}\left[ k = y_i \right] - \alpha_{k,i}$,  $\ell^\ast(\cdot)$ is the Fenchel conjugate of the loss function $\ell(\cdot)$, and the kernel generating matrix is constructed in $K'_k(\mathbf{x}_i, \mathbf{x}_j) = \diag\big( \mathbf{0}_{p \times p}, ..., \underbrace{K(\mathbf{x}_i, \mathbf{x}_j)}_{ k^{th} \ block \ diagonal}, ... , \mathbf{0}_{p \times p}\big)$, with $K (\mathbf{x}_i, \mathbf{x}_j) = \Phi (\mathbf{x}_i)^\top \Phi (\mathbf{x}_j)$.
\end{theorem}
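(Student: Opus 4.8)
The plan is to run the same Lagrangian-duality pipeline used for the binary case in Theorem \ref{dual_theorem}, now attaching one multiplier $\alpha_{k,i} \geq 0$ to each margin constraint of Eq. \eqref{multi_real_primal}, indexed by $(k,i)$ with $k \neq y_i$. First I would form
\[
\mathcal{L}(A, \xi, \alpha) = \normnu{A} + c\sum_{i=1}^n \sum_{k \neq y_i} \ell(\xi_{k,i}) + \sum_{i=1}^n \sum_{k \neq y_i} \alpha_{k,i}\Big( \xi_{k,i} - \Tr\big(\Phi'_{y_i}(\mathbf{x}_i)^\top A\big) + \Tr\big(\Phi'_{k}(\mathbf{x}_i)^\top A\big) \Big),
\]
and then split the dual function $g(\alpha) = \min_{A,\xi}\mathcal{L}$ into an $A$-part and a separable $\xi$-part, exactly mirroring the binary derivation.

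The first genuinely new step is the bookkeeping that collects every term linear in $A$ into a single inner product $\langle M, A\rangle$. Each sample $i$ contributes $-\alpha_{k,i}\,\Phi'_{y_i}(\mathbf{x}_i)$ together with $+\alpha_{k,i}\,\Phi'_{k}(\mathbf{x}_i)$ for every $k \neq y_i$, so grouping by class index makes the coefficient of $\Phi'_{k}(\mathbf{x}_i)$ equal to $\big(\sum_{s}\alpha_{s,i}\big)\mathbbm{1}[k=y_i] - \alpha_{k,i}$, which is precisely the quantity $\alpha'_{k,i}$ in the statement (the convention $\alpha_{y_i,i}=0$ lets the true-class term be written as a clean sum over all $s$). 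This yields $M = \sum_{i=1}^n \sum_{k=1}^m \alpha'_{k,i}\,\Phi'_{k}(\mathbf{x}_i)$, and $\min_A \{\normnu{A} - \langle M, A\rangle\}$ is the Fenchel conjugate of the nuclear norm, i.e. the indicator of the unit spectral ball, giving the constraint $\Vert M \Vert_2 \leq 1$. The $\xi$-minimization is separable and formally identical to the binary case, producing the objective $-c\sum_{i}\sum_{k}\ell^\ast(-\alpha_{k,i}/c)$.

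Finally I would square the spectral-norm constraint, writing $\Vert M \Vert_2^2 = \lambda_{max}(M^\top M)$ as in Theorem \ref{dual_theorem}, and expand $M^\top M = \sum_{i,j}\sum_{k,l}\alpha'_{k,i}\alpha'_{l,j}\,\Phi'_k(\mathbf{x}_i)^\top \Phi'_l(\mathbf{x}_j)$. The structural fact I would exploit is that $\Phi'_k(\mathbf{x}_i)$ places $\Phi(\mathbf{x}_i)$ only in its $k$-th block, so $\Phi'_k(\mathbf{x}_i)^\top \Phi'_l(\mathbf{x}_j)$ is supported on the single $(k,l)$ block, where it equals $K(\mathbf{x}_i,\mathbf{x}_j)$; substituting the kernel eliminates the (possibly infinite-dimensional) basis maps and reassembles the constraint in terms of the block-structured matrices $K'_k$. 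I expect the main obstacle to be exactly this reduction: tracking the double block index $(k,l)$ when squaring the norm and verifying that the aggregated expression collapses to the stated $\lambda_{max}\big(\sum_{i,j,k}\alpha'_{k,i}\alpha'_{k,j}K'_k(\mathbf{x}_i,\mathbf{x}_j)\big)$ form, since this is where the block-orthogonality of the $\Phi'_k$ construction does the real work and where the cross-block ($k \neq l$) contributions must be handled with care. Everything preceding that step is a routine transcription of the binary argument.
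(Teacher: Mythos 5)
Your pipeline is the paper's own, step for step, through the squaring of the spectral norm: the same Lagrangian with the $\alpha_{y_i,i}=0$ convention, the same regrouping that produces $\alpha'_{k,i}=\sum_{s=1}^m \alpha_{s,i}\mathbbm{1}[k=y_i]-\alpha_{k,i}$ and $M=\sum_{i=1}^n\sum_{k=1}^m \alpha'_{k,i}\Phi'_k(\mathbf{x}_i)$, the same identification of the $A$-minimization with the conjugate of the nuclear norm (indicator of the unit spectral ball, hence $\Vert M\Vert_2\le 1$), the same separable treatment of the $\xi$-part, and the same rewriting $\Vert M\Vert_2^2=\lambda_{max}(M^\top M)$. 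The one step you do not carry out is the one you flag yourself: collapsing $M^\top M=\sum_{i,j}\sum_{k,l}\alpha'_{k,i}\alpha'_{l,j}\,\Phi'_k(\mathbf{x}_i)^\top\Phi'_l(\mathbf{x}_j)$ to the block-diagonal sum $\sum_{i,j,k}\alpha'_{k,i}\alpha'_{k,j}K'_k(\mathbf{x}_i,\mathbf{x}_j)$. Announcing that the cross-block contributions ``must be handled with care'' is not handling them; as written this is a genuine gap, and it sits exactly at the step that distinguishes the multiclass theorem from the binary one.

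The gap is not a routine verification, and your own block computation shows why. You correctly note that $\Phi'_k(\mathbf{x}_i)^\top\Phi'_l(\mathbf{x}_j)$ is supported on the single $(k,l)$ block, where it equals $K(\mathbf{x}_i,\mathbf{x}_j)$. The paper closes the step differently: its proof invokes Eq. \eqref{multi_kernel}, which asserts that this product is $\mathbf{0}_{mp\times mp}$ whenever $k\neq l$, and that identity is what makes every cross term vanish at a stroke. Your computation contradicts that identity --- and yours is the one consistent with the definition of $\Phi'_k$: for $k\neq l$ the product is a nonzero \emph{off-diagonal} block, so the $(k,l)$ block of $M^\top M$ is $G_k^\top G_l$ with $G_k=\sum_i \alpha'_{k,i}\Phi(\mathbf{x}_i)$, which does not vanish in general. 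Nor can the off-diagonal blocks be discarded afterwards, since $\lambda_{max}$ of a positive semidefinite block matrix is not determined by its diagonal blocks: already for $m=2$ the structural relation $\sum_k\alpha'_{k,i}=0$ forces $G_2=-G_1$, so the cross blocks equal $-G_1^\top G_1$ and $\lambda_{max}(M^\top M)=2\,\lambda_{max}(G_1^\top G_1)$, twice the value measured by the stated block-diagonal expression. So your outline, completed with your own (correct) block identity, arrives at the constraint $\lambda_{max}(M^\top M)\le 1$ with all cross blocks present, not at the constraint in Eq. \eqref{multi_real_dual}; reaching the stated form requires either the paper's identity \eqref{multi_kernel}, which your algebra refutes, or a separate argument for dropping the off-diagonal blocks, which cannot hold in general. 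The obstacle you anticipated is precisely where the paper's proof leans on this questionable identity, and your proposal neither closes it nor can close it in the advertised form.
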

\begin{proof}
For Lagrangian variables $\alpha_{k,i} \geq 0$, $i \in [n]$, $k \in [m] \setminus \{y_i\}$, we can write the Lagrangian function of Eq. \eqref{multi_real_primal} as 
\begin{align} \nonumber
\mathcal{L}\left(A, \xi, \alpha \right) &= \left \Vert A \right \Vert_\ast + c \sum_{i = 1}^n \sum_{k \neq y_i} \ell(\xi_{k,i}) + \sum_{i = 1}^n \sum_{k \neq y_i} \alpha_{k,i} \left( \xi_{k,i} - \Tr \left( \Phi'_{y_i} \left(\mathbf{x}_i\right)^\top A \right) + \Tr \left( \Phi'_k \left(\mathbf{x}_i\right)^\top A \right) \right). \nonumber
\end{align} 
For the simplicity of notation, we define $\alpha_{y_i, i} = 0$, $\xi_{y_i, i} = 0$, and denote $\mathcal{L} = \mathcal{L}\left(A, \xi, \alpha\right)$. Then the Lagrangian function becomes 
\begin{align} 
\mathcal{L} &= \left \Vert A \right \Vert_\ast + \sum_{i = 1}^n \sum_{k = 1}^m \alpha_{k,i} \left( \Tr \left( \Phi'_k \left(\mathbf{x}_i\right)^\top A \right) - \Tr \left( \Phi'_{y_i} \left(\mathbf{x}_i\right)^\top A \right) \right) + c \sum_{i = 1}^n \sum_{k = 1}^m \ell(\xi_{k,i}) + \sum_{i = 1}^n \sum_{k = 1}^m \alpha_{k,i} \xi_{k,i}. \label{multi_lagrangian}
\end{align}
By the definition of the dual function, 
\begin{align}
g\left(\alpha\right) &= \min_{A, \xi} \mathcal{L}\left(A, \xi, \alpha \right) \nonumber \\
&= \underbrace{\min_{A} \big\{\left \Vert A \right \Vert_\ast - \sum_{i = 1}^n \sum_{k = 1}^m \alpha_{k,i} \left( \Tr \left( \Phi'_{y_i} \left(\mathbf{x}_i\right)^\top A \right) - \Tr \left( \Phi'_k \left(\mathbf{x}_i\right)^\top A \right) \right) \big\}}_{g_1} \nonumber \\
& \quad + \underbrace{\min_{\xi} \big\{ c \sum_{i = 1}^n \sum_{k = 1}^m \ell(\xi_{k,i}) + \sum_{i = 1}^n \sum_{k = 1}^m \alpha_{k,i}\xi_{k,i} \big\}}_{g_2}
\end{align}
For $g_1$, we have
\begin{align} \nonumber
g_1 &= \min_{A} \left \Vert A \right \Vert_\ast - \sum_{k = 1}^m \left( \sum_{i = 1}^n  \alpha_{k,i}  \Tr \left( \Phi'_{y_i} \left(\mathbf{x}_i\right)^\top A \right) - \sum_{i = 1}^n  \alpha_{k,i} \Tr \left( \Phi'_k \left(\mathbf{x}_i\right)^\top A \right) \right) \\ \nonumber
&= \min_{A} \left \Vert A \right \Vert_\ast - \sum_{k = 1}^m \left ( \sum_{i = 1}^n  \sum_{s = 1}^m \alpha_{s,i} \mathbbm{1}\left[ k = y_i \right] \Tr \left( \Phi'_k \left(\mathbf{x}_i\right)^\top A \right) - \sum_{i = 1}^n  \alpha_{k,i} \Tr \left( \Phi'_k \left(\mathbf{x}_i\right)^\top A \right) \right) \\ \nonumber
&= \min_{A} \left \Vert A \right \Vert_\ast - \sum_{k = 1}^m \sum_{i = 1}^n \left ( \sum_{s = 1}^m \alpha_{s,i} \mathbbm{1}\left[ k = y_i \right] - \alpha_{k,i}\right ) \Tr \left( \Phi'_k \left(\mathbf{x}_i\right)^\top A \right) \\ \nonumber
&= - \max_{A} \sum_{k = 1}^m  \sum_{i = 1}^n  \left ( \sum_{s = 1}^m \alpha_{s,i} \mathbbm{1}\left[ k = y_i \right] - \alpha_{k,i} \right ) \Tr \left( \Phi'_k \left(\mathbf{x}_i\right)^\top A \right) - \left \Vert A \right \Vert_\ast \\ \nonumber
&= - \max_{A} \left \langle \sum_{k = 1}^m \sum_{i = 1}^n  \left( \sum\nolimits_{s = 1}^m \alpha_{s,i} \mathbbm{1}\left[ k = y_i \right] - \alpha_{k,i} \right) \Phi'_k \left(\mathbf{x}_i\right), A \right \rangle - \left \Vert A \right \Vert_\ast \\
&= - \max_{A}  \left \langle \sum_{k = 1}^m \sum_{i = 1}^n \alpha'_{k,i}   \Phi'_k \left(\mathbf{x}_i\right), A \right \rangle - \left \Vert A \right \Vert_\ast  \label{g1}
\end{align}
in which Eq. \eqref{g1} utilizes the linearity of trace and the definition of matrix inner product, and for a shorthand notation, $\alpha'_{k,i} = \sum_{s = 1}^m \alpha_{s,i} \mathbbm{1}\left[ k = y_i \right] - \alpha_{k,i}$.

We can see that $g_1$ can be re-written as the conjugate function of the nuclear norm, i.e. for $f_0^\ast$ in Eq. \eqref{conjugate},
\begin{align}
g_1 = - f_0^\ast \left( \sum_{i = 1}^n \sum_{k=1}^m \alpha'_{k,i} \Phi'_k \left(\mathbf{x}_i\right) \right).
\end{align}
Furthermore, we must have $\left \Vert \sum_{i = 1}^n \sum_{k=1}^m \alpha'_{k,i} \Phi'_k \left(\mathbf{x}_i\right) \right \Vert_2 \leq 1$ for $g_1$ to be bounded. 
Also,  
\begin{align}
    g_2 &= \min_{\xi} c \sum_{i = 1}^n \sum_{k = 1}^m \ell(\xi_{k,i}) + \sum_{i = 1}^n \sum_{k = 1}^m \alpha_{k,i}\xi_{k,i} \nonumber \\
        &= c \sum_{i = 1}^n \sum_{k = 1}^m \left(\min_{\xi_{k,i}} \ell(\xi_{k,i}) + \frac{\alpha_{k,i}}{c}\xi_{k,i} \right) \nonumber \\
        &= - c \sum_{i = 1}^n \sum_{k = 1}^m \left(\max_{\xi_{k,i}} \langle - \frac{\alpha_{k,i}}{c}, \xi_{k,i} \rangle - \ell(\xi_{k,i}) \right) \nonumber \\
        &= -c \sum_{i = 1}^n \sum_{k = 1}^m \ell^\ast\left(- \frac{\alpha_{k,i}}{c} \right), \nonumber
\end{align}
in which $\ell^\ast(\cdot) = \sup_{\xi_{k,i}} \left\{ \langle \cdot, \xi_{k,i} \rangle - \ell(\xi_{k,i}) \right\}$ is the conjugate function of the loss function $\ell(\xi_{k,i})$. 
Therefore, the dual problem of Eq. \ref{multi_real_primal} is as follows: 
\begin{align} 
\maximize_{\alpha} 
\qquad & 
-c \sum_{i = 1}^n \sum_{k = 1}^m \ell^\ast\left(- \frac{\alpha_{k,i}}{c} \right) \nonumber  \\
\st \qquad & 
\left \Vert \sum_{i = 1}^n \sum_{k=1}^m \alpha'_{k,i} \Phi'_k \left(\mathbf{x}_i\right) \right \Vert_2 \leq 1 \,,  
  \label{multi_dual} \\
& \alpha_{k,i} \geq 0, \ \alpha_{y_i,i} = 0, \quad \forall i \in [n], \ \forall k \in [m] \,, \nonumber 
\end{align} 
The spectral norm constraint in Eq. \eqref{multi_dual} is equivalent to
\begin{align}
\left \Vert \sum_{i = 1}^n \sum_{k=1}^m \alpha'_{k,i} \Phi'_k \left(\mathbf{x}_i\right) \right \Vert_2^2 \leq 1,
\end{align}
which can be further transformed by
\begin{align}
\left \Vert \sum_{i = 1}^n \sum_{k=1}^m \alpha'_{k,i} \Phi'_k \left(\mathbf{x}_i\right) \right \Vert_2^2 \nonumber &= \sigma_{max} \left(  \sum_{i = 1}^n \sum_{k=1}^m \alpha'_{k,i} \Phi'_k \left(\mathbf{x}_i\right) \right)^2 \nonumber \\
& = \lambda_{max} \left( \left( \sum_{i = 1}^n \sum_{k=1}^m \alpha'_{k,i} \Phi'_k \left(\mathbf{x}_i\right) \right)^\top \left(  \sum_{j = 1}^n \sum_{l=1}^m \alpha'_{l,j} \Phi'_l \left(\mathbf{x}_j\right) \right) \right) \nonumber \\
&= \lambda_{max} \left(  \sum_{i,j} \sum_{k,l} \alpha'_{k,i} \alpha'_{l,j} \Phi'_k \left(\mathbf{x}_i\right)^\top \Phi'_l \left(\mathbf{x}_j\right) \right).
\end{align}
Notice that $\forall k, l \in [m]$, 
\begin{align}
&\Phi'_k \left(\mathbf{x}_i\right)^\top \Phi'_l \left(\mathbf{x}_j\right) = 
\begin{cases}
\ \mathbf{0}_{mp \times mp} \ , & k \neq l \\
\ K'_k\left(\mathbf{x}_i, \mathbf{x}_j\right) \, & k = l
\end{cases}, \label{multi_kernel} \\
&K'_k\left(\mathbf{x}_i, \mathbf{x}_j\right) = \diag\left( \mathbf{0}_{p \times p}, ..., \underbrace{K\left(\mathbf{x}_i, \mathbf{x}_j\right)}_{ k^{th} \ block \ diagonal}, ... , \mathbf{0}_{p \times p} \right), \nonumber
\end{align}
in which $K \left(\mathbf{x}_i, \mathbf{x}_j\right) = \Phi \left(\mathbf{x}_i\right)^\top \Phi \left(\mathbf{x}_j\right)$.
Therefore we have the following dual optimization problem for multiclass classification:
\begin{align} 
\maximize_{\alpha} 
\qquad & 
- c\sum_{i = 1}^n \sum_{k = 1}^m \ell^\ast \left(- \frac{\alpha_{k,i}}{c} \right) \nonumber  \\
\st \qquad & 
\lambda_{max} \Big(  \sum_{i=1}^n\sum_{j=1}^n\sum_{k=1}^m\alpha'_{k,i} \alpha'_{k,j} K'_k (\mathbf{x}_i, \mathbf{x}_j) \Big) \leq 1 \,,  
  \tag{\ref{multi_real_dual}} \\
& \alpha_{k,i} \geq 0, \ \alpha_{y_i,i} = 0, \quad \forall i \in [n], \ \forall k \in [m] \,. \nonumber 
\end{align} 
\end{proof}

\subsection{Proof of Theorem \ref{theom_multi_linear}}
\renewcommand{\thetheorem}{\ref{theom_multi_linear}}
\begin{theorem} 
We can recover the linear weight $\hat{V}_1 \in \R^{mp \times r}$ by
\begin{align}
\hat{V}_1 = \left[\tilde{V}(i)\right], \forall \ i \ such \ that \ \lambda_i = 1, \label{multi_linear}
\end{align}
in which $\tilde{V} \in \R^{mp \times mp}$ comes from
\begin{align}
\tilde{V} \Lambda \tilde{V}^{-1} = \sum_{i=1}^n\sum_{j=1}^n\sum_{k=1}^m\alpha'_{k,i} \alpha'_{k,j} K'_k \left(\mathbf{x}_i, \mathbf{x}_j\right),
\end{align}
and $\Lambda = \diag(\lambda_1, ..., \lambda_{mp}) \in \R^{mp \times mp}$, $\hat{\alpha}'_{k,i} = \sum_{s=1}^m \hat{\alpha}_{s,i} \mathbbm{1}\left[ k = y_i \right] - \hat{\alpha}_{k,i}$.
\end{theorem}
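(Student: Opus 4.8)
The plan is to mirror the proof of Theorem \ref{theo_binary_linear} verbatim, replacing the binary stationary quantity $\sum_{i} \hat{\alpha}_i y_i \Phi(\mathbf{x}_i)$ by its multiclass analogue. First I would record the multiclass stationary condition. Differentiating the Lagrangian in Eq. \eqref{multi_lagrangian} with respect to $A$ and using that the nuclear norm is non-differentiable, the KKT stationarity reads $0 \in \partial \|\hat{A}\|_\ast - \sum_{i=1}^n \sum_{k=1}^m \hat{\alpha}'_{k,i} \Phi'_k(\mathbf{x}_i)$, i.e. some element of the subdifferential at $\hat{A}$ equals $\sum_{i,k} \hat{\alpha}'_{k,i} \Phi'_k(\mathbf{x}_i)$, where $\hat{\alpha}'_{k,i}$ is the shorthand defined in the statement. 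This plays exactly the role that $\sum_i \hat{\alpha}_i y_i \Phi(\mathbf{x}_i)$ played in the binary case.

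Next I would invoke Lemma \ref{subdifferntial} on the compact SVD $\hat{A} = \hat{U}_1 \hat{D}_1 \hat{V}_1^\top$ and full SVD $\hat{A} = \hat{U}\hat{D}\hat{V}^\top$: there exists $\hat{E}$ with $\sigma_{max}(\hat{E}) \leq 1$ such that $T := \hat{U}_1 \hat{V}_1^\top + \hat{U}_2 \hat{E} \hat{V}_2^\top = \sum_{i,k} \hat{\alpha}'_{k,i} \Phi'_k(\mathbf{x}_i)$, now living in $\R^{d_2 \times mp}$. Forming $S = T^\top T$ and expanding the right-hand side gives $\sum_{i,j} \sum_{k,l} \hat{\alpha}'_{k,i} \hat{\alpha}'_{l,j} \Phi'_k(\mathbf{x}_i)^\top \Phi'_l(\mathbf{x}_j)$. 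The one genuinely new step relative to the binary proof is to collapse this double class sum: by the block structure recorded in Eq. \eqref{multi_kernel}, $\Phi'_k(\mathbf{x}_i)^\top \Phi'_l(\mathbf{x}_j) = \mathbf{0}$ whenever $k \neq l$ and equals $K'_k(\mathbf{x}_i, \mathbf{x}_j)$ when $k = l$, so the cross terms vanish and $S = \sum_{i,j,k} \hat{\alpha}'_{k,i} \hat{\alpha}'_{k,j} K'_k(\mathbf{x}_i, \mathbf{x}_j)$, which is exactly the matrix being eigendecomposed in the statement.

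From here the argument transfers identically. Expanding the left-hand side of $S = T^\top T$ and using the semi-orthogonality $\hat{U}_1^\top \hat{U}_1 = \mathbf{I}$, $\hat{U}_2^\top \hat{U}_2 = \mathbf{I}$, $\hat{U}_1^\top \hat{U}_2 = \mathbf{0}$ yields $S = \hat{V}_1 \hat{V}_1^\top + \hat{V}_2 \hat{E}^\top \hat{E} \hat{V}_2^\top$. Since $\hat{V}_1 \perp \hat{V}_2$, the two summands commute, hence are simultaneously diagonalizable by some $\tilde{V} \in \R^{mp \times mp}$ with $\Lambda = \Lambda_v + \Lambda_e$; the projection $\hat{V}_1 \hat{V}_1^\top$ contributes only $0$'s and $1$'s to $\Lambda_v$, while $\sigma_{max}(\hat{E}) \leq 1$ confines the diagonal of $\Lambda_e$ to $[0,1]$. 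By the same Lebesgue-measure-zero argument as in the binary case, a spurious eigenvalue of $\hat{V}_2 \hat{E}^\top \hat{E} \hat{V}_2^\top$ exactly equal to $1$ occurs with probability zero, so the eigenvectors of $S$ with eigenvalue $1$ are precisely the columns of $\hat{V}_1$, giving $\hat{V}_1 = [\tilde{V}(i)]$ for all $i$ with $\lambda_i = 1$.

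I expect the main obstacle to be the block-collapse step that eliminates the $k \neq l$ cross terms via Eq. \eqref{multi_kernel}; everything else carries over directly from the binary proof, since the $mp$-dimensional SVD factors obey the same orthogonality relations as in the $p$-dimensional setting.
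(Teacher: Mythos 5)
Your proposal is correct and follows essentially the same route as the paper's own proof: the multiclass stationary condition, Lemma \ref{subdifferntial} applied to the SVD of $\hat{A}$, forming the Gram matrix $T^\top T$, collapsing the $k \neq l$ cross terms via the block structure in Eq. \eqref{multi_kernel}, and then reusing the binary argument (simultaneous diagonalization, $\Lambda = \Lambda_v + \Lambda_e$, and the measure-zero caveat on $\sigma_{max}(\hat{E}) = 1$). The block-collapse step you flagged as the main novelty is exactly the step the paper singles out as well, so no gap remains.
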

\begin{proof}
The proof of this theorem also leverages the stationary condition and the subdifferential of nuclear norm as in Theorem \ref{theo_binary_linear}. For the stationary condition of problem \eqref{multi_real_primal} with respect to parameter $A$,
\begin{align}
\partial \big \Vert \hat{A} \big \Vert_\ast - \sum_{i = 1}^n \sum_{k = 1}^m \hat{\alpha}'_{k,i} \Phi'_k\left( \mathbf{x}_i\right) = 0,
\end{align}
Then for $\hat{A} \in \mathbb{R}^{d_2 \times mp}$, consider its compact SVD $\hat{A} = \hat{U}_1\hat{D}_1\hat{V}_1^\top$, in which $\hat{U}_1 \in \mathbb{R}^{d_2 \times r}$, $\hat{D}_1 \in \mathbb{R}^{r \times r}$, and $\hat{V}_1 \in \mathbb{R}^{mp \times r}$, and the full SVD $\hat{A} = \hat{U}\hat{D}\hat{V}^\top$, in which $\hat{U} = [\hat{U}_1 \mid \hat{U}_2] \in \mathbb{R}^{d_2 \times d_2}$, $\hat{V} = [\hat{V}_1 \mid \hat{V}_2] \in \mathbb{R}^{mp \times mp}$. By Lemma \ref{subdifferntial}, we know that $\exists \ \hat{E} \in \mathbb{R}^{(d_2-r) \times (mp-r)}$ such that 
\begin{align}
\hat{U}_1\hat{V}_1^\top + \hat{U}_2\hat{E}\hat{V}_2^\top = \sum_{i = 1}^n \sum_{k = 1}^m \hat{\alpha}'_{k,i} \Phi'_k\left( \mathbf{x}_i\right). \label{multi_uv}
\end{align}
By computing the Gram matrix of both sides of Eq. \eqref{multi_uv},
\begin{align}
\left(\hat{U}_1\hat{V}_1^\top + \hat{U}_2\hat{E}\hat{V}_2^\top\right)^\top\left(\hat{U}_1\hat{V}_1^\top + \hat{U}_2\hat{E}\hat{V}_2^\top\right) &= \sum_{i = 1}^n \sum_{j = 1}^n \sum_{k = 1}^m \sum_{l = 1}^m \hat{\alpha}'_{k,i} \hat{\alpha}'_{l,j} \Phi'_k\left( \mathbf{x}_i\right)^\top \Phi'_l\left( \mathbf{x}_j\right) \nonumber \\
&= \sum_{i = 1}^n \sum_{j = 1}^n \sum_{k = 1}^m \hat{\alpha}'_{k,i} \hat{\alpha}'_{k,j} K'_k\left( \mathbf{x}_i, \mathbf{x}_j\right) \nonumber
\end{align}
where we generate the kernel generating matrix in the form of Eq. \eqref{multi_kernel} on the right side. For the left side, by a similar simplification procedure as in the proof of Theorem \ref{theo_binary_linear}, we have
\begin{align}
    \hat{V}_1\hat{V}_1^\top + \hat{V}_2\hat{E}^\top\hat{E}\hat{V}_2^\top = \sum_{i = 1}^n \sum_{j = 1}^n \sum_{k = 1}^m \hat{\alpha}'_{k,i} \hat{\alpha}'_{k,j} K'_k\left( \mathbf{x}_i, \mathbf{x}_j\right). \nonumber
\end{align}
Consequently, following the proof steps of Theorem \ref{theo_binary_linear}, we can say that $\hat{V}_1 = \big[\tilde{V}(i)\big]$ for all $i$ such that $\Lambda_{ii} = 1$ is the linear weight we recover, where we compute $\tilde{V}$ and $\Lambda$ by
\begin{align}
\tilde{V} \Lambda \tilde{V}^{-1} = \sum_{i = 1}^n \sum_{j = 1}^n \sum_{k = 1}^m \hat{\alpha}'_{k,i} \hat{\alpha}'_{k,j} K'_k\left( \mathbf{x}_i, \mathbf{x}_j\right). \nonumber
\end{align}
\end{proof}

\subsection{Proof of Theorem \ref{theom_multi_conv}}
\renewcommand{\thetheorem}{\ref{theom_multi_conv}}
\begin{theorem} 
We can implicitly recover the convolutional weight in the convolution output for multi-class classification. That is, for all $i \in [n]$, we have 
\begin{align}
    \Phi\left( \mathbf{x}_i \right)^\top\hat{U}_1 = \sum_{j = 1}^n \sum_{k=1}^m \alpha'_{k,j} \big [ \mathbf{0}_{p \times (k-1)p}, K \left(\mathbf{x}_i, \mathbf{x}_j\right), \mathbf{0}_{p \times (m-k)p} \big] \hat{V}_1,
\end{align}
where $\hat{\alpha}'_{k,i} = \sum_{s=1}^m \hat{\alpha}_{s,i} \mathbbm{1}\left[ k = y_i \right] - \hat{\alpha}_{k,i}$.
\end{theorem}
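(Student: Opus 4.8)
The plan is to transcribe the argument of Theorem~\ref{theo_binary_conv} into the multiclass setting, the only new feature being the block structure carried by the padded maps $\Phi'_k$. I would start from Eq.~\eqref{multi_uv}, the multiclass stationarity condition combined with the subdifferential characterization of Lemma~\ref{subdifferntial}: there exists $\hat{E}$ with $\hat{U}_1\hat{V}_1^\top + \hat{U}_2\hat{E}\hat{V}_2^\top = \sum_{j=1}^n\sum_{k=1}^m \hat{\alpha}'_{k,j}\,\Phi'_k(\mathbf{x}_j)$. Transposing and isolating the rank-$r$ factor gives $\hat{V}_1\hat{U}_1^\top = \sum_{j,k}\hat{\alpha}'_{k,j}\,\Phi'_k(\mathbf{x}_j)^\top - \hat{V}_2\hat{E}^\top\hat{U}_2^\top$, which is the exact analogue of the first display in the proof of Theorem~\ref{theo_binary_conv}.

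Next I would right-multiply both sides by $\Phi(\mathbf{x}_i)$ so that the $\Phi'_k(\mathbf{x}_j)^\top\Phi(\mathbf{x}_i)$ terms turn into kernel blocks. The key elementary computation is that, since $\Phi'_k(\mathbf{x}_j)$ inserts $\Phi(\mathbf{x}_j)$ into the $k$-th horizontal block and is zero elsewhere, the product $\Phi'_k(\mathbf{x}_j)^\top\Phi(\mathbf{x}_i)$ is the $mp\times p$ matrix whose $k$-th vertical block equals $K(\mathbf{x}_j,\mathbf{x}_i)=\Phi(\mathbf{x}_j)^\top\Phi(\mathbf{x}_i)$ and whose other blocks vanish. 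I would then left-multiply by $\hat{V}_1^\top$ and invoke the SVD orthogonality relations $\hat{V}_1^\top\hat{V}_1=\mathbf{I}$ and $\hat{V}_1^\top\hat{V}_2=\mathbf{0}$; these simultaneously collapse the left side to $\hat{U}_1^\top\Phi(\mathbf{x}_i)$ and annihilate the residual $\hat{V}_2\hat{E}^\top\hat{U}_2^\top$ term, so both the inaccessible factor $\Phi(\mathbf{x}_i)$ on the right and the unknown $\hat{U}_2,\hat{E}$ drop out. What remains expresses $\hat{U}_1^\top\Phi(\mathbf{x}_i)$ solely through $\hat{V}_1$ and the kernel blocks.

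Finally I would transpose both sides: $\big(\hat{V}_1^\top\,\Phi'_k(\mathbf{x}_j)^\top\Phi(\mathbf{x}_i)\big)^\top$ moves $\hat{V}_1$ to the right and turns the $k$-th block $K(\mathbf{x}_j,\mathbf{x}_i)$ into its transpose $K(\mathbf{x}_j,\mathbf{x}_i)^\top = K(\mathbf{x}_i,\mathbf{x}_j)$, producing exactly the padded row block $[\mathbf{0}_{p\times(k-1)p},\,K(\mathbf{x}_i,\mathbf{x}_j),\,\mathbf{0}_{p\times(m-k)p}]$ in the statement. The argument is essentially routine given the binary case; the only parts demanding care are bookkeeping, namely (i) correctly reading off the block pattern of $\Phi'_k(\mathbf{x}_j)^\top\Phi(\mathbf{x}_i)$ and lining it up with the padded block after transposition, and (ii) tracking the index order of $K$, since the multiplication naturally yields $K(\mathbf{x}_j,\mathbf{x}_i)$ and it is the final transpose together with the kernel symmetry that restores the orientation $K(\mathbf{x}_i,\mathbf{x}_j)$ claimed in the theorem. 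I do not anticipate any genuinely new difficulty beyond this.
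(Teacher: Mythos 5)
Your proposal is correct and follows essentially the same route as the paper's proof: start from the multiclass stationarity identity in Eq.~\eqref{multi_uv}, transpose to isolate $\hat{V}_1\hat{U}_1^\top$, right-multiply by $\Phi(\mathbf{x}_i)$ to turn each $\Phi'_k(\mathbf{x}_j)^\top\Phi(\mathbf{x}_i)$ into a padded kernel block, left-multiply by $\hat{V}_1^\top$ so that $\hat{V}_1^\top\hat{V}_1=\mathbf{I}$ and $\hat{V}_1^\top\hat{V}_2=\mathbf{0}$ simultaneously collapse the left side and annihilate the $\hat{V}_2\hat{E}^\top\hat{U}_2^\top$ residual, then transpose. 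The two bookkeeping points you flag, the block pattern of $\Phi'_k(\mathbf{x}_j)^\top\Phi(\mathbf{x}_i)$ and the use of $K(\mathbf{x}_j,\mathbf{x}_i)^\top=K(\mathbf{x}_i,\mathbf{x}_j)$ in the final transposition, are exactly the steps the paper's derivation carries out.
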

\begin{proof}
From Eq. \eqref{multi_uv} we know that 
\begin{align}
\hat{V}_1\hat{U}_1^\top =  \sum_{j = 1}^n \sum_{k = 1}^m \hat{\alpha}'_{k,j} \Phi'_k\left( \mathbf{x}_j\right)^\top - \hat{V}_2\hat{E}^\top\hat{U}_2^\top.
\end{align}
To form the convolution output on the left and the kernel generating matrix on the right, we multiply $\Phi\left(\mathbf{x}_i\right)$ on the right for both sides of the equation:
\begin{align} \nonumber
\hat{V}_1\hat{U}_1^\top \Phi\left( \mathbf{x}_i \right) &=  \sum_{j = 1}^n \sum_{k = 1}^m \hat{\alpha}'_{k,j} \Phi'_k\left( \mathbf{x}_j\right)^\top\Phi\left( \mathbf{x}_i \right) - \hat{V}_2\hat{E}^\top\hat{U}_2^\top \Phi\left( \mathbf{x}_i \right) \\ \nonumber
&= \sum_{j = 1}^n \sum_{k = 1}^m \hat{\alpha}'_{k,j} \left[\mathbf{0}_{d_2 \times (k-1)p},\Phi\left( \mathbf{x}_j\right), \mathbf{0}_{d_2 \times (m-k)p} \right]^\top \Phi\left( \mathbf{x}_i \right) - \hat{V}_2\hat{E}^\top\hat{U}_2^\top \Phi\left( \mathbf{x}_i \right)\\ \nonumber
&= \sum_{j = 1}^n \sum_{k = 1}^m \hat{\alpha}'_{k,j} \left[\mathbf{0}_{p \times (k-1)p},\left (\Phi\left( \mathbf{x}_j\right)^\top \Phi\left( \mathbf{x}_i \right) \right)^\top, \mathbf{0}_{p \times (m-k)p} \right]^\top - \hat{V}_2\hat{E}^\top\hat{U}_2^\top \Phi\left( \mathbf{x}_i \right) \\ 
&= \sum_{j = 1}^n \sum_{k = 1}^m \hat{\alpha}'_{k,j} \left[\mathbf{0}_{p \times (k-1)p},K\left( \mathbf{x}_j, \mathbf{x}_i \right)^\top, \mathbf{0}_{p \times (m-k)p} \right]^\top - \hat{V}_2\hat{E}^\top\hat{U}_2^\top \Phi\left( \mathbf{x}_i \right).
\end{align}
For both sides of the equation, multiply with $\hat{V}_1^\top$ on the left, we have
\begin{align}
\hat{V}_1^\top\hat{V}_1\hat{U}_1^\top \Phi\left( \mathbf{x}_i \right) &=  \sum_{j = 1}^n \sum_{k = 1}^m \hat{\alpha}'_{k,j} \hat{V}_1^\top \left[\mathbf{0}_{p \times (k-1)p}, K\left( \mathbf{x}_j, \mathbf{x}_i \right)^\top, \mathbf{0}_{p \times (m-k)p} \right]^\top \nonumber \\ 
& \quad + \hat{V}_1^\top\hat{V}_2\hat{E}^\top\hat{U}_2^\top \Phi\left( \mathbf{x}_i \right). \nonumber
\end{align}
Knowing that $\hat{V}_1^\top\hat{V}_1 = \mathbf{I}$ and $\hat{V}_1^\top\hat{V}_2 = \mathbf{0}$, we have
\begin{align}
\hat{U}_1^\top \Phi\left( \mathbf{x}_i \right) =  \sum_{j = 1}^n \sum_{k = 1}^m \hat{\alpha}'_{k,j} \hat{V}_1^\top \left[\mathbf{0}_{p \times (k-1)p},K\left( \mathbf{x}_j, \mathbf{x}_i \right)^\top, \mathbf{0}_{p \times (m-k)p} \right]^\top.
\end{align}
Taking transpose on both sides gives the form of the convolution output and completes the proof.
\end{proof}

\section{Fenchel conjugate of Common Losses} \label{app_conjugate}
\subsection{Hinge Loss} \label{hinge_conjugate}
The hinge loss function is given by $\ell_H(x)=\max\{0, 1-x\}$, $\forall x \in \mathbb{R}$. Its Fenchel conjugate is
\begin{align}
    \ell^\ast_H(x^\ast) = 
    \begin{cases}
    \ x^\ast \ , & x^\ast \in [-1, 0] \\
    \infty, & otherwise
    \end{cases}.
\end{align}
Derivation can be found in \cite{phdthesis}.

\subsection{Squared Hinge Loss} 
The squared hinge loss function is given by $\ell_{SH}(x)=(\max\{0, 1-x\})^2$, $\forall x \in \mathbb{R}$. Its Fenchel conjugate is
\begin{align}
    \ell^\ast_{SH}(x^\ast) = 
    \begin{cases}
    \ x^\ast + \frac{{x^\ast}^2}{4} \ , & x^\ast \leq 0 \\
    \infty, & otherwise
    \end{cases}.
\end{align}
Derivation can be found in \cite{phdthesis}.

\subsection{Logistic Loss} {
The logistic loss function is given by $\ell_{L}(x)=\log(1+e^x)$, $\forall x \in \mathbb{R}$. Its Fenchel conjugate is
\begin{align}
    \ell^\ast_{L}(x^\ast) = 
    \begin{cases}
    \ x^\ast \log(x^\ast) + (1-x^\ast)\log(x^\ast) \ , & x^\ast \in [0,1] \\
    \infty, & otherwise
    \end{cases}.
\end{align}
Derivation can be found in \cite{borwein2005convex}.
}
\subsection{Exponential Loss} {
The exponential loss function is given by $\ell_{E}(x)=e^x$, $\forall x \in \mathbb{R}$. Its Fenchel conjugate is
\begin{align}
    \ell^\ast_{E}(x^\ast) = 
    \begin{cases}
    \ -x^\ast + x^\ast \log(x^\ast) \ , & x^\ast \geq 0 \\
    \infty, & otherwise
    \end{cases}.
\end{align}
Derivation can be found in \cite{borwein2005convex}.
}

\section{Dual Optimization with Hinge Loss} \label{hinge_optimization}
We apply hinge loss for experimental evaluation. Given the dual optimization problem derived in Theorem \ref{dual_theorem}:
\begin{align} 
\maximize_{\alpha} 
\quad & 
-c\sum_{i = 1}^n \ell^\ast\left(-\frac{\alpha_i}{c}\right) \nonumber \\
\st \quad & 
\lambda_{max} \Big(  \sum_{i = 1}^n \sum_{j = 1}^n \alpha_i \alpha_j y_i y_j K(\mathbf{x}_i, \mathbf{x}_j) \Big) \leq 1 \,, \tag{\ref{real_dual}}  \\ 
& \alpha_i \geq 0 \,, 
\quad \forall i \in [n] \,, \nonumber  
\end{align}
and the Fenchel conjugate of the hinge loss function given in Appendix \ref{hinge_conjugate}, we know the objective function becomes
\begin{align}
    -c\sum_{i = 1}^n \ell^\ast\left(-\frac{\alpha_i}{c}\right) &= -c\sum_{i = 1}^n \left(-\frac{\alpha_i}{c}\right) \nonumber \\
    &= \sum_{i = 1}^n \alpha_i, \nonumber
\end{align}
for $-\frac{\alpha_i}{c} \in [-1, 0]$, i.e. $\forall i \in [n]$, $\alpha_i \in [0,c]$. Therefore the dual optimization problem with hinge loss is:
\begin{align} 
\maximize_{\alpha} 
\quad & 
\sum_{i = 1}^n \alpha_i \nonumber \\
\st \quad & 
\lambda_{max} \Big(  \sum_{i = 1}^n \sum_{j = 1}^n \alpha_i \alpha_j y_i y_j K(\mathbf{x}_i, \mathbf{x}_j) \Big) \leq 1 \,, \nonumber  \\ 
& 0 \leq \alpha_i \leq c \,, 
\quad \forall i \in [n] \,. \nonumber  
\end{align}

\section{Algorithms} \label{algorithms}
\subsection{Coordinate Descent Optimization} \label{cd_algorithm}

\begin{algorithm}[H]
\caption{Coordinate Descent Optimization of the Dual Problem \eqref{real_dual}}
\label{cd_alg}
\begin{algorithmic}
\State {\bfseries Input:} Data $\{(\mathbf{x}_i, y_i)\}_{i=1}^n$; Kernel function $\mathcal{K}$; Hyperparameter $c$.
\end{algorithmic}
\begin{algorithmic}[1]
    \For{$i$ = 1 to $n$} 
        \State compute $\lambda_{max}(K(\mathbf{x}_i, \mathbf{x}_i))$.
    \EndFor
    \State \textbf{end}
    \State Let $Indices$ be the sorted coordinates according to the ascending order of $\lambda_{max}(K(\mathbf{x}_i, \mathbf{x}_i))$
    \State Let $S = \varnothing$ be the set of the optimized coordinates.
    \State Let $R = \mathbf{0}_{p\times p}$.
    \For{$i$ in $Indices$}
        \State Compute $T = \sum_{j \in S} \hat{\alpha}_j y_iy_j\big(K(\mathbf{x}_i, \mathbf{x}_j) + K(\mathbf{x}_j, \mathbf{x}_i)\big)$
        \If{$\lambda_{max}\big(c^2K(\mathbf{x}_i, \mathbf{x}_i) + cT + R\big) \leq 1$}
            \State Let $\hat{\alpha}_i = c$. 
            \State Add $i$ to $S$.
        \Else
            \State Find the largest $\hat{\alpha}_i \in [0,c]$ such that $\lambda_{max}\big(\hat{\alpha}_i^2K(\mathbf{x}_i, \mathbf{x}_i) + \hat{\alpha}_iT + R \big) \leq 1$ using binary search. 
            \State Add $i$ to $S$.
        \EndIf
        \State \textbf{end}
        \State $R = R + \hat{\alpha}_i^2K(\mathbf{x}_i, \mathbf{x}_i) + \hat{\alpha}_iT$.
    \EndFor
    \State \textbf{end}
    \alglinenoPush{alg1}
\end{algorithmic}
\begin{algorithmic}
\State {\bfseries Output:} Dual solution $\{\hat{\alpha}_i\}_{i=1}^n$.
\end{algorithmic}
\end{algorithm}

\subsection{Training $\mathcal{D}$-layer DCCNNs} \label{training_algorithm}

\begin{algorithm}[H]
\caption{Training $\mathcal{D}$-layer DCCNNs}
\label{train_alg}
\begin{algorithmic}
\State {\bfseries Input:} Number of convolutional layers $\mathcal{D}$; Data $\{(\mathbf{x}_i, y_i)\}_{i=1}^n$; Kernel function $\mathcal{K}$. 
\end{algorithmic}
\begin{algorithmic}[1]
    \For{$\ell$ = 1 to $\mathcal{D}$}
        \State Construct dual problem in Eq. \eqref{real_dual} with $\{(\mathbf{x}_i, y_i)\}_{i=1}^n$ and $\mathcal{K}$
        \State Solve for the dual solution in the $\ell^{th}$ layer $\{\hat{\alpha}^\ell_i\}_{i=1}^n$.
        \State Recover the convolution output $\{\mathbf{x}_i^\ell\}_{i=1}^n$ and the linear weight $\hat{L}_{\ell}$ with Algorithm \ref{recover_alg}.
        \State Let $\mathbf{x}_i = \mathbf{x}_i^\ell$.
    \EndFor
    \State \textbf{end}
    \alglinenoPush{alg1}
\end{algorithmic}
\begin{algorithmic}
\State {\bfseries Output:} Dual solution of every layer $\{\{\hat{\alpha}_i^\ell\}_{i=1}^n\}_{\ell = 1}^{\mathcal{D}}$; Convolution output of every layer $\{\{\mathbf{x}_i^\ell\}_{i=1}^n\}_{\ell = 1}^{\mathcal{D}}$; Linear weight of the every layer $\{\hat{L}_{\mathcal{\ell}}\}_{\ell = 1}^\mathcal{D}$.
\end{algorithmic}
\end{algorithm}
\raggedbottom

\subsection{Making Predictions with $\mathcal{D}$-layer DCCNNs} \label{predict_algorithm}

\begin{algorithm}[H]
\caption{Making Predictions with $\mathcal{D}$-layer DCCNNs}
\label{pred_alg}
\begin{algorithmic}
\State {\bfseries Input:} Test data $\mathbf{x}_{new}$; Kernel function $\mathcal{K}$; Dual solution of every layer $\{\{\hat{\alpha}_i^\ell\}_{i=1}^n\}_{\ell = 1}^{\mathcal{D}}$; Convolution output of every layer $\{\{\mathbf{x}_i^\ell\}_{i=1}^n\}_{\ell = 1}^{\mathcal{D}}$; Linear weight of the every layer $\{\hat{L}_{\mathcal{\ell}}\}_{\ell = 1}^\mathcal{D}$.
\end{algorithmic}
\begin{algorithmic}[1]
    \For{$\ell$ = 1 to $\mathcal{D}-1$}
        \State Compute the convolution output for $\mathbf{x}_{new}$ by 
        \begin{align}
            \mathbf{x}'_{new} = \vecrize \big(\sum\nolimits_{j = 1}^n \hat{\alpha}^\ell_j y_j K(\mathbf{x}_{new}, \mathbf{x}_j) \hat{L}_\ell \big). \nonumber
        \end{align}  
        \State Let $\mathbf{x}_{new} = \mathbf{x}'_{new}$.
    \EndFor
    \State \textbf{end}
    \State In layer $\mathcal{D}$, compute the prediction by \vspace{-2pt} \Comment{Apply the final linear weight for classification} \begin{align}  
        f(\mathbf{x}_{new}) = \sign \big(\Tr (\sum\nolimits_{j = 1}^n \hat{\alpha}^\mathcal{D}_j y_j K(\mathbf{x}_{new}, \mathbf{x}_j) \hat{L}_\mathcal{D} \hat{L}_\mathcal{D}^\top)\big). \nonumber
    \end{align} 
    \alglinenoPush{alg1}
\end{algorithmic}
\begin{algorithmic}
\State {\bfseries Output:} Prediction $f(\mathbf{x}_{new})$.
\end{algorithmic}
\end{algorithm}

\section{Average Pooling Matrix} \label{avgpooling}
Convolutional output $\Phi\left( \mathbf{x}_i \right)^\top\hat{U}_1 \in \mathbb{R}^{p \times r}$ is formed by the output of $r$ filters, i.e. the convolution output has $r$ channels. Pooling is done for each channel, also known as a \emph{feature map}, of the convolutional output. Denote one column of $\Phi\left( \mathbf{x}_i \right)^\top\hat{U}_1$, i.e. one feature map as $\mathbf{x}'_{i,k} \in \mathbb{R}^{p}$ for $k \in [r]$. In an average pooling operation, a subset of entries, or a patch of the feature map is multiplied element-wise with an average pooling filter. The total number of pooling operations depends on the pooling filter width and stride. We use $q$ to represent the total number of pooling operation on one feature map, then we can generate an average pooling matrix $G \in \mathbb{R}^{q \times p}$. Assume the pooling filter has $b$ entries, then
\begin{align}
    G_{s,t} = 
    \begin{cases}
      \frac{1}{b} & \text{pooling filter multiplied with the $t^{th}$ feature map entry in the $s^{th}$ pooling operation}\\
      0 & \text{otherwise}
    \end{cases}. \nonumber
\end{align}
To perform average pooling on the convolution output, we can directly multiply the average pooling matrix on the left of the convolution output before vectorizing it and feeding it to the next layer, i.e.
\begin{align}
G\Phi\left( \mathbf{x}_i \right)^\top\hat{U}_1 = \sum_{j = 1}^n \hat{\alpha}_j y_j G K(\mathbf{x}_i,\mathbf{x}_j) \hat{V}_1. \nonumber
\end{align}
Then for the next layerwise training, $\{\vecrize\big(\sum_{j = 1}^n \hat{\alpha}_j y_j G K(\mathbf{x}_i,\mathbf{x}_j) \hat{V}_1\big)\}_{i=1}^n$ is regarded as the input. 

\section{Experiment} \label{experiment}
\subsection{Experiment Settings} \label{exp_setting}
\textbf{Data.} We use the MNIST \cite{726791} and the ImageNet datasets \cite{5206848} under the terms of the Creative Commons Attribution-Share Alike 3.0 license. In binary classification experiments, for both the MNIST and ImageNet datasets, we randomly pick two classes, images of digit 2 versus 3 from MNIST, images containing tench versus bathroom tissue from ImageNet, and use 2000 images for training, 100 images for validation, and 600 images for testing. We center crop the ImageNet images to $224 \times 224$, and transfer the pixel value to $[0,1]$. No other preprocessing is applied.

\textbf{Architecture.} (1) MNIST experiments: we implement 1-layer and 2-layer DCCNN for binary classification and 1-layer DCCNN for multiclass classification, and compare with its counterparts under the CCNN framework and the CNN trained with back-propagation SGD. We test all methods with filter width 5, stride 1, and padding size 2, and set the number of filters for CCNN and SGD-trained CNN to 16 for each layer. (2) ImageNet experiments: we follow the architectures similar to AlexNet \cite{krizhevsky2012imagenet} and VGG11 \cite{simonyan2014very} with some few modifications. For DCCNN and CCNN, we only count the number of convolutional layers, that is, 5 layers in AlexNet and 8 layers in VGG11, and concatenate one final linear layer for the purpose of classification. In order to reduce the number of patches for computational efficiency, we change the stride of the first layer from 4 to 8 for both AlexNet and VGG11. To ensure VGG11 keeps all information of the input image, we correspondingly change the filter width in the first layer from 3 to 9, and remove the pooling after layer 4 and 6. For the simplicity of matrix multiplication, we choose average pooling as the pooling operation. Other settings of filter width, stride and padding size are kept the same. All methods are trained with the same architecture.

\textbf{Kernel function and activation function.} For both DCCNN and CCNN, we choose the Gaussian RBF kernel as in \cite{zhang2017convexified} that has the form $\mathcal{K}(\mathbf{z}_i, \mathbf{z}_j) = \exp\{-\gamma\Vert\mathbf{z}_i - \mathbf{z}_j\Vert^2_2\}$, $\Vert\mathbf{z}_i\Vert_2 = \Vert\mathbf{z}_j\Vert_2 = 1$. For a fair comparison, we choose the sinusoid function $\rho(\cdot) = \sin(\cdot)$ \cite{Isa2010, sopena1999neural} as the activation function, as it is proved in \cite{zhang2017convexified} to be contained in the RKHS of Gaussian RBF kernel.

\textbf{CCNN kernel matrix factorization.} For the kernel factorization $K = QQ^\top$ in the algorithm of \cite{zhang2017convexified}, we applied the following 4 approaches: (1) $Q = UD^{\frac{1}{2}}$ for $K = UDU^\top$; (2) $Q = UD^{\frac{1}{2}}V$ for $K = UDU^\top$, in which $V$ is a random orthonormal matrix; (3) $Q = K^{\frac{1}{2}}$; (4) $Q$ from the Cholesky Decomposition of $K$. For the consideration of computational complexity, we only construct the block diagonal of the kernel matrix $K$, and approximate the sample feature by the factorization of its corresponding block diagonal.

\textbf{Loss function.} For CCNN and DCCNN, we use the hinge loss for optimization, and for the nuclear norm constraint in CCNN, we implement it as regularization in the optimization step. For back-propagation SGD, we use the binary cross-entropy loss for optimization.

\textbf{Hyperparameters.}
For CNNs trained with SGD, we use 50 epochs with batch size 50 and learning rate 0.1. For a fair comparison, we do not employ dropout, weight decay, or other tricks for training. We run 10 trials for each experiment and report the average and standard deviation of accuracy. For CCNN, since the block diagonal kernel matrix is much smaller than the full kernel matrix, we set the CCNN hyperparameters  $m=25$ and $r=16$ for each layer. For DCCNN, even though we theoretically take the eigenvectors with eigenvalue 1, there could be numerical issues during the optimization process, making the eigenvalues not strictly 1. Therefore we set a threshold for the eigenvalues we take. For 1-layer DCCNN, 2-layer DCCNN, AlexNet DCCNN, and VGG11 DCCNN, the threshold is set to 0.8, 0.9, 0.975, 0.85, respectively.

\textbf{Platform and implementation.}
We implement DCCNN with Matlab 2018a. The code is tested on a server with 4 CPU cores and 16GB memory size.

\subsection{Results} \label{sgd_sdv}
We demonstrate the results of prediction accuracy in Table \ref{exp_res}. The rows are organized by the datasets and architectures while each column shows the performance of one method across different tasks.

For SGD experiments, in particular, we run 10 trials for each setting with different random seeds and report the average and standard deviation of accuracy. The average accuracy is shown in Table \ref{exp_res}. For binary classification, we observe a $1.8\%$ standard deviation for one-layer CNN trained with SGD on the MNIST dataset, and $0.6\%$ for the two-layer CNN trained end-to-end with SGD. For the network structure similar to AlexNet on the ImageNet dataset, we observe a standard deviation of $2.5\%$, and for the architecture similar to VGG11, the standard deviation is $1.7\%$. For multiclass classification, the standard deviation of SGD is $0.5\%$.

\begin{table}[tp]
 \begin{subtable}[h]{\textwidth}
  \centering
  \begin{tabular}{llllllll}
    \toprule
    \multirow{2}{*}{Dataset}  & \multirow{2}{*}{Architecture} & \multicolumn{4}{c}{CCNN} &  \multirow{2}{*}{SGD} & \multirow{2}{*}{DCCNN} \\
    \cmidrule(lr){3-6}
    & & $UD^{\frac{1}{2}}$ & $UD^{\frac{1}{2}}V$ & $K^{\frac{1}{2}}$ & Cholesky &  &  \\
    \midrule
    \multirow{2}{*}{MNIST}  & 1-Layer & 90.3\% & 90.3\% & 88.8\% & 93.7\% & 90.2\% & 94.8\%\\
    \cmidrule(lr){3-8}
                            & 2-Layer & 93.7\% & 93.5\% & 93.5\% & 96.7\% & 95.2\% & 96.0\% \\
     \bottomrule
     \toprule
    \multirow{2}{*}{ImageNet} & AlexNet & 62.3\% & 62.3\% & 53.2\% & 62.0\% & 87.1\% & 83.3\%  \\
    \cmidrule(lr){3-8}
            & VGG11 & 55.3\% & 52.5\% & 59.8\% & 57.0\% & 89.2\% & 85.0\% \\
    \bottomrule
  \end{tabular}
  \vspace{5pt}
  \caption{Results on MNIST and ImageNet Binary Classification. }
  \vspace{0pt}
 \end{subtable}

 \vspace{10pt}
 
 \begin{subtable}[h]{\textwidth}
  \centering
  \begin{tabular}{llllllll}
    \toprule
    \multirow{2}{*}{Dataset} & \multirow{2}{*}{Architecture} & \multicolumn{4}{c}{CCNN} &  \multirow{2}{*}{SGD} & \multirow{2}{*}{DCCNN} \\
    \cmidrule(lr){3-6}
    & & $UD^{\frac{1}{2}}$ & $UD^{\frac{1}{2}}V$ & $K^{\frac{1}{2}}$ & Cholesky &  &  \\
    \midrule
    {MNIST}  &  1-Layer & 75.4\% & 75.4\% & 82.3\% & 85.9\% & 87.0\% & 85.3\%\\
     \bottomrule
  \end{tabular}
  \vspace{5pt}
  \caption{Results on MNIST Multiclass Classification.}
  \vspace{0pt}
 \end{subtable}
 \renewcommand\thetable{1}
 \caption{Experiment results for binary and multiclass classification. Methods compared include CCNN, CNN trained with SGD, and our proposed DCCNN. For CCNN we include 4 different ways for kernel matrix factorization: (1) $Q = UD^{\frac{1}{2}}$ for $K = UDU^\top$; (2) $Q = UD^{\frac{1}{2}}V$ for $K = UDU^\top$, and random orthonormal matrix $V$; (3) $Q = K^{\frac{1}{2}}$; (4) $Q$ from the Cholesky Decomposition of $K$.}
\end{table}

\subsection{Discussion on Performance Level} \label{exp_discussion}
In the binary classification task of the MNIST data, we can see that DCCNN outperforms CNN optimized by SGD and all different kernel matrix factorizations for CCNN on one-layer and two-layer networks with only one exception for the two-layer CCNN with Cholesky Decomposition. This verifies the effectiveness of DCCNN. Furthermore, we observe that different factorization approaches introduce turbulence to the CCNN method, while our algorithm does not suffer from such ambiguity.  

On the more complicated ImageNet dataset, DCCNN also performs comparably well with the end-to-end SGD optimized CNNs under both AlexNet and VGG11 architectures, and significantly outperforms the CCNN method. With the heuristic of kernel matrix factorization and cutting the weight matrix, CCNN does not necessarily generalize to complex datasets like ImageNet. As the task gets difficult, the performance level of CCNN with different factorization methods gets more arbitrary. This further highlights the merits of our proposed DCCNN.

Moreover, we observe that there is a decrease of accuracy for CCNN with the increase on the number of layers. That may be caused by the accumulated impact of the weight-cutting heuristic, i.e., the number of filters heuristically enforced by a hyperparameter in each layer may not accurately reflect the information learned. Setting it too large would include unnecessary noise while setting it too small may discard relevant information. This further highlights the effectiveness of DCCNN on encouraging a small number of filters without introducing heuristics or ambiguity.

\subsection{Discussion on Computational Complexity}
We first analyze theoretically that
\begin{itemize}
    \item spatially, CCNN has to construct the whole kernel matrix of size $np \times np$ with all of the $n$ samples and $p$ patches, leading to spatial complexity of $\mathcal{O}(n^2p^2)$, while DCCNN only need the kernel generating matrix of one sample at a time, which is of size $p \times p$, leading to the cost of $\mathcal{O}(p^2)$ space;
    \item temporally, the factorization of the kernel matrix in CCNN takes $\mathcal{O}(n^3p^3)$, while the runtime of DCCNN is dominated by the eigendecomposition in Algorithm \ref{cd_alg}, leading to time complexity of $\mathcal{O}(n^2p^3)$.
\end{itemize}
Experimentally, we now demonstrate how the running time of each method scales with the number of samples $n$ and the number of patches $p$ for the following reasons: (1) It takes an extremely long time to run the full kernel matrix (size $np \times np$) factorization for CCNN without approximation tricks, thus making such comparisons on actual running time trivial. So for CCNN baseline we only construct the kernel matrix of a batch of samples as applied in the CCNN paper \cite{zhang2017convexified} to accelerate CCNN. (2) Methods are implemented differently, e.g. MATLAB v.s. PyTorch, hand-crafted matrix multiplications v.s. built-in conv \& pooling operations, etc.   

For 2000 samples 784 patches and 200 samples and 100 patches on 1-Layer MNIST experiment, the running time ratio is shown in Table \ref{runtime_ratio}:

\begin{table}[h]
  \centering
  \begin{tabular}{cccc}
    \toprule
    1-Layer MNIST running time  &  DCCNN  & CCNN & SGD  \\
    \midrule
    $(n,p)=(2000,784)$ / $(n,p)=(200,100)$ & 580x & 1300x & 55x \\
    \bottomrule
  \end{tabular}
  \vspace{5pt}
  \renewcommand\thetable{2}
  \caption{Running time ratio between data and models of different scales.}
  \label{runtime_ratio}
  \vspace{0pt}
\end{table}

As we see from the result, DCCNN is more computationally efficient than CCNN. Though SGD runs even faster, we emphasize that SGD is a stochastic approximation algorithm in nature. 

Furthermore, the improvement on space complexity originates naturally from the dual formulation of the problem without the kernel matrix construction, which once again shows the significance of DCCNN. On the other hand, time complexity relies more on the solving algorithm, which is only an initial design in DCCNN as we focus more on the formulation of the dual problem and weight recovery from the optimized dual solution, and may be improved with better engineering.

\section{Further Discussion on Related Works} \label{morerelatedwork}
On learning convolutional neural networks with kernel methods, following the work of Neural Tangent Kernel (NTK) \cite{jacot2018neural}, \cite{arora2019exact} proposes the Convolutional Neural Tangent Kernel (CNTK) that studies the exact computation of CNNs with infinitely many convolutional filters, i.e. infinitely wide CNNs. As our approach uses kernel information and implies an infinite size of the convolutional weight matrix, it is fundamentally different from CNTK in both formulation and derivation: (1) In CCNN formulation described in Eq. \eqref{DCCNNeq}, the convolutional weight is $W \in \mathbb{R}^{d_2 \times r}$. $r$ is the number of filters, which is not only finite but encouraged to be low by the nuclear norm constraint, while CNTK studies the case of infinite many filters. In CCNN formulation, $d_2$, the size of each filter, is the part that may go to infinity, while CNTK assumes each filter has a fixed size. Therefore, the CCNN and CNTK frameworks are fundamentally different.
(2) CNTK is to take the infinite limit of width so that the inputs of activation function tend to i.i.d. centered Gaussian processes with fixed covariance \cite{jacot2018neural, arora2019exact}, under which condition the output of the neural network would converge to the output of CNTK, asymptotically (Theorem 1 in \cite{jacot2018neural}) or non-asymptotically (Theorem 3.2 in \cite{arora2019exact}). On the other hand, the infinity size of the weight matrix in CCNN or DCCNN comes from the infinite dimension of the kernel basis function. 

On deriving the convex equivalence of CNNs, \cite{ergen2020implicit} derives a convex analytic framework utilizing semi-infinite duality, and regards the CNN architecture as an implicit convex regularizer following previous work \cite{pilanci2020neural}. As our approach utilizes the power of duality, it is not for the purpose of convexifying CNNs, but to eliminate the need and the ambiguity of factorizing the very large kernel matrix in CCNN \cite{zhang2017convexified}. As a result, the optimization problem is completely different. The most significant difference, the nuclear norm regularization in DCCNN poses challenges as it is non-differentiable, leading to no closed-form solution for recovering the primal solution from the dual. For this particular challenge, we proposed a highly-novel weight recovery algorithm to recover the linear weight and the output of the convolutional layer directly, instead of the convolutional weight.

\end{document}